\title{Improved Algorithms for Efficient Active Learning Halfspaces with Massart and Tsybakov Noise}
\author[]{Chicheng Zhang\thanks{Email: chichengz@cs.arizona.edu} }
\author[]{Yinan Li\thanks{Email: yinanli@email.arizona.edu}}
\affil[]{University of Arizona}
\newcommand{\initialize}{\ensuremath{\textsc{Initialize}}\xspace}
\newcommand{\refine}{\ensuremath{\textsc{Refine}}\xspace}
\newcommand{\optimize}{\ensuremath{\textsc{Optimize}}\xspace}
\DeclareMathOperator{\sign}{sign}
\DeclareMathOperator{\err}{err}
\DeclareMathOperator{\polylog}{polylog}
\DeclareMathOperator{\poly}{poly}
\newcommand{\EX}{\mathrm{EX}}
\newcommand{\opt}{\mathrm{opt}}
\renewcommand{\ind}{\mathds{1}}
\newcommand{\pot}{\psi_{D,b}}
\newcommand{\agg}{\mathrm{agg}}
\newcommand{\avg}{\mathsf{average}}
\newcommand{\rnd}{\mathsf{random}}
\newcommand{\GTNC}{\mathsf{GTNC}}
\newcommand{\TNC}{\mathsf{TNC}}
\newcommand{\MNC}{\mathsf{MNC}}
\newcommand{\unif}{\mathrm{unif}}
\newcommand{\disag}{R}
\newcommand{\vecz}{\overrightarrow{0}}
\newcommand{\hide}[1]{}
\newcommand{\HT}{\mathrm{HT}}
\def\shownotes{1}  \ifnum\shownotes=1
\newcommand{\authnote}[2]{$\ll$\textsf{\footnotesize #1 notes: #2}$\gg$}
 \newcommand{\authnote}[2]{}
\begin{document}

\maketitle






\begin{abstract}
We give a computationally-efficient PAC active learning algorithm for $d$-dimensional homogeneous halfspaces that can tolerate Massart noise~\citep{massart2006risk} and Tsybakov noise~\citep{tsybakov2004optimal}. Specialized to the $\eta$-Massart noise setting, our algorithm achieves an information-theoretically near-optimal label complexity of $\tilde{O}\rbr{\frac{d}{(1-2\eta)^2} \polylog(\frac1\epsilon)}$ under a wide range of unlabeled data distributions (specifically, the family of ``structured distributions'' defined in~\citet{diakonikolas2020polynomial}). Under the more challenging Tsybakov noise condition, we identify two subfamilies of noise conditions, under which our efficient algorithm  
provides label complexity guarantees strictly lower than passive learning algorithms.
\end{abstract}



  
  


\section{Introduction}
Motivated by the abundance of unlabeled data and the expensiveness of obtaining labels, the paradigm of active learning has been proposed and extensively studied in the literature~\cite[see e.g.][for  comprehensive surveys]{settles2009active,hanneke2014theory}.
In active learning, a learner starts with a set of unlabeled examples, 
and can adaptively select subsets of them to query for their labels. Thanks to its adaptivity, an active learner can focus on obtaining informative labels, and can thus substantially reduce labeling effort compared to conventional supervised learning.

Halfspaces, also known as linear separators, are arguably one of the most fundamental concept classes studied in machine learning and data analysis.  
Significant research efforts on halfspace learning from computational and statistical perspectives have resulted in rich theory~\cite[e.g.][]{vapnik1998statistical,blum1996polynomial,zhang2004statistical,bartlett2006convexity, kalai2008agnostically} and many practical algorithms~\citep[e.g.][]{cortes1995support, cristianini2000introduction}.

Label noise is ubiquitous in machine learning applications due to various factors, such as human error, sensor failure, etc~\citep{balcan2020noise}, and it is therefore important to design learning algorithms that are robust to label noise. 
If computational efficiency is not of  concern, classical methods such as empirical risk minimization are known to achieve statistical consistency~\citep{vapnik1998statistical}. 
However, in many practical applications, it is often necessary for learners to process its training examples in a computationally efficient manner. 
Therefore, it is of importance to develop computationally efficient, noise-tolerant learning algorithms with statistical consistency guarantees.

However, it is now well-understood that without additional assumptions on the label noise, agnostically learning halfspaces is computationally hard~\citep{feldman2006new,guruswami2009hardness,daniely2016complexity}, even under well-behaved unlabeled data distributions such as standard Gaussian~\citep{klivans2014embedding,diakonikolas2020near}. This motivates the study of learning halfspaces under more benign label noise conditions. Massart noise~\citep{massart2006risk} and Tsybakov noise~\citep{tsybakov2004optimal} are two noise models widely studied in the literature; specialized to the halfspace learning setting, they are formally defined as:

\begin{definition}[Massart noise condition]
Given $\eta \in [0,\frac12)$,
a distribution $D$ over $\RR^d \times \cbr{-1,+1}$ is said to satisfy the {\em $\eta$-Massart noise condition} with respect to halfspace $w^\star \in \RR^d$, if for all examples $x$, $\eta(x) \leq \eta$, where $\eta(x) = \PP_D(y \neq \sign(\inner{w^\star}{x}) \mid x)$.
\label{def:mnc}
\end{definition}

\begin{definition}[Tsybakov noise condition]
Given $A > 0$ and $\alpha \in (0,1]$, a distribution $D$ over $\RR^d \times \cbr{-1,+1}$ is said to satisfy the {\em $(A, \alpha)$-Tsybakov noise condition} with respect to halfspace $w^\star \in \RR^d$, if for all $t \in [0,\frac12)$,  $\PP_D\rbr{ \frac12 - \eta(x) \leq t} \leq A t^{\frac{\alpha}{1-\alpha}}$,
where $\eta(x) = \PP_D(y \neq \sign(\inner{w^\star}{x}) \mid x)$.
\label{def:tnc}
\end{definition}

Although nearly-matching upper and lower bounds on sample and label complexities have been established in both supervised (passive) and active learning settings under these two noise conditions~\citep[e.g.][]{massart2006risk, tsybakov2004optimal, castro2008minimax, hanneke2011rates, balcan2013active, wang2016noise}, most of these results are only {\em statistical}: the algorithms that achieve the sample or label complexity upper bounds are computationally inefficient. Only recently have computationally efficient algorithms been proposed in the literature~\citep[e.g.][]{awasthi2015efficient, diakonikolas2019distribution, diakonikolas2020polynomial} under these noise conditions; see Section~\ref{sec:relwork} for detailed discussions. Still, these works leave out two important open questions:
\begin{enumerate}
    \item {\sloppy  Are there efficient active halfspace learning algorithms that tolerate Massart noise with near-optimal label complexity, under a broad range of unlabeled data distributions? Specifically, the algorithm of~\cite{yan2017revisiting}
    achieves an information-theoretically near-optimal label complexity of $O\rbr{\frac{d}{(1-2\eta)^2} \polylog(\frac1\epsilon)}$, but relies on the strong assumption that the unlabeled data distribution is uniform on the unit sphere; under broader distributional assumptions such as log-concave distributions,
    the state-of-the-art algorithm of~\cite{zhang2020efficient} only achieves a suboptimal label complexity of $O\rbr{\frac{d}{(1-2\eta)^4} \polylog(\frac1\epsilon)}$. Can we design efficient algorithms with $\tilde{O}\rbr{\frac{d}{(1-2\eta)^2} \polylog(\frac1\epsilon)}$ label complexity under broader unlabeled data distributions, for example, the family of isotropic log-concave distributions~\citep{lovasz2007geometry}? }

    
    \item Are there efficient active halfspace learning algorithms that tolerate Tsybakov noise with label complexities better than passive learning? The state-of-the-art work of~\cite{diakonikolas2020polynomial} propose efficient algorithms for passive learning halfspaces with Tsybakov noise, with sample complexities  $O\rbr{(\frac{d}{\epsilon})^{O(\frac{1}{\alpha})}}$ and $O\rbr{ \poly(d) (\frac 1 \epsilon)^{O(\frac1{\alpha^2})} }$; moreover, computationally inefficient algorithms such as empirical risk minimization achieve a sharper sample complexity of $O\rbr{  d(\frac{1}{\epsilon})^{2-\alpha}}$~\cite[e.g.][Chapter 3]{hanneke2014theory}.
    Can we design efficient active learning algorithms with label complexities of strictly lower order than these? A positive answer to this question can serve as a stepping stone towards developing efficient active algorithms with label complexity matching those of computationally inefficient active learning algorithms~\citep[e.g.][]{balcan2013active}, which is $O\rbr{ d(\frac 1 \epsilon)^{2-2\alpha}}$.  
\end{enumerate}

\paragraph{Our results.} Our work answers the above two questions in the affirmative. Specifically, under a set of structural assumptions on the unlabeled data distribution~\citep{diakonikolas2020polynomial} (see also Definition~\ref{def:well-behaved} in Section~\ref{sec:setting}), we give an efficient PAC active halfspace learning algorithm, such that  with appropriate settings of its parameters:
\begin{enumerate}
    \item under the $\eta$-Massart noise condition, it has  an information-theoretically near-optimal label complexity of $O\rbr{\frac{d}{(1-2\eta)^2} \polylog(\frac1\epsilon)}$. This substantially weakens the distributional requirements to achieve such near-optimal label complexity results; before our work, such result is only known when the unlabeled data distribution is  uniform over the unit sphere~\citep{yan2017revisiting}, or when the noise parameter $\eta$ is $3 \times 10^{-6}$, a tiny constant~\citep{awasthi2015efficient}. Furthermore, when the unlabeled data distribution is isotropic log-concave, our result improves over the recent work of of~\cite{zhang2020efficient}, where an efficient algorithm with suboptimal label complexity $O\rbr{\frac{d}{(1-2\eta)^4} \polylog(\frac1\epsilon)}$ is proposed. 
    
    \item under the $(A, \alpha)$-Tsybakov noise condition with $\alpha \in (\frac12, 1]$, it has a label complexity of $\tilde{O}(d (\frac{1}{\epsilon})^{\frac{2-2\alpha}{2\alpha-1}})$. 
    Specifically, when $\alpha > \frac{7-\sqrt{17}}{4} \approx 0.719$, $\frac{2-2\alpha}{2\alpha-1} < 2-\alpha$, in which case our algorithm has a better label complexity than passive learning.
    Furthermore, in the special case of $(B,\alpha)$-geometric Tsybakov noise condition (see Definition~\ref{def:gtnc} in Section~\ref{sec:setting}), our algorithm achieves a lower label complexity of $O\rbr{ d (\frac{1}{\epsilon})^{\frac{2}{\alpha} - 2} }$ for all $\alpha \in (0,1]$; specifically, when $\alpha > 2 - \sqrt{2} \approx 0.585$, $\frac{2-2\alpha}{\alpha} < 2 - \alpha$, in which case our algorithm has a better label complexity than passive learning.
\end{enumerate}

\paragraph{Techniques.} Our algorithm and analysis bear similarities to the recent work of~\cite{zhang2020efficient}, who observe that online mirror descent-style updates, when composed with a margin-based active sample selection rule~\citep{balcan2007margin}, implicitly minimizes a nonstandard proximity measure against the Bayes-optimal halfspace $w^\star$. 
Compared to~\citet{zhang2020efficient}, our results are novel in two aspects. 
First, the algorithm of~\cite{zhang2020efficient} is specialized to Massart noise, as its update rule crucially relies on the knowledge of the Massart noise level $\eta$. In contrast, we propose a new and simpler update rule that can work under both Massart and Tsybakov noise conditions.
Second, under the $\eta$-Massart noise setting, ~\cite{zhang2020efficient} use an averaging-based initialization procedure, which leads to an algorithm and analysis requiring a suboptimal label complexity of $\tilde{O}\rbr{ \frac{d}{(1-2\eta)^4} \polylog(\frac1\epsilon)}$. In this paper, we design a new initialization procedure with improved label efficiency, leading to an algorithm with an information-theoretically near-optimal label complexity of $\tilde{O}\rbr{ \frac{d}{(1-2\eta)^2} \polylog(\frac1\epsilon) }$ under Massart noise.

\section{Related work}
\label{sec:relwork}
    
    
    

\paragraph{Learning halfspaces under Massart and Tsybakov noise: statistical rates.} For passive learning, it is well-known that empirical risk minimization achieves minimax-optimal sample complexities of $\tilde{O}\rbr{\frac{d}{(1-2\eta) \epsilon}}$ and $\tilde{O}\rbr{d (\frac{1}{\epsilon})^{2-\alpha}}$ under $\eta$-Massart noise and $(A,\alpha)$-Tsybakov noise conditions respectively~\citep[e.g.][Chapter 3]{hanneke2014theory}.
For active learning, many works have provided  distribution-specific label complexity upper bounds, including the general analyses of~\cite{hanneke2011rates,beygelzimer2010agnostic,zhang2014beyond} and more specialized analyses of~\cite{balcan2013active,wang2016noise}.
In the setting of $\eta$-Massart noise,  under the assumption that the unlabeled data distribution is isotropic log-concave, the state-of-the-art algorithms of~\cite{balcan2013active, zhang2014beyond} have a label complexity of $\tilde{O}\rbr{ \frac{d}{(1-2\eta)^2} \polylog( \frac1\epsilon) }$; in the setting of $(A,\alpha)$-Tsybakov noise under log-concave unlabeled distributions, the state-of-the-art algorithms of~\cite{balcan2013active,zhang2014beyond,wang2016noise} achieve a label complexity of $\tilde{O}\rbr{ d (\frac{1}{\epsilon})^{2-2\alpha} }$.
Although these algorithms provide sharp label complexity guarantees, 
they all suffer from computational inefficiency: they need to perform empirical 0-1 loss minimization, which is known to be NP-hard in general~\citep{arora1997hardness}. 

\paragraph{Efficient passive learning halfspaces with Massart noise.}
The study of computationally efficient halfspace learning  under Massart noise is initiated by the work of~\cite{awasthi2015efficient}, who provide a PAC learning algorithm that works under the assumptions that the unlabeled distribution is isotropic log-concave, and the Massart noise parameter $\eta$ is smaller than a tiny constant ($3 \times 10^{-6}$). Prior to this work, positive results mainly focus on the much weaker random classification noise~\citep[e.g.][]{blum1996polynomial,balcan2013statistical}. Under similar distributional assumptions,~\citet{awasthi2016learning} propose an algorithm with sample complexity $O\rbr{ d^{O\rbr{\frac{1}{(1-2\eta)^4}}} \frac{1}{\epsilon} }$ for any $\eta \in [0,\frac12)$. Recent works of~\cite{zhang2020efficient} and~\cite{diakonikolas2020learning} provide passive learning algorithms with fully-polynomial sample complexities in this setting, achieving sample complexities of $O\rbr{ \frac{d}{\epsilon(1-2\eta)^5} }$ and $O\rbr{ \frac{d^9}{\epsilon^4 (1-2\eta)^{10}} }$ respectively.

In the distribution-free PAC learning setting, that is, when no assumptions are imposed on the unlabeled data distribution, efficient halfspace learning is much more challenging.
Recent breakthrough of~\cite{diakonikolas2019distribution} provides an efficient improper learner that can guarantee to output a halfspace with error $\eta+\epsilon$ with sample complexity $\poly(d,\frac1\epsilon)$. ~\cite{chen2020classification} improves over this result by proposing a proper learner, along with a generic ``distillation'' procedure that converts any improper learner to a proper one. In the same paper, they also show that for any statistical query algorithm, obtaining a classifier that achieves an error rate of $\opt + o(1)$ requires a superpolynomial number of statistical queries, where $\opt$ denotes the error rate of the Bayes-optimal halfspace $w^\star$. This lower bound is recently strengthened by~\cite{diakonikolas2020hardness}, showing that even achieving a weaker $\poly(\opt)$ error rate requires a superpolynomial number of statistical queries.

\paragraph{Efficient passive learning halfspaces with Tsybakov noise.} Recently,~\cite{diakonikolas2020learningb} obtains an algorithm with a quasi-polynomial time and sample complexity of $O\rbr{d^{O(\frac{1}{\alpha^2} \ln\frac{1}{\epsilon})}}$ for PAC learning halfspaces under $(A,\alpha)$-Tsybakov noise condition, under distributions with certain structural properties.
This result is further improved by~\cite{diakonikolas2020polynomial}, who obtain two algorithms with time and sample complexities of $O\rbr{\poly(d) \cdot (\frac1\epsilon)^{O(\frac{1}{\alpha^2})}}$ and $O\rbr{(\frac d \epsilon)^{O(\frac 1 \alpha)}}$ respectively. 

\paragraph{Efficient active learning halfspaces with Massart and Tsybakov noise.} By combining the agnostic halfspace learning algorithm of~\cite{kalai2008agnostically}, and margin-based sampling~\citep{balcan2007margin, balcan2013active},  \cite{awasthi2016learning} obtains an active halfspace learning algorithm that tolerates $\eta$-Massart noise with a sample complexity of $O\rbr{ d^{O(\frac{1}{(1-2\eta)^4})} \ln\frac1\epsilon }$ under the isotropic log-concavity assumption on the unlabeled data distribution. This result is recently substantially improved by~\cite{zhang2020efficient}, who obtain a label complexity of $O\rbr{ \frac{d}{(1-2\eta)^4} \polylog(\frac1\epsilon) }$ in the same setting; however this label complexity bound still does not match the information-theoretic lower bound of $\Omega\rbr{ \frac{d}{(1-2\eta)^2} \ln\frac1\epsilon }$. The only label-optimal result on PAC active halfspace learning under Massart noise we are aware of  is~\cite{yan2017revisiting}, however it relies on the strong assumption that the unlabeled data distribution is uniform over the unit sphere. Under Tsybakov noise condition, to the best of our knowledge, 
all prior active learning works require well-specified model assumptions on the conditional distribution of label given feature \citep{cesa2009robust,dekel2012selective,agarwal2013selective,krishnamurthy2017active}, e.g. assuming $\EE[ y \mid x ] = \sigma(w^\star \cdot x)$ for some known function $\sigma: \RR \to \RR$.
This is relatively strong, as it requires all examples $x$ that have the same projection on $w^\star$ to have the same value of $\eta(x)$. In contrast, our work does not require such assumptions. 


\section{Preliminaries}
\label{sec:setting}

We consider the standard PAC active learning for binary classfication setup~\citep{valiant1985learning,balcan2009agnostic}. Specifically, the instance space $\Xcal$ is $\RR^d$, the label space $\Ycal$ is $\cbr{-1,+1}$, and there is a data distribution $D$ supported on $\Xcal \times \Ycal$. The hypothesis class of interest is the set of linear classifiers, also known as halfspaces, defined as $\Hcal = \cbr{h_w: w \in \RR^d}$, where for every $w \in \RR^d$, $h_w$ denotes the corresponding linear classifier that maps $x$ to $\sign(\inner{w}{x})$. We use {\em error rate} to measure the performance of a classifier $h:  \Xcal \to \Ycal$, defined as $\err(h, D) = \PP_{(x,y) \sim D}(h(x) \neq y)$.
Given classifier $h$, and a set of labeled examples $S$, denote $\err(h, S) = \frac1{|S|} \sum_{(x,y) \in S} \ind\rbr{h(x) \neq y}$ as the empirical error rate of $h$ on $S$.
Throughout this paper, we assume that the Bayes-optimal classifier is a halfspace $h_{w^\star}$, where $w^\star \in \RR^d$ is a unit vector. It can be verified that $h_{w^\star}$ is indeed Bayes-optimal under Massart or Tsybakov noise conditions (recall Definitions \ref{def:mnc} and \ref{def:tnc}).

An active learning algorithm has access to two labeling oracles: first, an unlabeled example oracle $\EX$, which, upon query, returns an unlabeled example $x$ drawn from $D_X$, the marginal distribution of $D$ over $\Xcal$; second, a labeling oracle $\Ocal$, which, upon query with input example $x$, returns a label $y$ drawn from $D_{Y \mid X=x}$, the conditional distribution of $Y$ given $X=x$. A learner is said to achieve $(\epsilon,\delta)$-PAC active learning guarantee, if, by interactively querying the unlabeled example oracle $\EX$ and the labeling oracle $\Ocal$, it outputs a classifier $\tilde{h}$, such that with probability $1-\delta$, $\err(\tilde{h}, D) - \err(h_{w^\star}, D) \leq \epsilon$. Its label complexity is the total number of queries to $\Ocal$ throughout the learning process. 

For a natural number $N$, denote by $[N] := \cbr{1,2,\ldots,N}$.
For a vector $w$ in $\RR^d$, denote by its $\ell_2$-normalization $\hat{w} := \frac{w}{\|w\|_2}$ if $w \neq \vecz$, and $\hat{w} = (1,0,\ldots,0)$ if $w = \vecz$. {\em Throughout this paper, we reserve the ``hat'' symbol and notations such as $\hat{w}$, $\hat{v}$ for $\ell_2$-normalization and $\ell_2$-normalized vectors.} Unless explicitly stated, we use $\| \cdot \|$ to denote the vector $\ell_2$ norm. 
For two vectors $u$ and $v$, denote by $\theta(u,v) = \arccos \rbr{\frac{\inner{u}{v}}{\|u\|\|v\|}} \in [0,\pi]$ the angle between them; also, denote by $\tilde{\theta}(u,v) = \min(\theta(u,v), \pi - \theta(u,v)) \in [0, \frac \pi 2]$.
In our algorithm and analysis below, we will be frequently using the following definition of the distribution $D$ conditioned on a band: given a unit vector $\hat{w}$ and a threshold $b > 0$, denote by $B_{\hat{w},b} = \cbr{x \in \RR^d: \abr{\inner{\hat{w}}{x}} \leq b}$; in addition, denote by $D_{\hat{w}, b}$ the conditional distribution of $D$ on the set $\cbr{(x,y) \in \RR^d \times \cbr{-1,+1}: x \in B_{\hat{w},b}}$; similarly, denote by $D_{X \mid \hat{w}, b}$ the conditional distribution of $D_X$ on the set $B_{\hat{w},b}$.

Computational hardness results~\citep{chen2020classification,diakonikolas2020hardness} strongly suggest that efficient learning halfspaces with noise may be computationally intractable if no assumptions on the unlabeled data distribution are made, even under benign noise conditions such as Massart noise. Therefore, throughout this paper, following~\cite{diakonikolas2020polynomial}, we assume the unlabeled distribution to lie in a family of structured, or well-behaved distributions, defined as follows:

\begin{definition}[Well-behaved distributions~\citep{diakonikolas2020polynomial}]
\label{def:well-behaved}
Fix  $L, R, U, \beta > 0$. A distribution $D_X$ over $\RR^d$ is said to be $(2,L,R, U, \beta)$ well-behaved, if for any $2$-dimensional linear subspace $V$ of $\RR^d$, we have:
given an $x$ randomly drawn from $D_X$,
$x_V$, the projected coordinates of $x$ onto $V$\footnote{Formally, pick $(v_1,v_2)$ as an orthonormal basis of $V$; define $x_V := (\inner{v_1}{x}, \inner{v_2}{x}) \in \RR^2$.}, has a probability density function $p_V$ on $\RR^2$, such that:
\begin{enumerate}
    \item $p_V(z) \geq L$, for all $z$ such that $\| z \|_2 \leq R$;
    \item $p_V(z) \leq U$, for all $z \in \RR^2$;  
\end{enumerate}
in addition, for any unit vector $w$ in $\RR^d$ and any $t > 0$, $\PP_{D_X}(\abr{\inner{w}{x}} \geq t) \leq \exp(1-\frac{t}{\beta})$.
\end{definition}

The well-behavedness assumption captures the well-studied family of isotropic log-concave distributions~\citep{lovasz2007geometry,balcan2013active}, and can potentially be more general.\footnote{However, the well-behavedness assumption here does not capture the family of $s$-concave distributions ($s \geq -\frac{1}{2d+3}$) studied in recent works~\citep{balcan2017sample}, as it requires any 1-d projection of distribution to have sub-exponential tail. Whether our analysis can be extended to $s$-concave distributions is an interesting open question.}


In addition to Massart and Tsybakov noise conditions, we also study a subfamily of Tsybakov noise, namely geometric Tsybakov noise. Such noise assumption was first considered in nonparametric active learning literature~\citep{castro2008minimax}. It generalizes the ``strong Massart noise'' condition considered in~\cite{diakonikolas2020learning,zhang2017hitting}, in that it allows $\frac12 - \eta(x)$ to grow  polynomially with $\abr{\inner{w^\star}{x}}$, the distance between $x$ and the Bayes-optimal decision boundary $\cbr{x \in \RR^d: \inner{w^\star}{x} = 0}$.

\begin{definition}[Geometric Tsybakov noise condition]
\label{def:gtnc}
Given $B > 0$ and $\alpha \in (0,1]$, a distribution $D$ over $\RR^d \times \cbr{-1,+1}$ is said to satisfy the {\em $(B,\alpha)$-geometric Tsybakov noise condition} with respect to halfspace $w^\star$, if for all $x$ in $\RR^d$, $\frac12 - \eta(x) \geq \min\rbr{\frac12, B \abr{\inner{w^\star}{x}}^{\frac{1-\alpha}{\alpha}} }$, where $\eta(x) = \PP(y \neq \sign(\inner{w^\star}{x}) \mid x)$.
\end{definition}

It can be shown that, if the unlabeled distribution is well-behaved (Definition~\ref{def:well-behaved}),  modulo a logarithmic factor, $(B,\alpha)$-geometric Tsybakov noise condition implies $(A,\alpha)$-Tsybakov noise condition with $A = \tilde{O}\rbr{(\frac 1 B)^{\frac \alpha {1-\alpha}}}$; see Lemma~\ref{lem:geomtnc-tnc} in Appendix~\ref{sec:auxiliary} for a formal statement. 

\section{Algorithm}

We now describe our noise-tolerant active halfspace learning algorithm in detail. The main algorithm, Algorithm 1, has a simple structure: it first calls subprocedure $\initialize$ (line~\ref{step:init}) to generate a vector $v_1$ with $\ell_2$ distance at most $\frac{1}{4}$ to $w^\star$ with high probability. After obtaining $v_1$, it repeatedly calls $\optimize$ to refine its iterates $v_j$; as we will see, $\optimize$ guarantees that, with high probability, after iteration $j$, iterate $v_{j+1}$ is such that $\| v_{j+1} - w^\star \| \leq 4^{-(j+1)}$, i.e. after each iteration, an upper bound on $\|v_j - w^\star \|$ shrinks by a constant factor. 
The algorithm returns after its final iteration $k_\epsilon$ is finished; after this iteration, we have $\theta(v_{k_\epsilon+1}, w^\star) \leq r_{\epsilon} = \tilde{O}\rbr{ \epsilon }$, which implies that 
$\err(h_{\tilde{v}}, D) - \err(h_{w^\star}, D) \leq \PP_{D}(h_{\tilde{v}}(x) \neq h_{w^\star}(x)) \leq  \epsilon$ (see Lemma~\ref{lem:prob-angle} in Appendix \ref{sec:auxiliary}).

Depending on different noise conditions on $D$, we use different schedules of sampling region bandwidths $\cbr{b_j}$ and numbers of label queries $\cbr{T_j}$: 


{
\sloppy 
\begin{enumerate}
    \item Under the $\eta$-Massart noise condition, $b_j = b_{\MNC}(\eta, 4^{-(j+1)})$ and $T_j = T_{\MNC}(\eta, 4^{-(j+1)})$ for all $j \in \NN$, where $b_{\MNC}(\eta, r) = \tilde{\Theta}\rbr{ (1-2\eta) r}$ and $T_{\MNC}(\eta, r) = \tilde{O}\rbr{ \frac{d}{(1-2\eta)^2} (\ln\frac 1 {\delta r})^3 }$.
    
    \item Under the $(A,\alpha)$-Tsybakov noise condition for $\alpha \in (\frac12, 1]$, $b_j = b_{\TNC}(A, \alpha, 4^{-(j+1)})$ and $T_j = T_{\TNC}(A, \alpha, 4^{-(j+1)})$ for all $j \in \NN$, where  $b_{\TNC}(A, \alpha, r) = \tilde{\Theta}\rbr{ \min(r, (\frac{1}{A})^{\frac{1-\alpha}{2\alpha-1}} r^{\frac \alpha {2\alpha-1}}) }$ and $T_{\TNC}(A, \alpha, r) = \tilde{O}\rbr{ d  (\ln\frac 1 {\delta r})^3 \cdot (1 + (\frac{A}{r})^{\frac{2-2\alpha}{2\alpha-1}})}$.
    
    \item Under the $(B, \alpha)$-geometric Tsybakov noise condition, $b_j = b_{\GTNC}(B,\alpha, 4^{-(j+1)})$ and $T_j = T_{\GTNC}(B, \alpha, 4^{-(j+1)})$ for all $j \in \NN$, where $b_{\GTNC}(B, \alpha, r) = \tilde{\Theta}\rbr{ \min(r, B  r^{\frac1\alpha}) }$, and 
    $T_{\GTNC}(B, \alpha, r) = \tilde{O}\left( d (\ln\frac 1 {\delta r})^3 (1  + \frac{1}{B^2 } (\frac{1}{r})^{\frac{2-2\alpha}{\alpha}} )  \right)$.
\end{enumerate}
}

For brevity, in the above definitions, the dependence on the unlabeled distribution parameters $(L, R, U, \beta)$ is ignored; we refer the readers to Appendix~\ref{sec:params} for more precise definitions of these functions. 
The choices of $\cbr{b_j}$ and $\cbr{T_j}$ are to ensure that the algorithm's iterates $v_j$ are brought progressively closer to $w^\star$ with increasing $j$; this will be discussed in greater detail in Section~\ref{sec:theory}.


\begin{algorithm}[t]
\caption{Main algorithm}
\label{alg:main}
\begin{algorithmic}[1]
\REQUIRE{Bandwidth schedule $\cbr{b_j}$, iteration schedule $\cbr{T_j}$, target excess error $\epsilon \in (0,1)$, failure probability $\delta \in (0,1)$.}
\ENSURE{A halfspace $\tilde{v}$ such that $\err(h_{\tilde{v}}, D) - \err(h_{w^\star}, D) \leq \epsilon$.}
\STATE Define $r_\epsilon := \frac{\epsilon}{32 U \beta^2  (\ln\frac{12}{\epsilon})^2}$, and set $k_\epsilon := \lceil \log_4 \frac{1}{r_\epsilon} \rceil$ be the total number of iterations.
\STATE $v_1 \gets \initialize(\cbr{b_j}, \cbr{T_j})$.
\label{step:init}
\FOR{$j=1,\ldots,k_\epsilon$}
\STATE $v_{j+1} \gets \optimize(v_{j}, 4^{-(j+1)}, b_j, T_j, \avg)$.
\label{step:refine}
\ENDFOR
\RETURN $v_{k_\epsilon+1}$.
\end{algorithmic}
\end{algorithm}

We now discuss the two subprocedures employed by the main algorithm, \optimize and \initialize, in detail. 

\subsection{Procedure \optimize and its guarantees}
\label{subsec:optimize}

Procedure \optimize (Algorithm~\ref{alg:optimize}) aims at  refining its input halfspace $w_1$, so that it outputs a halfspace $\tilde{w}$ whose $\ell_2$ distance to $w^\star$ has an upper bound ($r$) at most a factor of $\frac14$ times the original $\ell_2$ distance upper bound between $w_1$ and  $w^\star$ ($4r$), with good probability. 

To this end, it maintains an iterate $w_t$; at each iteration, it performs adaptive sampling to obtain a labeled example $(x_t, y_t)$ drawn from $D_{\hat{w}_t, b}$, and updates with this new example using the well-known online gradient descent algorithm~\citep[e.g.][Chapter 11]{cesa2006prediction}. 
Following standard active learning sampling strategies~\cite[e.g.][]{balcan2009agnostic}, every draw from distributions $D_{X \mid \hat{w}_t, b}$ is done by rejection sampling, i.e., keep querying $\EX$ until it returns an example in $B_{\hat{w}_t,b}$.
After generating the iterates $\cbr{w_t}_{t=1}^T$, it aggregates them using either a normalize-and-average step (lines~\ref{line:avg-start} to~\ref{line:avg-end}), or draws a vector uniformly at random from this set, multiplied by a random sign (lines~\ref{line:rnd-start} to~\ref{line:rnd-end}), depending on its aggregation mode $\agg$. 


\optimize is similar to \refine  in~\cite{zhang2020efficient}, but has two key differences. First, the update vector $g_t$ used in our \optimize algorithm is $-y_t x_t$, whereas the update vector $g_t$ in \refine depends on the Massart noise parameter $\eta$; this undesirable dependence on $\eta$ implies that \refine cannot be used for handling broader noise conditions such as Tsybakov noise. Second, it allows two aggregation modes to be used, and when to use which aggregation mode depends on the precision of the input (i.e. the $\ell_2$ closeness of input $w_1$ and $w^\star$): as we will see, in early learning stages, we will call \optimize with  mode $\rnd$; in later stages, we will call \optimize with  mode $\avg$. \optimize is algorithmically similar to the nonconvex SGD algorithm of~\cite{diakonikolas2020learning} (see also earlier algorithmic insights of~\citet{guillory2009active}), who carefully construct a nonconvex learning objective such that under Massart noise, any stationary point of the objective corresponds to a vector close to $w^\star$; however, as we will see next, our analysis techniques are fairly different from theirs.



As its update rule suggest, \optimize performs online linear optimization with adaptively-chosen linear functions. Generalizing insights from prior work~\citep{zhang2020efficient}, our key observation is that, somewhat intriguingly, \optimize can be alternatively viewed as minimizing the following ``proximity function'' to $w^\star$:
\begin{definition}
Given distribution $D$ with its Bayes optimal classifier being a halfspace $h_{w^\star}$, and a positive number $b > 0$, define $\pot(w) := \EE_{D_{\hat{w},b}} \sbr{ (1-2\eta(x)) \abr{\inner{w^\star}{x}} }$.
\end{definition}

\begin{algorithm}[t]
\caption{$\optimize$}
\label{alg:optimize}
\begin{algorithmic}[1]
\REQUIRE vector $w_1$, target proximity $r$ (such that $\| w_1 - w^\star \| \leq 4 r$), bandwidth $b$, number of iterations $T$, aggregation mode $\agg$.
\ENSURE Optimized halfspace $\tilde{w}$ such that $\| \tilde{w} - w^\star \| \leq r$ with good probability.

\STATE Define $\Kcal = \cbr{w: \| w - w_1 \| \leq  4 r}$, step size $\alpha = \frac{r}{\beta} \sqrt{\frac{1}{dT}} / \rbr{\ln\frac{Td}{\delta r b R L}}$.
\FOR{$t=1, 2, \dots, T$}
\label{line:omd-loop-start}
\STATE Sample $x_t$ from $D_{X \mid \hat{w}_t, b}$ using rejection sampling, and query $\Ocal$ for its label $y_t$.
\STATE Update $w_{t+1} \gets\arg\min_{w \in \Kcal} \| w_t - \alpha g_t \|$,
where $g_t = - y_t x_t$. 
\label{line:omd}
\ENDFOR
\label{line:omd-loop-end}
\IF{$\agg = \avg$}
\label{line:avg-start}
\RETURN $\tilde{w} \gets \frac{1}{T} \sum_{t=1}^T \hat{w}_t$.
\label{line:avg-end}
\ELSIF{$\agg = \rnd$}
\label{line:rnd-start}
\RETURN $\tilde{w} \gets \sigma \cdot \hat{w}_\tau$, where $\tau$ is chosen uniformly at random from $[T]$, and $\sigma$ is chosen uniformly at random from $\cbr{-1,+1}$.
\label{line:rnd-end}
\ENDIF
\end{algorithmic}
\end{algorithm}

Recall that the three noise conditions considered in our paper all assume that $h_{w^\star}$ is the Bayes optimal classifier. Therefore, $\eta(x) \leq \frac12$ for all $x$, and consequently $\pot$ always takes nonnegative values.
In addition, $\pot$ is scale-invariant: as $B_{\widehat{\alpha w}, b} = B_{\hat{w}, b}$, $\pot(\alpha w) = \pot(w)$ for any $w \in \RR$ and $\alpha \neq 0$.
Informally, $\pot$ is a distance proxy function that measures the closeness of input $w$ and the optimal $w^\star$, although the closeness here is defined by a nonstandard measure $\tilde{\theta}(\cdot, \cdot)$ (recall its definition in Section~\ref{sec:setting}). This is formalized in the following lemma (proof in Appendix~\ref{sec:pot}):
\begin{lemma}
Suppose $D_X$ is $(2, L, R, U, \beta)$-well-behaved. In addition, $w \in \RR^d$ and $b > 0$ is such that $\tilde{\theta} = \tilde{\theta}(w, w^\star) \geq \Omega(b)$. Then:
\begin{enumerate}
    \item if $D$ satisfies $\eta$-Massart noise condition, $\pot(w) \geq \tilde{\Omega}\rbr{ (1-2\eta) \tilde{\theta} }$.
    \item If $D$ satisfies $(A, \alpha)$-Tsybakov noise condition, $\pot(w) \geq \Omega\rbr{ (\frac b A)^{\frac{1-\alpha}{\alpha}} \tilde{\theta} }$.

    \item If $D$ satisfies $(B, \alpha)$-geometric Tsybakov noise condition, then $\pot(w) \geq \tilde{\Omega}\rbr{ \min( \tilde{\theta}, B\tilde{\theta}^{\frac{1}{\alpha}}) }$.
    \end{enumerate}
    \label{lem:pot-lb-main-text}
\end{lemma}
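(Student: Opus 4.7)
The plan is to reduce everything to a two-dimensional integral on $V = \spn(\hat{w}, w^\star)$ (the hypothesis $\tilde{\theta}(w,w^\star)\ge\Omega(b)>0$ ensures this is genuinely $2$-dimensional). In coordinates on $V$, I may take $\hat{w} = (1,0)$; since both the band $B_{\hat{w},b}$ and the integrand $(1-2\eta(x))|\inner{w^\star}{x}|$ are invariant under flipping the orientation of $\hat{w}$, I may further assume $w^\star = (\cos\tilde{\theta}, \sin\tilde{\theta})$ with $\tilde{\theta}=\tilde{\theta}(w,w^\star)\in[0,\pi/2]$. In these coordinates,
\begin{equation*}
\pot(w) \;=\; \frac{\int_{|s|\le b}\!\int_{\RR} (1-2\eta(x))\,|s\cos\tilde{\theta}+t\sin\tilde{\theta}|\,p_V(s,t)\,dt\,ds}{\int_{|s|\le b}\!\int_{\RR} p_V(s,t)\,dt\,ds}.
\end{equation*}
The denominator is at most $\tilde{O}(Ub\beta)$: combine the pointwise bound $p_V\le U$ on a $\tilde{O}(\beta)$-truncation of $t$ with the sub-exponential tail $\PP(|\inner{u}{x}|\ge T)\le e^{1-T/\beta}$ for $u\perp\hat{w}$. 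It is at least $\Omega(LbR)$ by integrating $p_V\ge L$ over the intersection of the band with the ball of radius $R$.

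Next I will carve out a ``good region'' $G' = \{(s,t):|s|\le b,\ R/4\le|t|\le R/2\}$. It has $2$D Lebesgue measure $\Theta(bR)$ and lies inside $\{\|z\|\le R\}$, so $p_V\ge L$ on it; under the assumption $\tilde{\theta}\ge\Omega(b)$ (with the implicit constant absorbing $1/R$), for every $(s,t)\in G'$ one has $|s\cos\tilde{\theta}+t\sin\tilde{\theta}|\ge \tfrac{|t|\sin\tilde{\theta}}{2} - |s| \ge y_0 := \Omega(R\tilde{\theta})$. Combining with the denominator estimate gives $\PP_{D_{\hat{w},b}}(G') = \Omega(1)$ and $Y := |\inner{w^\star}{x}|\ge y_0$ throughout $G'$.

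It then remains to lower bound $Z:=1-2\eta(x)$ in each noise model and combine. For Massart noise, $Z\ge 1-2\eta$ deterministically, so pulling this factor out and using $\EE_{D_{\hat{w},b}}[Y]\ge y_0\,\PP_{D_{\hat{w},b}}(G') = \tilde{\Omega}(\tilde{\theta})$ gives $\pot(w)\ge\tilde{\Omega}((1-2\eta)\tilde{\theta})$. For geometric Tsybakov noise, Definition~\ref{def:gtnc} yields $Z\ge\min(1,\,2BY^{(1-\alpha)/\alpha})$ pointwise, so on $G'$ the integrand dominates $\min(y_0,\,2By_0^{1/\alpha})$; integrating over $G'$ alone gives $\pot(w)\ge\tilde{\Omega}(\min(\tilde{\theta},\,B\tilde{\theta}^{1/\alpha}))$.

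The standard Tsybakov case will be the main obstacle, since the condition only controls the marginal distribution of $Z$, not its behaviour conditional on the band. My plan is to pick a threshold $\tau = c(b/A)^{(1-\alpha)/\alpha}$ with $c$ depending on $(L,R,U,\beta)$, and write
\begin{equation*}
\pot(w) \;\ge\; \tau\,\EE_{D_{\hat{w},b}}\!\bigl[Y\,\ind(Z\ge\tau)\bigr] \;\ge\; \tau\,y_0\,\bigl(\PP_{D_{\hat{w},b}}(G') - \PP_{D_{\hat{w},b}}(Z<\tau)\bigr).
\end{equation*}
The Tsybakov tail gives $\PP_D(Z<\tau)\le A(\tau/2)^{\alpha/(1-\alpha)}$; dividing by $\PP_D(B_{\hat{w},b})\ge\Omega(bLR)$ yields $\PP_{D_{\hat{w},b}}(Z<\tau)\le O(A\tau^{\alpha/(1-\alpha)}/(bLR))$, and the choice of $\tau$ makes this at most $\tfrac12\PP_{D_{\hat{w},b}}(G')=\Omega(1)$. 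The bracket is therefore $\Omega(1)$ and $\pot(w)\ge\Omega((b/A)^{(1-\alpha)/\alpha}\tilde{\theta})$. The delicate point here, and the reason the exponent $(1-\alpha)/\alpha$ appears, is precisely this balancing: $\tau$ must be taken as large as possible while keeping the ``bad'' probability $\PP_{D_{\hat{w},b}}(Z<\tau)$ below the guaranteed mass of the good region $G'$.
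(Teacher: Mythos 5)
Your proposal is correct and follows essentially the same route as the paper's proof: a good region inside the band at distance $\Theta(R)$ from the origin (your rectangle $G'$ versus the paper's parallelogram defined via $\inner{w^\star}{x}\in[\tfrac{R\sin\theta}{4},\tfrac{R\sin\theta}{2}]$), whose mass is lower bounded by $p_V\ge L$, combined with the $\tilde{O}(Ub\beta)$ upper bound on the band mass, pointwise bounds on $1-2\eta(x)$ for the Massart and geometric Tsybakov cases, and the same threshold-balancing choice $\tau\asymp(b/A)^{\frac{1-\alpha}{\alpha}}$ (the paper's $t=(\tfrac{RbL}{8A})^{\frac{1-\alpha}{\alpha}}$) against the Tsybakov tail. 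The differences are purely cosmetic, so no further comparison is needed.
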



The following key lemma formalizes the aforementioned claim that \optimize produces iterates $w_t$'s that approximately minimize $\pot$; specifically, the average value of $\pot(w_t)$'s is well-controlled, if the sampling bandwidth $b$ is small and the number of iterations $T$ is large.

\begin{lemma}
Suppose $D_X$ is $(2, L, R, U, \beta)$-well-behaved. There exists a numerical constant $c > 0$ such that the following holds. \optimize, with input initial vector $w_1$, target proximity $r \in (0,\frac 14]$ such that $\| w_1 - w^\star \| \leq 4r$, bandwidth $b \leq \frac R 2$, number of iterations $T$, produces iterates $\cbr{w_t}_{t=1}^T$, such that
with probability $1-\delta r$, 
\[
\frac1T \sum_{t=1}^T \pot(w_t)
\leq 
c \rbr{ b
+ 
(b + \beta r) \cdot \rbr{\ln\frac{Td}{\delta r b R L}} (\sqrt{\frac{d + \ln\frac1{\delta r}}{T}} + \frac{\ln\frac1{\delta r}}{T}) }.
\]
\label{lem:optimize-g}
\end{lemma}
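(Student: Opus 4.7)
My plan is to combine (i) the standard online gradient descent (OGD) regret bound, (ii) a conditional-expectation identity relating the linear losses $\inner{g_t}{w_t-w^\star}$ to $\pot$, and (iii) a Freedman/Bernstein martingale concentration inequality. Since $\|w_1-w^\star\|\le 4r$ guarantees $w^\star\in\Kcal$, OGD with comparator $w^\star$ gives
\[
\sum_{t=1}^T \inner{g_t}{w_t - w^\star} \;\le\; \frac{1}{2\alpha}\|w_1-w^\star\|^2 + \frac{\alpha}{2}\sum_{t=1}^T \|x_t\|^2.
\]
From the sub-exponential tail in Definition~\ref{def:well-behaved}, combined with the constant-factor inflation caused by conditioning on the band $B_{\hat w_t,b}$ (whose mass is lower bounded through $L,R,b$), a union bound over $t\le T$ and over a basis of $\RR^d$ yields $\max_t\|x_t\|^2\le O\!\bigl(\beta^2 d\ln^2(Td/(\delta r bRL))\bigr)$ with probability $1-\delta r/3$. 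Plugging in $\alpha=r/(\beta\sqrt{dT}\ln(\cdot))$ makes each of the two terms on the right hand side above of order $r\beta\sqrt{dT}\ln(\cdot)$, hence the total regret divided by $T$ is $O(r\beta\ln(\cdot)\sqrt{d/T})$.

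Next, I introduce the martingale $Z_t := y_t\inner{x_t}{w^\star-w_t} = -\inner{g_t}{w_t-w^\star}$. Using $\EE[y_t\mid x_t] = (1-2\eta(x_t))\sign(\inner{w^\star}{x_t})$ and then integrating over $x_t\sim D_{X\mid \hat w_t,b}$,
\[
\EE[Z_t\mid \mathcal{F}_{t-1}] \;=\; \pot(w_t) \;-\; \EE_{x_t}\!\bigl[(1-2\eta(x_t))\sign(\inner{w^\star}{x_t})\inner{w_t}{x_t}\bigr].
\]
The band constraint $|\inner{\hat w_t}{x_t}|\le b$ and the triangle bound $\|w_t\|\le \|w^\star\|+8r\le 3$ (using $r\le 1/4$) show that the correction term has absolute value at most $3b$, so $\pot(w_t)\le \EE[Z_t\mid\mathcal{F}_{t-1}]+3b$. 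To pass from $\EE[Z_t\mid\mathcal{F}_{t-1}]$ to $Z_t$, I apply Freedman's inequality to the martingale differences: the conditional second moment satisfies $\EE[Z_t^2\mid\mathcal{F}_{t-1}]\le \|w^\star-w_t\|^2\,\EE[\|x_t\|^2]=O(r^2 d\beta^2)$, and on the good tail event $|Z_t|\le \|x_t\|\,\|w^\star-w_t\|\le O((b+\beta r)\sqrt d\,\ln(\cdot))$, which gives, with probability $1-\delta r/3$,
\[
\sum_{t=1}^T\bigl(\EE[Z_t\mid\mathcal{F}_{t-1}]-Z_t\bigr) \;\le\; O\!\Bigl((b+\beta r)\ln(\cdot)\,\bigl(\sqrt{T(d+\ln\tfrac{1}{\delta r})}+\ln\tfrac{1}{\delta r}\bigr)\Bigr).
\]

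Chaining the three bounds, $\sum_t \pot(w_t)\le 3bT+\sum_t\EE[Z_t\mid\mathcal{F}_{t-1}]\le 3bT+\sum_t Z_t+\mathrm{dev}\le 3bT+\mathrm{regret}+\mathrm{dev}$; dividing by $T$ and absorbing constants into $c$ recovers the claimed inequality. The main obstacle is handling the sub-exponential tails of $\|x_t\|$ uniformly over the $T$ rounds, cleanly in both the OGD quadratic term $\frac{\alpha}{2}\sum_t\|g_t\|^2$ and the almost-sure upper bound on $|Z_t|$ required by Freedman; this is exactly where the logarithm $\ln(Td/(\delta r bRL))$ enters both the step size and the final expression, since the density-lower-bound parameters $L,R$ govern how much rejection sampling on the thin band of width $b$ inflates the tail.
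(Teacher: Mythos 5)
Your overall route is essentially the paper's: the OGD regret bound with comparator $w^\star \in \Kcal$, the deterministic band bound $\abr{\inner{w_t}{x_t}} \leq 3\abr{\inner{\hat{w}_t}{x_t}} \leq 3b$, a high-probability bound on $\sum_t \|g_t\|^2$ from the sub-exponential coordinates together with the $\frac{1}{bRL}$ inflation caused by conditioning on the band (the paper's Claim~\ref{claim:infty-norm}), and a martingale step converting $\sum_t Z_t$ into $\sum_t \EE[Z_t \mid \Fcal_{t-1}] \geq \sum_t (\pot(w_t) - 3b)$. One small slip: since $g_t = -y_t x_t$, your $Z_t = y_t \inner{x_t}{w^\star - w_t}$ equals $+\inner{g_t}{w_t - w^\star}$, not its negative; your chaining implicitly uses the correct sign (the regret bound upper-bounds $\sum_t \inner{g_t}{w_t - w^\star} = \sum_t Z_t$), so this is only a typo.

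The genuine gap is in the concentration step. You bound $\EE[Z_t^2 \mid \Fcal_{t-1}] \leq \|w^\star - w_t\|^2 \, \EE\|x_t\|^2 = O(r^2 d \beta^2)$ and $\abr{Z_t} \leq \|x_t\| \|w^\star - w_t\| = O(r\beta\sqrt{d}\,\ln(\cdot))$ via Cauchy--Schwarz with the full norm $\|x_t\|$. From these moments, Freedman's inequality only yields a deviation of order $r\beta \ln(\cdot)\rbr{\sqrt{T d \ln\frac{1}{\delta r}} + \sqrt{d}\ln\frac{1}{\delta r}}$, which is \emph{not} $O\rbr{(b+\beta r)\ln(\cdot)\rbr{\sqrt{T(d+\ln\frac{1}{\delta r})} + \ln\frac{1}{\delta r}}}$: it can exceed it by a factor up to $\sqrt{\min(d, \ln\frac{1}{\delta r})}$, so the display you assert for the Freedman step does not follow from the moment bounds you state, and the final bound you would obtain has $\sqrt{d\ln\frac{1}{\delta r}}$ where the lemma requires $\sqrt{d + \ln\frac{1}{\delta r}}$. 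The missing idea is that $Z_t$ depends on $x_t$ only through a one-dimensional projection, whose tail under $D_{\hat{w}_t, b}$ is sub-exponential with a \emph{dimension-free} scale $O\rbr{b + \beta r (1 + \ln\frac{1}{bRL})}$; this is the paper's Lemma~\ref{lem:proj-tail} (decompose $w^\star$ along $\hat{w}_t$, where the band caps that component by $b$, and a residual of norm $O(r)$ whose projection is $\beta$-sub-exponential, inflated by the band conditioning), fed into the sub-exponential martingale inequality of Lemma~\ref{lem:mtg-subexp} applied to $y_t\inner{w^\star}{x_t}$. With that, the $\sqrt{Td}$ in the final bound comes only from the regret/step-size term, while the martingale deviation contributes only $\sqrt{T\ln\frac{1}{\delta r}}$, and the two combine into $\sqrt{T(d+\ln\frac{1}{\delta r})}$. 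In your framework the repair is the same: bound $\abr{Z_t} \leq \|w^\star - w_t\| \cdot \abr{\inner{\hat{u}_t}{x_t}}$ with $\hat{u}_t$ the unit vector along $w^\star - w_t$, and control $\abr{\inner{\hat{u}_t}{x_t}}$ and its conditional second moment by the one-dimensional sub-exponential tail (with the $\ln\frac{1}{bRL}$ inflation), so that both Freedman inputs are dimension-free.
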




The proof of Lemma~\ref{lem:optimize-g} can be found at Appendix~\ref{sec:optimize-g}.
Its key insight is similar to the ideas in~\cite{zhang2020efficient}: we derive a regret guarantee of the online linear optimization problem induced by the adaptively-chosen gradient vectors $\cbr{g_t}_{t=1}^T$, which implies an upper bound on the {\em negative benchmark} term. Thanks to the adaptive sampling scheme, the negative benchmark concentrates to $\sum_{t=1}^T \pot(w_t)$. Although the proof of Lemma~\ref{lem:optimize-g}  uses standard regret results on online linear optimization, it is not a direct consequence of the standard reduction from online convex optimization to online linear optimization -- $\pot(w)$ is not necessarily convex in $w$.

We now discuss the aggregation mode $\agg$ in more detail. As \optimize is called by the main algorithm (Algorithm~\ref{alg:main}) repeatedly, we discuss its different settings in earlier and later stages of the main algorithm respectively.

In later stages of calling $\optimize$ (specifically, when its input target proximity $r \leq \frac{1}{16}$), the constraint set $\Kcal$ ensures that all $w_t$'s have acute angles with $w^\star$; in this case, for all $t$, $\tilde{\theta}(w_t, w^\star) = \theta(w_t, w^\star)$, and therefore  upper bounds on $\pot(w_t)$ imply upper bounds on $\theta(w_t, w^\star)$.
In this ``local convergence'' regime, applying the guarantees provided by Lemma~\ref{lem:optimize-g}, a deterministic average over the normalized iterates $\hat{w}_t$'s achieves the target $\ell_2$ proximity to $w^\star$; see item~\ref{item:avg} of Lemma~\ref{lem:optimize-main} or Lemma \ref{lem:optimize-main-avg-only} for a precise statement. This corresponds to the $\avg$ mode (lines~\ref{line:avg-start} to~\ref{line:avg-end}). 

In contrast, in early stages of calling \optimize (specifically, when $r > \frac1{16}$), its iterates $\cbr{w_t}_{t=1}^T$ may not yet be in acute angle with $w^\star$; in this case, it is hard to guarantee that the average normalized iterate $\frac{1}{T} \sum_{t=1}^T \hat{w}_t$ is close to $w^\star$. This motivates the second mode $\rnd$ (lines~\ref{line:rnd-start} to~\ref{line:rnd-end}), which returns a vector uniformly at random from $\hat{w}_t$'s, times a random sign $\sigma$. By the guarantees of Lemma~\ref{lem:optimize-g}, it can be shown that with appropriate settings of parameters $\alpha, T, b$, \optimize guarantees to return $\tilde{w}$ with a constant $\ell_2$ distance to $w^\star$ with a constant probability; see item~\ref{item:rnd} of Lemma~\ref{lem:optimize-main} for a precise statement.


\subsection{Procedure \initialize and its guarantees}
Procedure \initialize (Algorithm~\ref{alg:initialize}) aims at label-efficiently learning a halfspace $\hat{u}_0$ amenable to local refinement: it guarantees that with high probability, the output halfspace $\hat{u}_0$ satisfies that $\| \hat{u}_0 - w^\star \| \leq \frac14$. 

At a high level, \initialize uses \optimize as a black box in a label efficient manner. 
Recall from the previous subsection that, with appropriate settings of input parameters, running \optimize with aggregation mode $\rnd$ guarantees to output a halfspace $\frac14$-close to $w^\star$ with  constant probability. \initialize   ``boosts'' the above guarantee, in that it increases the probability of outputting a vector $\frac14$-close to $w^\star$ from a small constant to $1-O(\delta)$. 



\initialize consists of two stages. In the first stage (lines~\ref{line:stage-1-start} to~\ref{line:stage-1-end}), it generates $U$, a set of halfspaces of size $N = \lceil 10 \ln\frac 4 \delta \rceil$, such that at least one  element in $U$ has a small excess error rate, specifically $O(\epsilon_0)$; here the choice of $\epsilon_0$ depends on the noise condition of $D$, as will be discussed next. To achieve this target excess error rate, it runs \optimize for $k_0 = \tilde{O}(\log\frac{1}{\epsilon_0})$ iterations; Claim~\ref{claim:stage1} in Appendix~\ref{sec:initialize} shows that each trial $i$ generates $v_{i,k_0+1}$ with excess error $O(\epsilon_0)$ with constant probability. Because of the independence of the $N$ trials, with high probability, one of the $v_{i,k_0+1}$'s will have excess error $O(\epsilon_0)$.


\begin{algorithm}[t]
\caption{\initialize: a label efficient acute initialization procedure}
\label{alg:initialize}
\begin{algorithmic}[1]
\REQUIRE{Bandwidth schedule $\cbr{b_j}$,  iterations schedule $\cbr{T_j}$.}
\ENSURE{Unit vector $\hat{u}_0$ such that $\| \hat{u}_0 - w^\star \| \leq \frac14$.}



\STATE Define $r_0 = \frac{\epsilon_0}{64 U \beta^2  (\ln\frac{24}{\epsilon_0})^2}$, where $\epsilon_0$ is defined according to the noise condition of $D$; $k_0 = \lceil \log_4 (\frac{1}{r_0}) \rceil$ is the number of iterations per trial; $N = \lceil 10 \ln \frac {4}{\delta} \rceil$ the number of trials in the first stage.

\FOR{$i=1,2,\ldots,N$}
\label{line:stage-1-start}

\STATE Initialize $v_{i,0} \gets \vecz$.


\FOR{$j=0,1,2,\ldots,k_0$}

\STATE Set $\agg \gets \rnd$ if $j = 0$; $\agg \gets \avg$ otherwise.


\STATE $v_{i,j+1} \gets \optimize(v_{i,j}, 4^{-(j+1)}, b_j, T_j, \agg)$.
\ENDFOR
\ENDFOR

\STATE $U \gets \cbr{v_{i,k_0+1}: i \in [N]}$. 
\label{line:stage-1-end}

\STATE $S \leftarrow$ draw $O(\frac{1}{\epsilon_0^2}\ln \frac {N}{\delta} )$ random unlabeled examples from $D_X$, and query $\Ocal$ for their labels.
\label{line:stage-2-start}

\RETURN $\hat{u}_0 = \frac{u_0}{\| u_0\|}$, where $u_0 = \argmin_{u \in U} \err(h_u, S)$.
\label{line:stage-2-end}
\end{algorithmic}
\end{algorithm}

In the second stage (lines~\ref{line:stage-2-start} to~\ref{line:stage-2-end}), it draws $S$, a set of labeled examples from $D$, and selects the halfspace $\hat{u}$ in $U$ with the smallest empirical error on $S$.
Combining the guarantees of $U$ in the first stage, the choice of $|S| = \tilde{O}(\epsilon_0^{-2})$, and standard guarantees of empirical risk minimization, it is guaranteed that $\hat{u}$ has excess error  $\tilde{O}(\epsilon_0)$ with high probability. 
Parameter $\epsilon_0$ is set to ensure that the above excess error guarantee can be translated to a geometric $\ell_2$-proximity guarantee, and therefore depends on different noise conditions on $D$. Specifically, we set $\epsilon_0$ as:
\begin{enumerate}
    \item $\epsilon_{\MNC}(\eta) = \tilde{O}\rbr{ 1-2\eta }$, under the $\eta$-Massart noise condition;
    
    \item $\epsilon_{\TNC}(A, \alpha) = \tilde{O}\rbr{  (\frac{1}{A})^{\frac{1-\alpha} {\alpha}} }$, under the $(A, \alpha)$-Tsybakov noise condition with $\alpha \in (\frac12, 1]$;
    
    \item $\epsilon_{\GTNC}(B, \alpha) = \tilde{O}\rbr{ B }$, under the $(B,\alpha)$-geometric Tsybakov noise condition. 
\end{enumerate}
We again refer the readers to Appendix~\ref{sec:params} for more precise definitions of these functions, with their dependence on $(L, R, U, \beta)$ explicit. 



We show the following guarantee of \initialize, under any one of the three noise conditions considered.
\begin{lemma}
Suppose $D$ is $(2, L, R, U, \beta)$-well behaved and satisfies one of the three noise conditions. With the respective settings of $\cbr{b_j}, \cbr{T_j}, \epsilon_0$, \initialize  outputs a unit vector $\hat{u}_0$, such that with probability $1-\delta/2$, $\| \hat{u}_0 - w^\star \| \leq \frac{1}{4}$. The total number of label queries by \initialize is at most:
\begin{enumerate}
    \item $\tilde{O} \rbr{ \frac{d}{(1-2\eta)^2} }$, if $D$ satisfies $\eta$-Massart noise;
    \item $\tilde{O} \rbr{ d \cdot \rbr{ 1 + A^{\frac{2-2\alpha}{\alpha(2\alpha-1)}} } }$, if $D$ satisfies $(A, \alpha)$-Tsybakov noise with $\alpha \in (\frac12, 1]$;
    \item $\tilde{O} \rbr{ d \cdot \rbr{ 1 + (\frac{1}{B})^{\frac 2 \alpha} } }$, if $D$ satisfies $(B,\alpha)$-geometric Tsybakov noise.
\end{enumerate} 
\label{lem:initialize}
\end{lemma}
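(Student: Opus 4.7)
The plan is to combine three ingredients: (i) a single-trial success bound for \optimize inherited from Lemma~\ref{lem:optimize-main} (packaged as Claim~\ref{claim:stage1}), (ii) Chernoff amplification across the $N=\lceil 10\ln(4/\delta)\rceil$ independent trials in Stage~1, and (iii) a uniform-convergence argument over the finite set $U$ in Stage~2, followed by a noise-condition-specific translation from excess error to $\ell_2$-proximity.

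For a single trial of Stage~1, I would argue that $v_{i,k_0+1}$ has excess error $O(\epsilon_0)$ with some absolute constant probability $p^\star > 0$: the initial $\rnd$-mode call gives $\|v_{i,1}-w^\star\|\le 1$ with constant probability via item~\ref{item:rnd} of Lemma~\ref{lem:optimize-main}; the subsequent $k_0$ $\avg$-mode calls shrink the $\ell_2$ proximity by a factor of $4$ per iteration with per-call failure probability set to $O(\delta r_j)$, so that conditionally $\|v_{i,k_0+1}-w^\star\|\le r_0$; and Lemma~\ref{lem:prob-angle} together with well-behavedness converts this $\ell_2$ bound into an excess error of $O(\epsilon_0)$. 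Because the $N$ trials are independent, a Chernoff bound yields that, with probability at least $1-\delta/4$, some element of $U$ achieves this excess error. For Stage~2, Hoeffding combined with a union bound over $|U|=N$ guarantees that a sample $S$ of size $O(\epsilon_0^{-2}\ln(N/\delta))$ makes all empirical errors uniformly $\epsilon_0$-accurate, so the ERM $u_0$ has true excess error $O(\epsilon_0)$ with probability at least $1-\delta/4$. Union-bounding the two stages yields the claimed $1-\delta/2$ overall success probability.

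The key remaining step is converting $\err(h_{u_0},D)-\err(h_{w^\star},D)=O(\epsilon_0)$ into $\|\hat{u}_0-w^\star\|\le\tfrac14$, with $\epsilon_0$ chosen as prescribed. Under $\eta$-Massart noise, each point of disagreement contributes at least $(1-2\eta)$ to the excess error, so $\PP_D(h_{u_0}\ne h_{w^\star})=O(\epsilon_0/(1-2\eta))$; the lower-density portion of well-behavedness (Definition~\ref{def:well-behaved}) together with Lemma~\ref{lem:prob-angle} then gives $\theta(\hat{u}_0,w^\star)=\tilde O(\epsilon_0/(1-2\eta))$, and choosing $\epsilon_0=\epsilon_{\MNC}(\eta)=\tilde O(1-2\eta)$ with a sufficiently small hidden constant drives $\theta\le\pi/6$, hence $\|\hat{u}_0-w^\star\|\le\tfrac14$. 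Under $(A,\alpha)$-Tsybakov noise, a standard Tsybakov integration (split the disagreement region by whether $\tfrac12-\eta(x)\le t$ and optimize in $t$) yields $\PP_D(h_{u_0}\ne h_{w^\star})\lesssim A^{1-\alpha}\epsilon_0^{\alpha}$, and the choice $\epsilon_0=\epsilon_{\TNC}(A,\alpha)=\tilde O(A^{-(1-\alpha)/\alpha})$ again produces a small angle. The geometric Tsybakov case reduces to the Tsybakov case via Lemma~\ref{lem:geomtnc-tnc} with $A=\tilde O(B^{-\alpha/(1-\alpha)})$, and the prescribed $\epsilon_0=\tilde O(B)$ translates accordingly.

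For label complexity, the total is $N\sum_{j=0}^{k_0}T_j + |S|$. Under Massart, $T_j$ is constant in $j$ up to polylog factors, so $\sum_j T_j=\tilde O(d/(1-2\eta)^2)$. Under the two Tsybakov variants, $T_j$ grows geometrically in $j$, so the sum is dominated by $T_{k_0}$ evaluated at $r\asymp r_0\asymp\epsilon_0$; plugging in the respective $\epsilon_0$ and simplifying gives $\tilde O(d(1+A^{(2-2\alpha)/(\alpha(2\alpha-1))}))$ and $\tilde O(d(1+B^{-2/\alpha}))$ respectively, while $|S|=\tilde O(\epsilon_0^{-2})$ is of equal or smaller order and is absorbed. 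I expect the main obstacle to be the excess-error-to-$\ell_2$ translation in the Tsybakov cases: the distribution parameters $(L,R,U,\beta)$ and the sub-exponential tail log factors must be threaded carefully through the Tsybakov calculation so that the hidden constant inside $\tilde O$ in the definition of $\epsilon_0$ is chosen small enough to force $\theta(\hat{u}_0,w^\star)$ below a \emph{specific} numerical threshold corresponding to $\|\hat{u}_0-w^\star\|\le\tfrac14$, not merely $O(1)$.
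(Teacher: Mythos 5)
Your proposal follows the paper's proof exactly: a per-trial success bound via Lemma~\ref{lem:optimize-main} (one $\rnd$ call followed by $k_0$ $\avg$ calls, with per-call failure $\delta\cdot 4^{-(j+1)}$ summed and absorbed), amplification over the $N=\lceil 10\ln(4/\delta)\rceil$ independent trials, Hoeffding plus a union bound over $|U|=N$ for Stage~2, and the noise-condition-specific translation from $O(\epsilon_0)$ excess error to $\ell_2$-proximity via Lemma~\ref{lem:excess-err} and Lemmas~\ref{lem:prob-angle},~\ref{lem:angle-l2}, with the same label-complexity accounting $L_1 = N\sum_{j=0}^{k_0}T_j$ and $L_2=\tilde O(\epsilon_0^{-2})$. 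Two small numerical slips do not affect the structure but should be fixed: the $\rnd$ call returns $\|v_{i,1}-w^\star\|\le 1/4$, not $\le 1$ (the tighter bound is needed so the first $\avg$ call, with target proximity $r=4^{-2}$, satisfies its precondition $\|v_{i,1}-w^\star\|\le 4r$); and since $\|\hat u_0 - w^\star\| = 2\sin(\theta/2)$ for unit vectors, $\theta\le\pi/6$ gives only $\|\hat u_0 - w^\star\|\lesssim 0.52$, so the threshold you need is $\theta(\hat u_0,w^\star)\le 1/4$ — the paper reaches this by driving $\PP_D(h_{u_0}\ne h_{w^\star})\le\tfrac14 LR^2$ and applying item~\ref{item:prob-angle-lb} of Lemma~\ref{lem:prob-angle} followed by item~\ref{item:unit-l2-angle} of Lemma~\ref{lem:angle-l2}.
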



Specifically, under the $\eta$-Massart noise condition, this yields a procedure that can  output a vector $\hat{u}$ with constant proximity to the optimal halfspace $w^\star$ with high probability, using $\tilde{O}(\frac{d}{(1-2\eta)^2})$ label queries. 
This label complexity matches the $\Omega(\frac{d}{(1-2\eta)^2})$ information-theoretic lower bound~\cite[e.g.][Theorem 1]{yan2017revisiting}. When specialized to isotropic log-concave unlabeled distribution settings, this resolves an open problem by~\cite{zhang2020efficient} on whether there is an efficient and label-optimal initialization procedure that  reliably computes a vector with small constant angle with $w^\star$.


\section{Performance guarantees}
\label{sec:theory}

We now present Theorem~\ref{thm:main}, the main result of this paper.

\begin{theorem}
Fix $\epsilon \in (0,1)$ and $\delta \in (0, \frac{1}{10})$. Suppose $D$ is $(2, L, R, U, \beta)$-well behaved and satisfies one of the three noise conditions. With the settings of $\cbr{b_j}, \cbr{T_j}$, and $\epsilon_0$ under the respective noise conditions, with probability $1-\delta$, Algorithm~\ref{alg:main} outputs a halfspace $\tilde{v}$, such that $\err(h_{\tilde{v}}, D) - \err(h_{w^\star}, D) \leq \epsilon$. In addition, its total number of label queries is at most: 
\begin{enumerate}
    \item $\tilde{O}\rbr{ \frac{d}{(1-2\eta)^2} \polylog(\frac1\epsilon)}$, if $D$ satisfies $\eta$-Massart noise;
    \item $\tilde{O}\rbr{ d \cdot \rbr{1 + A^{\frac{2-2\alpha}{\alpha(2\alpha-1)}} + (\frac{A}{\epsilon})^{\frac{2-2\alpha}{2\alpha-1}}} }$, if $D$ satisfies $(A, \alpha)$-Tsybakov noise with $\alpha \in (\frac12, 1]$;
    \item $\tilde{O}\rbr{ d \cdot \rbr{ 1 + (\frac{1}{B})^{\frac 2 \alpha} + \frac{1}{B^2} (\frac 1 \epsilon)^{\frac{2-2\alpha}{\alpha}}} }$, if $D$ satisfies $(B,\alpha)$-geometric Tsybakov noise.
\end{enumerate}
\label{thm:main}
\end{theorem}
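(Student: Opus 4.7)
}
The plan is to combine the initialization guarantee (Lemma~\ref{lem:initialize}) with an inductive application of the \optimize guarantee (Lemma~\ref{lem:optimize-g}) to show, for each noise condition, that the sequence $\{v_j\}$ satisfies the geometric invariant $\| v_j - w^\star \| \leq 4^{-j}$. Concretely, I would first invoke Lemma~\ref{lem:initialize}: with probability at least $1 - \delta/2$, the output $v_1 = \hat{u}_0$ of \initialize satisfies $\| v_1 - w^\star \| \leq \tfrac14$, and uses at most the number of labels listed in the theorem (Massart / Tsybakov / geometric Tsybakov). Condition on this event. I would then set the per-call failure parameter so that a union bound over the $k_\epsilon = O(\log(1/\epsilon))$ calls to \optimize on line~\ref{step:refine} contributes at most $\delta/2$ to the failure probability, which forces me to reabsorb $\log k_\epsilon$ factors into the $\tilde{O}(\cdot)$ terms that already contain $\polylog(1/\epsilon)$.

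Next, I would establish the inductive step: if $\| v_j - w^\star \| \leq 4 r_j := 4 \cdot 4^{-(j+1)}$, then the call \optimize$(v_j, 4^{-(j+1)}, b_j, T_j, \avg)$ outputs $v_{j+1}$ with $\| v_{j+1} - w^\star \| \leq 4^{-(j+1)}$. The argument is: Lemma~\ref{lem:optimize-g} gives a high-probability bound on $\frac{1}{T_j}\sum_{t=1}^{T_j} \pot(w_t)$, with $b = b_j$ and $T = T_j$ chosen so this average is dominated by the leading $c \cdot b_j$ term up to a constant. Lemma~\ref{lem:pot-lb-main-text} converts this into an average upper bound on $\tilde{\theta}(w_t, w^\star)$: plugging in the three noise-specific forms of $b_j$ (namely $b_{\MNC}, b_{\TNC}, b_{\GTNC}$) and inverting the respective lower bounds on $\pot$ yields $\frac{1}{T_j}\sum_t \tilde{\theta}(w_t, w^\star) \leq c' \cdot 4^{-(j+1)}$. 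Because $\Kcal$ centers on $v_j$ with radius $4 r_j \le 1$, every iterate $w_t$ has $\theta(w_t, w^\star) < \pi/2$, so $\tilde{\theta} = \theta$; then by convexity of the unit-vector distance,
\[
\| v_{j+1} - w^\star \|
= \Bigl\| \frac{1}{T_j}\sum_t \hat{w}_t - w^\star \Bigr\|
\leq \frac{1}{T_j}\sum_t \| \hat{w}_t - w^\star \|
\leq \frac{1}{T_j}\sum_t \theta(w_t, w^\star)
\leq 4^{-(j+1)},
\]
as required. After $k_\epsilon$ steps, $\| v_{k_\epsilon + 1} - w^\star \| \leq r_\epsilon$, so $\theta(v_{k_\epsilon+1}, w^\star) \leq r_\epsilon$, and Lemma~\ref{lem:prob-angle} (the angle-to-disagreement translation for well-behaved $D_X$, which explains precisely why $r_\epsilon$ is scaled by $1/(U\beta^2 (\ln 1/\epsilon)^2)$) yields $\err(h_{\tilde{v}}, D) - \err(h_{w^\star}, D) \leq \PP_{D_X}(h_{\tilde v}(x) \neq h_{w^\star}(x)) \leq \epsilon$.

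For the label complexity I would sum $T_j$ over $j = 1, \dots, k_\epsilon$ and add the \initialize cost from Lemma~\ref{lem:initialize}. Under $\eta$-Massart noise, each $T_j = \tilde{O}(d/(1-2\eta)^2)$ up to a $(\log(1/r_j))^3$ factor, so the sum is $\tilde{O}\bigl(\frac{d}{(1-2\eta)^2}\polylog(1/\epsilon)\bigr)$ and dominates (or is dominated by) the matching \initialize bound. Under $(A,\alpha)$-Tsybakov noise, $T_j$ grows like $d \cdot (A/r_j)^{(2-2\alpha)/(2\alpha-1)}$ (a geometric series in $j$ since $r_j = 4^{-(j+1)}$ and $2\alpha - 1 > 0$), so the sum is dominated by $j = k_\epsilon$, giving $\tilde{O}(d(A/\epsilon)^{(2-2\alpha)/(2\alpha-1)})$; add the \initialize cost $\tilde{O}(d \cdot A^{(2-2\alpha)/(\alpha(2\alpha-1))})$ and the constant term to get the stated bound. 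The geometric-Tsybakov case is analogous with $B$ in place of $A$ and exponent $(2-2\alpha)/\alpha$.

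The main technical obstacle is verifying that the parameter choices $b_j$ and $T_j$ from the tables in Appendix~\ref{sec:params} are quantitatively large/small enough to push the Lemma~\ref{lem:optimize-g} bound below the corresponding inverse of the Lemma~\ref{lem:pot-lb-main-text} lower bound at scale $4^{-(j+1)}$. This amounts to balancing the $b$ term against the $(b + \beta r)\sqrt{d/T}$ term for each of the three noise-specific lower bounds on $\pot$; the Tsybakov cases are the most delicate because the lower bound involves the sampling bandwidth $b$ itself, forcing the choices of $b_j$ and $T_j$ to be coupled through a fractional exponent in $\alpha$. A secondary bookkeeping issue is checking that calls to \optimize with $r \le 1/16$ are in the ``local'' regime used in the $\avg$ analysis (so that $\tilde\theta = \theta$ holds uniformly on $\Kcal$), which is guaranteed by starting the main loop after \initialize has already brought $\| v_1 - w^\star \|$ within $1/4$.
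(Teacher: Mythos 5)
Your proposal is correct and follows essentially the same route as the paper: \initialize supplies the base case, an induction over the $k_\epsilon$ calls to \optimize maintains $\|v_{j+1}-w^\star\|\le 4^{-(j+1)}$ (you inline the content of Lemma~\ref{lem:optimize-main-avg-only}, which the paper itself proves by combining Lemma~\ref{lem:optimize-g} and Lemma~\ref{lem:pot-lb-main-text} with the same average-over-$\hat w_t$ step), and the angle-to-disagreement conversion plus summing the geometrically growing $T_j$ and the \initialize cost gives the stated accuracy and label complexity. The only cosmetic differences are bookkeeping: the paper lets the $j$-th call fail with probability $\delta\,4^{-(j+1)}$ so the failure probabilities sum geometrically (no uniform $\delta/(2k_\epsilon)$ split or extra $\log k_\epsilon$ factors), and it passes from the average-$\pot$ bound to the average distance via Markov's inequality with a case split on iterates where $\tilde\theta(w_t,w^\star)<4b_j/R$ (where Lemma~\ref{lem:pot-lb-main-text} does not apply), rather than inverting the lower bound termwise as you sketch.
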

Specialized to the $\eta$-Massart noise condition, our label complexity bound matches information-theoretic lower bounds~\cite[e.g.][Theorem 1]{yan2017revisiting} up to polylogarithmic factors, and improves over the state-of-the-art active halfspace learning algorithm of~\cite{zhang2020efficient} in two aspects. First, our label complexity is a factor of $O\rbr{ \frac{1}{(1-2\eta)^2} }$ lower than~\cite{zhang2020efficient}, thanks to the new initialization procedure; second, our algorithm and analysis allows for dealing with a broader set of unlabeled data distributions beyond isotropic log-concave, matching the assumptions employed in recent works~\citep[e.g.][]{diakonikolas2020polynomial}. 

Under the $(A, \alpha)$-Tsybakov noise condition, our theorem provides nontrivial label complexity results when $\alpha \in (\frac12, 1]$. In the extreme case when $\alpha = 1$, our algorithm has a label complexity of $\tilde{O}\rbr{ d \polylog(\frac1\epsilon) }$. The label complexity bound becomes higher when $\alpha$ is further away from $1$. Compared to the recent passive learning algorithm of~\cite{diakonikolas2020polynomial} that can tolerate $(A,\alpha)$-Tsybakov noise for any $\alpha \in (0,1]$, our results cannot allow $\alpha$ to be in $(0,\frac12]$, but our algorithm has better label efficiency when $\alpha$ is close to 1.

Under the $(B, \alpha)$-geometric Tsybakov noise condition, our label complexity bound $\tilde{O}\rbr{ d (\frac{1}{\epsilon})^{\frac{2-2\alpha}{\alpha}} }$ is higher than the computationally inefficient algorithm of~\cite{balcan2013active}, which has a label complexity of $\tilde{O}\rbr{ d (\frac1\epsilon)^{2-2\alpha} }$.
This is due to a limitation of our current proof technique: we reduce the goal of achieving excess error guarantee to achieving geometric proximity. 
Our proof in fact yields a stronger result: with $\tilde{O}\rbr{ d (\frac{1}{\epsilon})^{\frac{2-2\alpha}{\alpha}} }$ label queries, our algorithm outputs a halfspace that has angle $O(\epsilon)$ with $w^\star$ with high probability; this result matches the information-theoretic lower bound of~\cite{wang2016noise} in achieving closeness-in-angle guarantees, in the dependence on $\epsilon$. We leave whether it is possible to develop  efficient active learning algorithms with label complexity guarantees matching those of computationally inefficient algorithms in this setting as an important open question.

\begin{remark}[Unlabeled sample complexity of Algorithm~\ref{alg:main}]
Our active learning algorithm consumes a total of $O(\poly(d, \frac1\epsilon))$ unlabeled examples with high probability.
To see this, note that our sampling regions' bandwidths all satisfy $b = \Omega(\poly(\epsilon))$ under each of the three noise conditions and therefore have probability masses $\Omega(\poly(\epsilon))$ by Lemma~\ref{lem:1d-prob-ub}. This implies that with high probability, each active sampling invokes at most $O(\poly(\frac1\epsilon))$ calls to the unlabeled example oracle $\EX$; this implies that the total number of calls to $\EX$ is also $O(\poly(d, \frac1\epsilon))$.
\end{remark}

\begin{remark}[Attribute efficiency]
Our algorithm and analysis can be straightforwardly modified to achieve attribute efficiency \citep[e.g.][]{littlestone1987learning,blum1990learning,awasthi2016learning,zhang2018efficient}, i.e. achieving label complexities that exploit the sparsity of the Bayes-optimal linear classifier $w^\star$. Specifically, under the extra assumption that $w^\star$ is $s$-sparse ($s \ll d$), a variant of our algorithm achieves a guarantee similar to  Theorem~\ref{thm:main}, with the dimension $d$ in the label complexity bounds replaced with $s \polylog(d)$. We provide the details in Appendix~\ref{sec:sparsity}.
\end{remark}

\subsection{Proof sketch of Theorem~\ref{thm:main}}
We now outline the proof of Theorem~\ref{thm:main}. 
Recall that from Lemma~\ref{lem:initialize}, line~\ref{step:init} of the main algorithm calls \initialize to generate vector $v_1$ such that $\| v_1 - w^\star \| \leq \frac14$ with probability $1-\delta/2$. This step uses $\tilde{O}\rbr{\frac{d}{(1-2\eta)^2}}$, $\tilde{O}\rbr{d \cdot (1 + A^{\frac{2-2\alpha}{\alpha(2\alpha-1)}}) }$, and $\tilde{O}\rbr{ d \cdot (1 + (\frac{1}{B})^{\frac 2 \alpha})}$ label queries to $\Ocal$, under the three noise conditions respectively. 

For the guarantees in subsequent rounds, we rely on the following lemma, which shows that repeatedly applying \optimize yields local convergence guarantees.
Specifically, this lemma implies that, given an input halfspace $v_j \in \RR^d$ such that $\| v_j - w^\star \| \leq 4^{-j}$ at the beginning of the $j$-th iteration of the main algorithm (Algorithm~\ref{alg:main}), \optimize, with settings of bandwidth parameter $b_j$ and number of iterations $T_j$, outputs a refined halfspace $v_{j+1}$ such that its $\ell_2$ distance with $w^\star$ is at most $4^{-(j+1)}$ with high probability. 


\begin{lemma}
Fix $r \in (0,\frac{1}{16}]$, and $\delta \in (0,\frac1{10})$.
Suppose $D$ is $(2, L, R, U, \beta)$-well behaved and satisfies one of the three noise conditions; in addition, 
$b$ and $T$ are such that:
\begin{enumerate}
    \item $b = b_{\MNC}(\eta, r), T = T_{\MNC}(\eta, r)$, if $D$ satisfies $\eta$-Massart noise;
    \item $b = b_{\TNC}(A, \alpha, r), T = T_{\TNC}(A, \alpha, r)$, if $D$ satisfies $(A, \alpha)$-Tsybakov noise with $\alpha \in (\frac12, 1]$;
    \item $b = b_{\GTNC}(B, \alpha, r), T = T_{\GTNC}(B, \alpha, r)$, if $D$ satisfies $(B, \alpha)$-geometric Tsybakov noise.
\end{enumerate}
Then \optimize, with input initial $w_1$ satisfying $\| w_1 - w^\star \| \leq 4 r$, target proximity $r$, bandwidth $b$, number of iterations $T$, aggregation method $\agg = \avg$, outputs $\tilde{w}$ such that probability $1-\delta r$,
    $\| \tilde{w} - w^\star \| \leq r$.
    \label{lem:optimize-main-avg-only}
\end{lemma}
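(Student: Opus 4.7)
The plan is to convert the average-potential bound of Lemma~\ref{lem:optimize-g} into an average-angle bound via Lemma~\ref{lem:pot-lb-main-text}, and then turn that into an $\ell_2$ guarantee for the averaged output $\tilde w = \tfrac1T\sum_t \hat w_t$.

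First I would observe that the constraint set $\Kcal$ forces every iterate into an acute-angle regime: since $\|w_1 - w^\star\|\le 4r$ and $\|w_t - w_1\|\le 4r$, the triangle inequality gives $\|w_t - w^\star\|\le 8r\le 1/2$, hence $\|w_t\|\ge 1/2$ and $\inner{w_t}{w^\star} > 0$. In particular $\tilde\theta(w_t, w^\star)=\theta(w_t, w^\star)$ for every $t$, so whenever the hypothesis $\tilde\theta\gtrsim b$ of Lemma~\ref{lem:pot-lb-main-text} applies, it directly controls the true angle via $\pot$.

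Second, I would apply Lemma~\ref{lem:optimize-g} (valid since $b\le R/2$ under each of the three schedules) to conclude that, with probability at least $1-\delta r$,
\[
\frac1T\sum_{t=1}^T \pot(w_t) \;\le\; \tilde O\!\rbr{b + (b+\beta r)\sqrt{d/T}},
\]
absorbing polylogarithmic factors. Then I would invert the potential pointwise: for each $t$, either $\theta(w_t,w^\star)=O(b)$ is already small (covering the regime where Lemma~\ref{lem:pot-lb-main-text} does not apply), or the lemma yields $\theta(w_t,w^\star)\le \Phi(\pot(w_t))$, where $\Phi(x)=\tilde O(x/(1-2\eta))$ under Massart, $\Phi(x)=O((A/b)^{(1-\alpha)/\alpha}\,x)$ under $(A,\alpha)$-Tsybakov, and $\Phi(x)=\tilde O((x/B)^\alpha)$ in the small-angle branch of geometric Tsybakov. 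In the first two cases $\Phi$ is linear, so averaging commutes with it; in the third, $\Phi$ is concave, so Jensen's inequality yields the same direction of bound. Combining with
\[
\|\tilde w - w^\star\|\;\le\;\frac1T\sum_t\|\hat w_t - w^\star\|\;=\;\frac1T\sum_t 2\sin(\theta(w_t,w^\star)/2)\;\le\;\frac1T\sum_t\theta(w_t,w^\star)
\]
produces a single bound of the form $\|\tilde w - w^\star\|\le O(b)+\Phi\bigl(\tilde O(b+(b+\beta r)\sqrt{d/T})\bigr)$.

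Third, I would substitute the noise-specific schedules to verify $\|\tilde w - w^\star\|\le r$. The bandwidth $b$ is designed precisely so that the leading term $\Phi(b)$ equals $O(r)$: under Massart, $\Phi(b)\asymp b/(1-2\eta)=r$ from $b=\tilde\Theta((1-2\eta)r)$; under Tsybakov, $(A/b)^{(1-\alpha)/\alpha} b = A^{(1-\alpha)/\alpha}b^{(2\alpha-1)/\alpha}=r$ from the choice $b=\tilde\Theta((1/A)^{(1-\alpha)/(2\alpha-1)}r^{\alpha/(2\alpha-1)})$; and similarly $(b/B)^\alpha=r$ from $b=\tilde\Theta(Br^{1/\alpha})$ under geometric Tsybakov. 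The iteration counts $T_{\MNC}$, $T_{\TNC}$, $T_{\GTNC}$ are then exactly what is needed so that the remaining $\sqrt{d/T}$ error term, after being amplified by $\Phi$, is also $O(r)$; this reduces to checking that $T$ dominates $d/(1-2\eta)^2$, $d\cdot(A/r)^{2(1-\alpha)/(2\alpha-1)}$, and $d\cdot B^{-2}r^{-2(1-\alpha)/\alpha}$ respectively, which are exactly the prescribed magnitudes.

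The main obstacle I anticipate is keeping the case split around the Lemma~\ref{lem:pot-lb-main-text} threshold $\tilde\theta\gtrsim b$ clean, and ensuring the concave inversion in the geometric Tsybakov regime is only invoked where the iterates actually lie (below the crossover $\theta\lesssim B^{-\alpha/(1-\alpha)}$). Packaging everything as a single uniform pointwise bound $\theta_t\le O(b)+\Phi(\pot(w_t))$, so that Jensen (or just linearity) can be applied cleanly to the average, should make this bookkeeping go through.
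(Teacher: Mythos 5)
Your proposal rests on the same two pillars as the paper's proof --- the average-potential bound of Lemma~\ref{lem:optimize-g} and the potential-vs-angle lower bound of Lemma~\ref{lem:pot-lb} --- together with the same endgame (acute angles forced by the constraint set $\Kcal$, $\|\hat w_t - w^\star\|\le\theta(w_t,w^\star)$ for unit vectors, and convexity of the norm). Where you differ is the middle step: the paper never inverts $f_b$ on the average. Instead (Lemma~\ref{lem:agg-guarantee}) it applies Markov's inequality to the bound $\frac1T\sum_t\pot(w_t)\le\frac1{32}f_b(\frac r2)$, concluding that at most a $\frac1{32}$-fraction of iterates have $\pot(w_t)\ge f_b(\frac r2)$; those ``bad'' iterates contribute at most $16r$ each (via $\|w_t-w^\star\|\le 8r$ and item~\ref{item:l2-normalize} of Lemma~\ref{lem:angle-l2}), the rest contribute at most $\frac r2$, and the average is $\le r$. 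That argument needs only monotonicity of $f_b$, whereas your pointwise-inversion-plus-Jensen route needs structural properties of the inverse $\Phi=f_b^{-1}$.

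This is exactly where your sketch has a soft spot. In the geometric-Tsybakov case, $f_b(\theta)\propto\min\bigl(\theta,\,B\theta^{1/\alpha}\bigr)$, so its inverse is (up to constants) $\max\bigl(x,\,(x/B)^{\alpha}\bigr)$, which is \emph{not} globally concave: the derivative jumps up at the crossover $\theta\asymp B^{-\alpha/(1-\alpha)}$, so Jensen cannot be applied to $\Phi$ directly. Your proposed remedy --- restrict attention to iterates below the crossover --- is not guaranteed: the only a priori control is $\theta(w_t,w^\star)\le 8\pi r$, and whether $8\pi r\le B^{-\alpha/(1-\alpha)}$ depends on $B$. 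The fix is routine: bound $\Phi(x)\le c_1x+c_2x^{\alpha}$ and apply linearity to the first term and Jensen to the (genuinely concave) power term, or simply use the paper's Markov/case-split aggregation, which sidesteps the issue entirely. With that patch, plus the usual constant-factor bookkeeping that the $\tilde\Theta$ slack in $b_{\GTNC},T_{\GTNC}$ absorbs (you also need $b\lesssim rR$ so that the $O(b)$ term from the Lemma~\ref{lem:pot-lb} threshold $\tilde\theta\ge 4b/R$ stays below a constant fraction of $r$, which the schedules ensure), your argument goes through and your parameter verification matches the paper's.
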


The proof of Lemma~\ref{lem:optimize-main-avg-only} can be found at Appendix~\ref{sec:optimize}. 
Some intuitions on this lemma have been given in Section~\ref{subsec:optimize}, and we elaborate on its key ideas in greater detail here. Recall that 
Lemma~\ref{lem:optimize-g} shows that running \optimize gives an upper bound on $\frac 1 T \sum_{t=1}^T \pot(w_t)$, the average value of $\pot(w_t)$'s, in terms of $b$ and $T$. 
We set $b$ and $T$ differently under different noise conditions, so that $\frac 1 T \sum_{t=1}^T \pot(w_t)$ can be controlled at an appropriate level. 
Markov's Inequality implies that there is an overwhelming fraction ($\geq \frac{31}{32}$) of $w_t$'s with small $\pot(w_t)$ -  denote by $S$ the set of such $t$'s. Now, we conduct a case analysis:
\begin{enumerate}
\item For every $t$ in $S$, $\pot(w_t)$ is small. Recall that  Lemma~\ref{lem:pot-lb-main-text} shows that a small value of $\pot(w_t)$ implies a small value of $\tilde{\theta}(w_t, w^\star)$. 
In addition, the diameter of the constraint set $\Kcal$ is at most $8 r \leq \frac12$, and both $w_t$ and $w^\star$ are in $\Kcal$, so $\theta(w_t, w^\star)$ is acute (see Lemma~\ref{lem:angle-l2}) and is equal to $\tilde{\theta}(w_t, w^\star)$.
Hence, for all $t$ in $S$, $\theta(w_t, w^\star) \leq \frac{r}{2}$ and consequently, $\| \hat{w}_t - w^\star \| \leq \frac r 2$. 
\item On the other hand, for every $t$ in $[T] \setminus S$, we still have $w_t \in \Kcal$, so $\| \hat{w}_t - w^\star \|$ is  at most $16 r$. 
\end{enumerate}

By averaging over the upper bounds on $\| \hat{w}_t - w^\star \|$, and using the convexity of $\ell_2$ norm, we conclude that $\|\tilde{w} - w^\star \| = \| \frac1T \sum_{t=1}^T \hat{w}_t - w^\star \| \leq \frac1T \sum_{t=1}^T \| \hat{w}_t - w^\star \| \leq r$.


Equipped with the above initialization and local convergence guarantees, the proof of Theorem~\ref{thm:main} is now straightforward; its details can be found at Appendix~\ref{sec:proof-main}.


\section{Conclusions and open problems}
We provide an efficient active halfspace learning algorithm that can achieve new label complexity guarantees under Massart and Tsybakov noise conditions, under certain structural assumptions on the unlabeled data distribution. Specifically, our algorithm achieves a near-optimal label complexity under the Massart noise condition, and achieves new label complexity guarantees under two subfamilies of Tsybakov noise conditions. A key open problem is to develop efficient algorithms with label complexities matching those of computationally inefficient approaches under  $(A,\alpha)$-Tsybakov noise, for all $\alpha \in (0,1]$.
Another interesting open question is to design efficient active learning algorithms that can adapt to unknown noise conditions.


\paragraph{Acknowledgments.} We thank Yining Wang for helpful discussions on label complexity lower bounds in~\citep{wang2016noise} for active learning halfspaces under Tsybakov noise under the uniform distribution. We also thank the anonymous reviewers for their constructive feedback. 


    


\bibliography{learning}

\appendix

\section{Precise settings of parameters under different noise conditions}
\label{sec:params}
In this section, we provide precise settings of bandwidth function and sample size function, and the initial target excess error function, under the three noise conditions respectively. 
\begin{enumerate}
    \item Define $b_{\MNC}(\eta, r) = \tilde{\Theta}\rbr{ \min \rbr{ r R, \frac{(1-2\eta) r R^2 L}{U \beta} } }$, $T_{\MNC}(\eta, r) = \tilde{O}\rbr{ d \rbr{ \ln\frac 1 {\delta r} }^3 \cdot \rbr{\frac{U \beta^2}{(1-2\eta) R^2 L}}^2  } $, and $\epsilon_{\MNC}(\eta) = \tilde{O}\rbr{ (1-2\eta) L R^2}$.
    
    \item Define $b_{\TNC}(A, \alpha, r) = \tilde{\Theta}\rbr{ \min \rbr{ rR, \rbr{\frac{RL}{A}}^{\frac{1-\alpha}{2\alpha-1}} \rbr{\frac{R^2 L r}{U \beta} }^{\frac \alpha {2\alpha-1}} } }$, 
    
    \[ T_{\TNC}(A, \alpha, r) = \tilde{O} \rbr{ d \rbr{ \ln\frac 1 {\delta r} }^3 \max\rbr{ \rbr{ \frac{A}{\beta RLr} }^{\frac{2-2\alpha}{2\alpha-1}} \rbr{ \frac{U \beta^2}{R^2 L}}^{\frac{2\alpha}{2\alpha-1}}, \rbr{ \frac A {R^2 L r} }^{\frac{2-2\alpha}{\alpha}} \rbr{\frac{U \beta^2}{R^2 L} }^2 } }, \]
    and $\epsilon_{\TNC}(A, \alpha) = \tilde{O}\rbr{ \frac{(LR^2)^{\frac 1\alpha}}{A^{\frac{1-\alpha} \alpha}} }$.
    
    \item Define $b_{\GTNC}(B, \alpha, r) = \tilde{\Theta}\rbr{ \min\rbr{ \frac{RL}{U\beta},1 } \cdot \min\rbr{ R r, B (R r)^{\frac1\alpha} } }$,
    \[ T_{\GTNC}(B, \alpha, r) = \tilde{O}\rbr{  d \rbr{ \ln\frac 1 {\delta r} }^3 \max\rbr{ \rbr{ \frac{\beta^2 U}{R^2 L} }^2, \rbr{ \frac{\beta^2 U}{B R L} }^2 \frac{1}{R^{\frac 2 \alpha} r^{\frac{2-2\alpha}{\alpha}}}  }  }, \]
    and $\epsilon_{\GTNC}(B, \alpha) = \tilde{O}\rbr{ B (L R^2)^{\frac 1\alpha} \rbr{ \frac 1 {U \beta} }^{\frac {1-\alpha} \alpha} }$.
\end{enumerate}

\section{Proof of Lemma~\ref{lem:pot-lb-main-text}}
\label{sec:pot}

Recall that $\pot(w) = \EE_{D_{\hat{w},b}} \sbr{ (1-2\eta(x)) \abr{\inner{w^\star}{x}} }$. We first present a more precise version of Lemma~\ref{lem:pot-lb-main-text} here.



\begin{lemma}[Restatement of Lemma~\ref{lem:pot-lb-main-text}]
Suppose $D_X$ is $(2, L, R, U, \beta)$-well behaved, and $w$ is a vector with $\tilde{\theta}(w, w^\star) =: \tilde{\theta} \geq \frac{4b}{R}$. Then:
\begin{enumerate}
    \item if $D$ satisfies $\eta$-Massart noise condition, $\pot(w) \geq \frac{(1-2\eta) R^2 L}{128 U \beta \ln\frac{2}{b U \beta}} \tilde{\theta} = \tilde{\Omega}( (1-2\eta) \tilde{\theta} )$.
    \item If $D$ satisfies $(A, \alpha)$-Tsybakov noise condition, $\pot(w) \geq     \frac{  (\frac{R b L}{8 A})^{\frac{1-\alpha}{\alpha}} R^2 L }{256 U \beta \ln\frac{2}{b U \beta}} \tilde{\theta} = \Omega( (\frac b A)^{\frac{1-\alpha}{\alpha}} \tilde{\theta} )$.
    \item If $D$ satisfies $(B, \alpha)$-geometric Tsybakov noise condition, then $\pot(w) \geq     \frac{RL}{ 16 U \beta \ln\frac{2}{b U \beta}} \cdot \min\rbr{ \frac{R \tilde{\theta} }{8}, B ( \frac{R \tilde{\theta} }{8})^{\frac{1}{\alpha}} }
    =
    \tilde{\Omega}( \min( \tilde{\theta}, B\tilde{\theta}^{\frac{1}{\alpha}}) )
    $.
\end{enumerate}
\label{lem:pot-lb}
\end{lemma}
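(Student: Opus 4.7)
My plan is to lower bound the ratio $\pot(w) = \EE_{D_{\hat{w},b}}\sbr{(1-2\eta(x))\abr{\inner{w^\star}{x}}}$ by separately controlling the numerator (an integral against $D_X$, restricted to a well-chosen subregion of the band $B_{\hat{w},b}$) and the denominator $\PP_{D_X}(B_{\hat{w},b})$. Everything can be done in the $2$-dimensional subspace $V = \spn(\hat{w}, w^\star)$, so that the well-behavedness bounds on $p_V$ apply directly.

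Setting up coordinates, I would choose an orthonormal basis $(\hat{w}, u_2)$ of $V$ with $w^\star = \cos\theta \cdot \hat{w} + \sin\theta \cdot u_2$ and $\sin\theta = \sin\tilde{\theta} \geq 0$; in these coordinates $\inner{\hat{w}}{x} = z_1$ and $\inner{w^\star}{x} = \cos\theta \cdot z_1 + \sin\theta \cdot z_2$. I would then take the ``good slice'' $E = \cbr{x : |z_1| \leq b,\ z_2 \in [\pi R/4,\, 7R/8]}$. Because $\tilde{\theta} \in [0, \pi/2]$ by definition, the hypothesis $\tilde{\theta} \geq 4b/R$ forces $b \leq \pi R/8$, so every point of $E$ lies in the disk $\|x_V\|_2 \leq R$; this gives $p_V \geq L$ on $E$ and hence $\PP_{D_X}(E) = \Omega(LbR)$. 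Using $\sin\tilde{\theta} \geq (2/\pi)\tilde{\theta}$ and $|\cos\theta \cdot z_1| \leq b$, a direct estimate yields $\inner{w^\star}{x} \geq \sin\tilde{\theta} \cdot \pi R/4 - b \geq \tilde{\theta} R/2 - b \geq \tilde{\theta} R/4$ throughout $E$, where the last inequality uses $\tilde{\theta} \geq 4b/R$. For the denominator, for any $T > 0$, splitting $\cbr{|z_1|\leq b}$ by whether $|z_2| \leq T$, applying $p_V \leq U$ on the inner piece and the sub-exponential tail $\PP(|\inner{u_2}{x}| > T) \leq e^{1-T/\beta}$ on the outer piece, gives $\PP_{D_X}(B_{\hat{w},b}) \leq 4bUT + e^{1-T/\beta}$; choosing $T = \Theta(\beta\ln(2/(bU\beta)))$ yields $\PP_{D_X}(B_{\hat{w},b}) = O(bU\beta \ln(2/(bU\beta)))$.

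With the geometric scaffolding in place, the remainder specializes to each noise condition by lower bounding $(1-2\eta(x))\abr{\inner{w^\star}{x}}$ on $E$ and integrating: (a) under $\eta$-Massart, use $1-2\eta(x) \geq 1-2\eta$ pointwise; (b) under $(A,\alpha)$-Tsybakov, restrict to the sub-event $\cbr{1-2\eta(x) > t}$, whose complement has mass at most $O(A)\cdot t^{\alpha/(1-\alpha)}$, and pick $t = \Theta((LbR/A)^{(1-\alpha)/\alpha})$ so the sub-event retains at least half of $\PP_{D_X}(E)$; (c) under $(B,\alpha)$-geometric Tsybakov, plug the pointwise bound $(1-2\eta(x))\abr{\inner{w^\star}{x}} \geq \min(\abr{\inner{w^\star}{x}}/2,\, B\abr{\inner{w^\star}{x}}^{1/\alpha})$ into the integral and, since $1/\alpha \geq 1$, replace $\abr{\inner{w^\star}{x}}$ by its lower bound $\tilde{\theta} R/4$ inside the $\min$. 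The main technical balancing is in the Tsybakov case, where choosing $t$ against $\PP_{D_X}(E)$ produces exactly the $(RbL/A)^{(1-\alpha)/\alpha}$ factor appearing in the statement; the rest is tracking numerical constants into the $\tilde{\Omega}$.
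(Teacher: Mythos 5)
Your proposal is correct and follows essentially the same route as the paper's proof: both arguments lower-bound the numerator of $\pot(w)$ by integrating over a subregion of the band of probability mass $\Omega(LbR)$ on which $|\langle w^\star, x\rangle| \geq \Omega(R\tilde{\theta})$ (the paper uses a parallelogram defined by the two inner products and a WLOG replacement of $w$ by $-w$ for obtuse angles, while your rectangle in explicit $(\hat{w},u_2)$-coordinates absorbs that case via $|\cos\theta\, z_1|\leq b$ and $\sin\theta=\sin\tilde\theta$), upper-bound the band's mass by $O\bigl(bU\beta\ln\frac{2}{bU\beta}\bigr)$ exactly as in the paper's Lemma~\ref{lem:1d-prob-ub}, and then carry out the identical three-case analysis (pointwise $1-2\eta(x)\geq 1-2\eta$ for Massart, thresholding at $t$ with $At^{\alpha/(1-\alpha)}\asymp LbR$ for Tsybakov, and the pointwise $\min$ bound for geometric Tsybakov). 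The numerical constants produced by your region differ slightly from $128$, $256$, $16$, but the stated $\Omega$ and $\tilde{\Omega}$ forms follow verbatim.
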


\begin{proof}
Without loss of generality, in subsequent proof, we assume that $\theta(w, w^\star) \leq \frac{\pi}{2}$. This is because, if $\theta(w, w^\star) > \frac{\pi}{2}$, then we can consider $-w$, which satisfies that $\pot(-w) = \pot(w)$ and $\tilde{\theta}(w, w^\star) = \tilde{\theta}(-w, w^\star)$. 



Define region $R_1 := \cbr{x \in \RR^d: \inner{w}{x} \in [0, b], \inner{w^\star}{x} \in [\frac{R \sin \theta}{4}, \frac{R \sin\theta}{2}] }$. The following claim lower bounds the probability of this region.
\begin{claim}
$\PP_{x \sim D_X}(x \in R_1) 
\geq 
\frac{R b L}{4}$.
\end{claim}
\begin{proof}
We project $x$, $w$ and $w^\star$ onto the 2-dimensional subspace $V$
spanned by $\cbr{w, w^\star}$; define $\tilde{x}$, $\tilde{w}$, $\tilde{w}^\star$ $\in \RR^2$ to be the coordinates of their projections. 
Denote by $\tilde{D}_X$ the distribution of $\tilde{x}$, and denote by its probability density function $p_V$.
Without loss of generality, let $\tilde{w} = (0,1)$ and $\tilde{w}^\star = (-\sin\theta, \cos\theta)$. The projection of region $R_1$ onto this 2-d space is a parallelogram:
\[ 
\tilde{R}_1 = \cbr{\tilde{x} \in \RR^2: \inner{\tilde{w}}{\tilde{x}} \in [0, b], \inner{\tilde{w}^\star}{\tilde{x}} \in [\frac{R \sin \theta}{4}, \frac{R \sin\theta}{2}] }.
\]
The four vertices of $\tilde{R}_1$ are: 
$A = (-\frac R 4 + \frac{b}{\tan \theta}, b)$,
$B = (-\frac R 2 + \frac{b}{\tan \theta}, b)$,
$C = (-\frac{R}{4}, 0)$, $D = (-\frac{R}{2}, 0)$.
Here are some of the key properties of the parallelogram $\tilde{R}_1$.
As $\abr{OC} = \abr{CD} = \frac{R}{4}$, $\abr{BD} = \frac{b}{\sin \theta} \leq \frac{b}{\frac12 \theta} \leq \frac{R}{2}$, so any point $x$ within $\tilde{R}_1$ is at most $R$ away from the origin. 
As $D$ is $(2, L, R, U, \beta)$-well-behaved, we have that for all $\tilde{x} \in \tilde{R}_1$, $p_V(\tilde{x}) \geq L$. 
This implies that
\[
\PP_{x \sim D_X}(x \in R_1)
= \PP_{\tilde{x} \sim \tilde{D}_X }(\tilde{x} \in  \tilde{R}_1)
\geq \int_{\tilde{R}_1} p_V(\tilde{x}) d\tilde{x}
\geq 
L \cdot b \cdot \frac{R}{4}.
\qedhere
\]

\end{proof}


In addition, by the definition of $\pot$, we have
\begin{align*}
    \pot(w) 
    = & 
     \frac{\EE_{x \sim D_X} \sbr{ (1-2\eta(x)) \abr{\inner{w^\star}{x}} \ind( \abr{\inner{w}{x}} \leq b )} }{\PP_{x \sim D_X} (  \abr{\inner{w}{x} }\leq b ) } \\
    \geq & \frac{\EE_{x \sim D_X} \sbr{ (1-2\eta(x)) \abr{\inner{w^\star}{x}} \ind(x \in R_1)} }{\PP_{x \sim D_X} (  \abr{\inner{w}{x} }\leq b ) }
\end{align*}

Now we consider each noise condition separately. 
\begin{enumerate}
    \item If $D$ satisfies $\eta$-Massart noise condition, 
    then for all $x$, $1-2\eta(x) \geq 1-2\eta$; therefore,
    \[
    \EE_{x \sim D_X} \sbr{ (1-2\eta(x)) \abr{\inner{w^\star}{x}} \ind(x \in R_1)}  \geq (1-2\eta) \frac{R \sin \theta}{4} \cdot (\frac{RbL}{4})
    \]
    Thus we have 
    \[
    \pot(w) \geq \frac{(1-2\eta) \sin \theta \cdot b R^2 L}{64 b U \beta \ln\frac{2}{b U \beta}} \geq \frac{(1-2\eta) R^2 L }{128 U \beta \ln\frac{2}{b U \beta}} \theta,
    \]
    where the first inequality is from Lemma~\ref{lem:1d-prob-ub}, the second inequality is from the elementary fact that $\sin \theta \geq \frac 12 \theta$.
    
     \item Suppose $D$ satisfies $(A, \alpha)$-Tsybakov noise condition. Let $t$ satisfy $A t^{\frac{\alpha}{1-\alpha}} = \frac{Rb L}{8}$, or equivalently, 
     $t = (\frac{R b L}{8 A})^{\frac{1-\alpha}{\alpha}}$, we have:
     \begin{align*}
         \EE_{x \sim D_X} \sbr{ (1-2\eta(x)) \abr{\inner{w^\star}{x}} \ind(x \in R_1)}  &\geq 
     t \cdot \EE_{x \sim D_X} \sbr{ \ind(1-2\eta(x) \geq t) \abr{\inner{w^\star}{x}} \ind(x \in R_1)} \\
     &\geq  t \cdot \frac{R \sin \theta}{4} \cdot \EE_{x \sim D_X} \sbr{ \ind(1-2\eta(x) \geq t) \ind(x \in R_1)} \\
     &\geq  t \cdot\frac{R \sin \theta}{4} (\PP_{x \sim D_X} \sbr{\ind(x \in R_1)} - \PP_{x \sim D_X} \sbr{\ind(1-2\eta(x) \leq t)} ) \\
     &\geq  t \cdot \frac{R \sin \theta}{4} (\frac{RbL}{4} -  \frac{RbL}{8}) \\
     &=  \frac{1}{32} t \sin\theta \cdot b R^2 L.
     \end{align*}
     
     Therefore, using the elementary fact that $\sin\theta \geq \frac12 \theta$ and Lemma~\ref{lem:1d-prob-ub}, we get,
     \[
    \pot(w) \geq \frac{t \sin \theta b R^2 L}{128 b U \beta \ln\frac{2}{b U \beta}} \geq \frac{t R^2 L}{256 U \beta \ln\frac{2}{b U \beta}} \theta
    =
    \frac{  (\frac{R b L}{8 A})^{\frac{1-\alpha}{\alpha}} R^2 L }{256 U \beta \ln\frac{2}{b U \beta}} \theta.
    \]
    \item If $D$ satisfies $(B, \alpha)$-geometric Tsybakov noise condition, we have that $1-2\eta(x) \geq \min( 1, 2B |\inner{w^\star}{x}|^{\frac{1-\alpha}{\alpha}})$. Therefore,
    \begin{align*}
    &\EE_{x \sim D_X} \sbr{ (1-2\eta(x)) \abr{\inner{w^\star}{x}} \ind(x \in R_1)}  \\
    &\geq 
    \EE_{x \sim D_X} \sbr{ \abr{\inner{w^\star}{x}} \min \cbr{ 1, 2 B |\inner{w^\star}{x}|^{\frac{1-\alpha}{\alpha}}} \ind(x \in R_1)}    \\
    &\geq 
    \EE_{x \sim D_X}  \sbr{  \min\rbr{ \frac{R\sin\theta}{4}, B ( \frac{R\sin\theta}{4})^{\frac{1}{\alpha}}  }  \ind(x \in R_1)}    \\
    &\geq 
    \min\rbr{ \frac{R\theta }{8}, B ( \frac{R\theta }{8})^{\frac{1}{\alpha}} } \cdot (\frac{RbL}{4})
    \end{align*}
    where the first inequality is from the lower bound on $1-2\eta(x)$ under the $(B,\alpha)$-geometric Tsybakov noise condition; the second inequality is from the fact that for all $x$ in $R_1$, $|\inner{w^\star}{x}| \geq \frac{R \sin\theta}{4}$; the third inequality uses the claim that $\PP( x \in R_1) \geq \frac{R b L} 4$, and the fact that $\sin\theta \geq \frac\theta2$.
    
    Therefore, using Lemma~\ref{lem:1d-prob-ub},
    \[
    \pot(w) \geq 
    \frac{\min\rbr{ \frac{R\theta }{8}, B ( \frac{R\theta }{8})^{\frac{1}{\alpha}} } \cdot (\frac{RbL}{4})}{ 4 b U \beta \ln\frac{2}{b U \beta}} 
    =
    \frac{RL}{ 16 U \beta \ln\frac{8}{b U \beta}} \cdot \min\rbr{ \frac{R\theta }{8}, B ( \frac{R\theta }{8})^{\frac{1}{\alpha}} }.
    \qedhere
    \]
\end{enumerate}
\end{proof}

\section{Proof of Theorem~\ref{thm:main}}
\label{sec:proof-main}
\begin{proof}[Proof of Theorem~\ref{thm:main}]
We first show that Algorithm~\ref{alg:main} achieves PAC learning guarantee.
To this end, we show
by induction that for all $j \in \cbr{0, 1,, \ldots, k_\epsilon}$, there exists some event $E_j$ such that $\PP(E_j) \geq 1 - \frac{\delta}{2} - \sum_{l=1}^{j} \frac{\delta}{4^{l+1}}$, in which $\| v_{j+1} - w^\star \| \leq 4^{-(j+1)}$.

\paragraph{Base case.} For $j=0$,  Lemma~\ref{lem:initialize} gives that there exists some event $E_0$ that happens with probability at least $1 - \delta / 2$, in which $\| v_{i,1} - w^\star \| \leq \frac14$.

\paragraph{Inductive case.} Consider $j \geq 1$. Assume that there exists some event $E_{j-1}$ such that $\PP(E_{j-1}) \geq 1 - \frac\delta 2 - \sum_{l=1}^{j-1} \frac{\delta}{4^{l+1}}$, in which $\| v_{j} - w^\star \| \leq 4^{-j}$; conditioned on $E_{j-1}$ happening, from item~\ref{item:avg} of Lemma~\ref{lem:optimize-main}, there exists some event $F_j$ such that $\PP(F_j \mid E_{j-1}) \geq 1 - \frac{\delta}{4^{j+1}}$, under which $\| v_{j+1} - w^\star\| \leq 4^{-(j+1)}$.

Now define $E_j = F_j \cap E_{j-1}$. We have $\PP(E_j) \geq \PP(E_{j-1}) \cdot (1 - \frac{\delta}{4^{j+1}}) \geq 1 - \frac\delta 2 - \sum_{l=1}^{j} \frac{\delta}{4^{l+1}}$; in addition, on event $E_j$, we have $\| v_{j+1} - w^\star\| \leq 4^{-(j+1)}$ holding true. 

This completes the induction. We henceforth condition on event $E_{k_\epsilon}$ happening, which has probability $\geq 1 - \frac{\delta}{2} - \sum_{l=1}^{k_\epsilon} \frac{\delta}{4^{l+1}} \geq 1 - \delta$. 
In this event, the returned vector $\tilde{v} = v_{k_\epsilon+1}$ is such that 
$\| \tilde{v} - w^\star \| \leq r_\epsilon = \frac{\epsilon}{32 U \beta^2 (\ln\frac{12}{\epsilon})^2}$.
Applying item~\ref{item:angle-l2} of Lemma~\ref{lem:angle-l2}, this gives that 
$\theta(\tilde{v}, w^\star) \leq \pi r_\epsilon \leq \frac{\epsilon}{8 U \beta^2 (\ln\frac{12}{\epsilon})^2}$. Now, applying item~\ref{item:prob-angle-ub} of Lemma~\ref{lem:prob-angle} with $\gamma = \frac{\epsilon}{2}$, we have
\[
\PP_{x \sim D_X}(h_{\tilde{v}}(x) \neq h_{w^\star}(x)) 
\leq 4 U \beta^2 \rbr{\ln\frac{12}{\epsilon}}^2 \theta(\tilde{v}, w^\star) + \frac {\epsilon} 2
\leq 
\epsilon.
\]
Therefore, using triangle inequality, we get
\[
\err(h_{\tilde{v}}, D) - \err(h_{w^\star}, D) \leq 
\PP_{x \sim D_X}(h_{\tilde{v}}(x) \neq h_{w^\star}(x)) 
\leq
\epsilon.
\]

We now calculate the total label complexity of Algorithm~\ref{alg:main} under the three noise conditions respectively.
\hide{
\begin{enumerate}
    \item Under the $\eta$-Massart noise condition, by Lemma~\ref{lem:initialize}, the initialization stage uses 
    $M_1 = \tilde{O}(\frac{d}{(1-2\eta)^2} \rbr{ \frac{ U \beta^2 }{ R^2 L }  }^2 )$ label queries. The total number of label queries in subsequent stages is $M_2 = \sum_{j=1}^{k_\epsilon} T_j = \tilde{O}(T_{k_{\epsilon}}) = \tilde{O}( T_{\MNC}(\eta, \epsilon) ) = \tilde{O}(\frac{d}{(1-2\eta)^2} \rbr{ \frac{ U \beta^2 }{ R^2 L }  }^2 )$.
    
    Hence the total number of label queries used by Algorithm~\ref{alg:main} is 
    $M_1 + M_2 \leq \tilde{O}(\frac{d}{(1-2\eta)^2} \rbr{ \frac{ U \beta^2 }{ R^2 L }  }^2 )$.
    
    \item Under the $(A, \alpha)$-Tsybakov noise condition, by Lemma~\ref{lem:initialize}, the initialization stage uses 
    \[
    M_1 
    =
    \tilde{O}(d \max ( ( \frac{A^{\frac1\alpha}  \beta U}{R L (R^2 L)^{\frac1\alpha}})^{\frac{2-2\alpha}{2\alpha-1}} (\frac{U \beta^2}{R^2 L})^{\frac{2\alpha}{2\alpha-1}} , (\frac{A^{\frac 1 \alpha} \beta^2 U}{ (R^2 L)^{\frac{\alpha+1}{\alpha}} })^{\frac{2-2\alpha}{\alpha}} (\frac{U \beta^2}{R^2 L})^2 ) )
    \]
    label queries. The total number of label queries in subsequent stages is $M_2 = \sum_{j=1}^{k_\epsilon} T_j = \tilde{O}(T_{k_{\epsilon}}) = \tilde{O}( T_{\TNC}(A, \alpha, \epsilon) )$, which implies that
    \[
    M_2 = 
    \tilde{O}( d \max( (\frac{A}{\beta RL \epsilon})^{\frac{2-2\alpha}{2\alpha-1}} (\frac{U \beta^2}{R^2 L})^{\frac{2\alpha}{2\alpha-1}}, (\frac A {R^2 L \epsilon})^{\frac{2-2\alpha}{\alpha}} (\frac{U \beta^2}{R^2 L})^2 ) )
    \]
    Hence the total number of label queries by Algorithm~\ref{alg:main} is 
    \begin{align*}
    M_1 + M_2 \leq  
    \tilde{O}& \left(  d \max ( ( \frac{A^{\frac1\alpha}  \beta U}{R L (R^2 L)^{\frac1\alpha}})^{\frac{2-2\alpha}{2\alpha-1}} (\frac{U \beta^2}{R^2 L})^{\frac{2\alpha}{2\alpha-1}} , (\frac{A^{\frac 1 \alpha} \beta^2 U}{ (R^2 L)^{\frac{\alpha+1}{\alpha}} })^{\frac{2-2\alpha}{\alpha}} (\frac{U \beta^2}{R^2 L})^2, \right. 
    \\
    & 
    \left. (\frac{A}{\beta RL \epsilon})^{\frac{2-2\alpha}{2\alpha-1}} (\frac{U \beta^2}{R^2 L})^{\frac{2\alpha}{2\alpha-1}}, (\frac A {R^2 L \epsilon})^{\frac{2-2\alpha}{\alpha}} (\frac{U \beta^2}{R^2 L})^2 )  \right)
    \end{align*}
    
    \item Under the $(B,\alpha)$-geometric Tsybakov noise condition, by Lemma~\ref{lem:initialize}, the initialization stage uses
    \[
    M_1 = \tilde{O}( T_{\GTNC}(A, \alpha, r_0) ) = \tilde{O}( d \max( (\frac{\beta^2  U}{R^2 L})^2, (\frac{\beta^2 U}{B R L})^2 (\frac{\beta}{B})^{\frac{2-2\alpha}{\alpha}} (\frac{U \beta}{L R^2})^{\frac{2-2\alpha}{\alpha^2}} \frac{1}{R^{\frac 2 \alpha}}) ).
    \]
    The total number of label queries in subsequent stages is $M_2 = \sum_{j=1}^{k_\epsilon} T_j = \tilde{O}(T_{k_{\epsilon}}) = \tilde{O}( T_{\GTNC}(A, \alpha, \epsilon) )$, which implies that
    \[
    M_2 = \tilde{O}(  d \max( (\frac{\beta^2 U}{R^2 L})^2, (\frac{\beta^2 U}{B R L})^2 \frac{1}{R^{\frac 2 \alpha} \epsilon^{\frac{2-2\alpha}{\alpha}}}  )  ).
    \]
    Hence the total number of label queries by Algorithm~\ref{alg:main} is
    \begin{align*}
    M_1 + M_2
    \leq \tilde{O}& \left( 
    d \max( (\frac{\beta^2  U}{R^2 L})^2, (\frac{\beta^2 U}{B R L})^2 (\frac{\beta}{B})^{\frac{2-2\alpha}{\alpha}} (\frac{U \beta}{L R^2})^{\frac{2-2\alpha}{\alpha^2}} \frac{1}{R^{\frac 2 \alpha}},
     (\frac{\beta^2 U}{B R L})^2 \frac{1}{R^{\frac 2 \alpha} \epsilon^{\frac{2-2\alpha}{\alpha}}})
    \right).
    \end{align*}
\end{enumerate}
}
\begin{enumerate}
    \item Under the $\eta$-Massart noise condition, by Lemma~\ref{lem:initialize}, the initialization stage uses 
    $M_1 = \tilde{O}(\frac{d}{(1-2\eta)^2} )$ label queries. The total number of label queries in subsequent stages is $M_2 = \sum_{j=1}^{k_\epsilon} T_j = \tilde{O}(T_{k_{\epsilon}}) = \tilde{O}( T_{\MNC}(\eta, \epsilon) ) = \tilde{O}(\frac{d}{(1-2\eta)^2} \polylog(\frac1\epsilon) )$.
    
    Hence the total number of label queries used by Algorithm~\ref{alg:main} is 
    $M_1 + M_2 \leq \tilde{O}(\frac{d}{(1-2\eta)^2} \polylog(\frac1\epsilon) )$.
    
    \item Under the $(A, \alpha)$-Tsybakov noise condition, by Lemma~\ref{lem:initialize}, the initialization stage uses 
    \[
    M_1 
    =
    \tilde{O}\rbr{ d (1 + A^{\frac{2-2\alpha}{\alpha(2\alpha-1)}}) }
    \]
    label queries. The total number of label queries in subsequent stages is $M_2 = \sum_{j=1}^{k_\epsilon} T_j = \tilde{O}(T_{k_{\epsilon}}) = \tilde{O}( T_{\TNC}(A, \alpha, \epsilon) )$, which implies that
    \[
    M_2 = 
    \tilde{O}\rbr{ d \rbr{ \rbr{\frac{A}{\epsilon}}^{\frac{2-2\alpha}{\alpha}} + \rbr{\frac{A}{\epsilon}}^{\frac{2-2\alpha}{2\alpha-1}}} }
    =
    \tilde{O}\rbr{ d \rbr{ 1 + \rbr{\frac{A}{\epsilon}}^{\frac{2-2\alpha}{2\alpha-1}} } },
    \]
    where we use the fact that $0 \leq \frac{2-2\alpha}{\alpha} \leq \frac{2-2\alpha}{2\alpha-1}$, so $(\frac{A}{\epsilon})^{\frac{2-2\alpha}{\alpha}} \leq \max(1, (\frac{A}{\epsilon})^{\frac{2-2\alpha}{2\alpha-1}} )$ .
    
    Hence the total number of label queries by Algorithm~\ref{alg:main} is 
    \begin{align*}
    M_1 + M_2 \leq  
    \tilde{O}& \rbr{  d \rbr{ 1 + A^{\frac{2-2\alpha}{\alpha(2\alpha-1)}} + \rbr{\frac{A}{\epsilon}}^{\frac{2-2\alpha}{2\alpha-1}} }  }.
    \end{align*}
    
    \item Under the $(B,\alpha)$-geometric Tsybabov noise condition, by Lemma~\ref{lem:initialize}, the initialization stage uses
    \[
    M_1 = \tilde{O}( T_{\GTNC}(A, \alpha, r_0) ) = \tilde{O}\rbr{ d \rbr{1 + \rbr{ \frac{1}{B} }^{\frac{2}{\alpha}} } }
    \]
    label queries. 
    The total number of label queries in subsequent stages is $M_2 = \sum_{j=1}^{k_\epsilon} T_j = \tilde{O}(T_{k_{\epsilon}}) = \tilde{O}( T_{\GTNC}(A, \alpha, \epsilon) )$, which implies that
    \[
    M_2 = \tilde{O}\rbr{  d \rbr{1 + \frac{1}{B^2 }  \rbr{ \frac{1}{\epsilon} }^{\frac{2-2\alpha}{\alpha}} }  }.
    \]
    Hence the total number of label queries by Algorithm~\ref{alg:main} is
    \begin{align*}
    M_1 + M_2
    \leq \tilde{O}& \left( 
    d \rbr{ 1 + \rbr{ \frac{1}{B} }^{\frac{2}{\alpha}} + \frac{1}{B^2 } \rbr{ \frac{1}{\epsilon} }^{\frac{2-2\alpha}{\alpha}} }
    \right).
    \end{align*}
\end{enumerate}

%
This completes the label complexity upper bound proof.
\end{proof}


\section{Guarantees of \initialize: proof of Lemma~\ref{lem:initialize}}
\label{sec:initialize}

\begin{proof}[Proof of Lemma~\ref{lem:initialize}]
We first claim that with probability $1-\delta/2$, \initialize outputs $\hat{u}$ such that $\| \hat{u} - w^\star \| \leq \frac14$. This is a direct consequence of Claims~\ref{claim:stage1} and~\ref{claim:stage2} below, along with union bound.

We now calculate the total label complexity of \initialize. 
For the first stage (lines~\ref{line:stage-1-start} to~\ref{line:stage-1-end}), the total number of label queries is at most
$
L_1 = N \cdot \sum_{j=0}^{k_0} T_j
\leq 
\tilde{O}(T_{k_0})
$. 
For the second stage, the total number of label queries is at most
$
L_2 = O \rbr{ \frac{1}{\epsilon_0^2} \ln\frac{N}{\delta} }
$.
We now instantiate the label complexity bounds under the three noise conditions respectively:
\hide{
\begin{enumerate}
    \item Under the $\eta$-Massart noise condition, $\epsilon_0 =\epsilon_{\MNC}(\eta) = \tilde{O}((1-2\eta) L R^2)$; 
    this implies that $r_0 = \tilde{O}(\frac{\epsilon_0}{U \beta^2}) = \tilde{O}(\frac{(1-2\eta) L R^2}{U \beta^2})$.
    Therefore,
    $L_1 \leq \tilde{O}( T_k ) = \tilde{O} ( T_{\MNC}(\eta, r_0) ) = \tilde{O}( d \cdot (\frac{U \beta^2}{(1-2\eta) R^2 L})^2 )$; in addition, $L_2 \leq \tilde{O}(\frac{1}{\epsilon_0^2}) \leq \tilde{O}(\frac{1}{(1-2\eta)^2 (R^2 L)^2})$. 
    
    Hence the total number of label queries used by \initialize is $L_1 + L_2 \leq \tilde{O}(\frac{d}{(1-2\eta)^2} \rbr{ \frac{ U \beta^2 }{ R^2 L }  }^2 )$, where the second inequality uses the fact that $U \beta^2 = \tilde{\Omega}(1)$.
    
    \item Under the $(A, \alpha)$-Tsybakov noise condition, $\epsilon_0 = \epsilon_{\TNC}(A, \alpha) = \tilde{O}(\frac{ (L R^2)^{\frac 1\alpha} }{ A^{\frac{1-\alpha}{\alpha}} })$. This implies that $r_0 = \tilde{O}( \frac{\epsilon_0}{U \beta^2} ) = 
    \tilde{O}(\frac{ (L R^2)^{\frac 1 \alpha} }{ U \beta^2 A^{\frac{1-\alpha}{\alpha}}  })$. Therefore, $L_1 \leq \tilde{O}(T_k) = \tilde{O}( T_{\TNC}(A, \alpha, r_0) ) = \tilde{O}(d \max ( ( \frac{A^{\frac1\alpha } \beta U}{R L (R^2 L)^{\frac1\alpha}})^{\frac{2-2\alpha}{2\alpha-1}} (\frac{U \beta^2}{R^2 L})^{\frac{2\alpha}{2\alpha-1}} , (\frac{A^{\frac 1 \alpha} \beta^2 U}{ (R^2 L)^{\frac{\alpha+1}{\alpha}} })^{\frac{2-2\alpha}{\alpha}} (\frac{U \beta^2}{R^2 L})^2 ) )$; in addition, $L_2 \leq \tilde{O}(\frac{1}{\epsilon_0^2}) \leq \tilde{O}(\frac{A^{\frac{2(1-\alpha)}{\alpha}} }{ (L R^2)^{\frac 2 \alpha} })$.
    
    Hence the total number of label queries used by \initialize is 
    \[ L_1 + L_2 \leq \tilde{O}(d \max ( ( \frac{A^{\frac1\alpha}  \beta U}{R L (R^2 L)^{\frac1\alpha}})^{\frac{2-2\alpha}{2\alpha-1}} (\frac{U \beta^2}{R^2 L})^{\frac{2\alpha}{2\alpha-1}} , (\frac{A^{\frac 1 \alpha} \beta^2 U}{ (R^2 L)^{\frac{\alpha+1}{\alpha}} })^{\frac{2-2\alpha}{\alpha}} (\frac{U \beta^2}{R^2 L})^2 ) ), \]
    where the second inequality uses the fact that $\frac{A^{\frac{2(1-\alpha)}{\alpha}}}{ (L R^2)^{\frac 2 \alpha} } = \tilde{O}((\frac{A^{\frac 1 \alpha} \beta^2}{ (R^2 L)^{\frac{\alpha+1}{\alpha}} })^{\frac{2-2\alpha}{\alpha}} (\frac{U \beta^2}{R^2 L})^2 ) ))$.
    
     \item Under the $(B,\alpha)$-geometric Tsybakov noise condition, $\epsilon_0 = \epsilon_{\GTNC}(B, \alpha) = \tilde{O}(\frac{ B (L R^2)^{\frac 1\alpha} }{ (U \beta)^{\frac{1-\alpha}{\alpha}} })$. This implies that $r_0 = \tilde{O}(\frac{\epsilon_0}{U \beta^2}) = \tilde{O}( \frac{B}{\beta} (\frac{L R^2}{U \beta})^{\frac1\alpha})$. Therefore, $L_1 \leq \tilde{O}(T_k) = \tilde{O}( T_{\GTNC}(A, \alpha, r_0) ) = \tilde{O}( d \max( (\frac{\beta^2  U}{R^2 L})^2, (\frac{\beta^2 U}{B R L})^2 (\frac{\beta}{B})^{\frac{2-2\alpha}{\alpha}} (\frac{U \beta}{L R^2})^{\frac{2-2\alpha}{\alpha^2}} \frac{1}{R^{\frac 2 \alpha}}) )$; in addition, $L_2 \leq \tilde{O}(\frac{1}{\epsilon_0^2}) = \tilde{O}( \frac{ (U \beta)^{\frac{2(1-\alpha)}{\alpha}} }{B^2 (L R^2)^{\frac 2\alpha} } )$.
    
    Hence the total number of label queries by \initialize is 
    \[
    L_1 + L_2 \leq \tilde{O}( T_{\GTNC}(A, \alpha, r_0) ) = \tilde{O}( d \max( (\frac{\beta^2  U}{R^2 L})^2, (\frac{\beta^2 U}{B R L})^2 (\frac{\beta}{B})^{\frac{2-2\alpha}{\alpha}} (\frac{U \beta}{L R^2})^{\frac{2-2\alpha}{\alpha^2}} \frac{1}{R^{\frac 2 \alpha}}) ).
    \]  
\end{enumerate}
}
\begin{enumerate}
    \item Under the $\eta$-Massart noise condition, $\epsilon_0 =\epsilon_{\MNC}(\eta) = \tilde{O}(1-2\eta)$; 
    this implies that $r_0 = \tilde{O}(\epsilon_0) = \tilde{O}(1-2\eta)$.
    Therefore,
    $L_1 \leq \tilde{O}( T_{k_0} ) = \tilde{O} ( T_{\MNC}(\eta, r_0) ) = \tilde{O}\rbr{ \frac{d}{(1-2\eta)^2} }$; in addition, $L_2 \leq \tilde{O}(\frac{1}{\epsilon_0^2}) \leq \tilde{O}\rbr{ \frac{1}{(1-2\eta)^2}}$. 
    
    Hence the total number of label queries used by \initialize is $L_1 + L_2 \leq \tilde{O}\rbr{ \frac{d}{(1-2\eta)^2} }$.
    
    \item Under the $(A, \alpha)$-Tsybakov noise condition, $\epsilon_0 = \epsilon_{\TNC}(A, \alpha) = \tilde{O}\rbr{ (\frac{ 1 }{ A })^{\frac{1-\alpha}{\alpha}} }$. This implies that $r_0 = \tilde{O}( \epsilon_0 ) = 
    \tilde{O}\rbr{ (\frac{ 1 }{ A })^{\frac{1-\alpha}{\alpha}} }$. Therefore, $L_1 \leq \tilde{O}(T_{k_0}) = \tilde{O}( T_{\TNC}(A, \alpha, r_0) ) = \tilde{O}\rbr{ d \max  ( A^{\frac{2-2\alpha}{\alpha(2\alpha-1)}}, A^{\frac{2-2\alpha}{\alpha^2}}  ) } = \tilde{O}\rbr{ d (1 + A^{\frac{2-2\alpha}{\alpha(2\alpha-1)}})}$; here we use the fact that $0 \leq \frac{2-2\alpha}{\alpha^2} \leq \frac{2-2\alpha}{\alpha(2\alpha-1)}$, hence $A^{\frac{2-2\alpha}{\alpha^2}} \leq \max(1, A^{\frac{2-2\alpha}{\alpha(2\alpha-1)}})$.
    In addition, $L_2 \leq \tilde{O}(\frac{1}{\epsilon_0^2}) \leq \tilde{O}(A^{\frac{2-2\alpha}{\alpha}})$.
    
    Hence the total number of label queries used by \initialize is 
    \[ L_1 + L_2 \leq \tilde{O}\rbr{d (1 + A^{\frac{2-2\alpha}{\alpha(2\alpha-1)}} + A^{\frac{2-2\alpha}{\alpha}} )}
    =
    \tilde{O}\rbr{ d (1 + A^{\frac{2-2\alpha}{\alpha(2\alpha-1)}} )}
    , \]
    where in the last inequality we use the fact that $0 \leq \frac{2-2\alpha}{\alpha} \leq \frac{2-2\alpha}{\alpha(2\alpha-1)}$, hence $A^{\frac{2-2\alpha}{\alpha}} \leq \max(1, A^{\frac{2-2\alpha}{\alpha(2\alpha-1)}})$.
    
    
     \item Under the $(B,\alpha)$-geometric Tsybakov noise condition, $\epsilon_0 = \epsilon_{\GTNC}(B, \alpha) = \tilde{O}( B )$. This implies that $r_0 = \tilde{O}(\epsilon_0) = \tilde{O}( B )$. Therefore, $L_1 \leq \tilde{O}(T_{k_0}) = \tilde{O}( T_{\GTNC}(A, \alpha, r_0) ) = \tilde{O}\rbr{ d \max( 1, (\frac{1}{B})^{\frac{2}{\alpha}} ) } \leq \tilde{O}\rbr{d (1 + (\frac{1}{B})^{\frac{2}{\alpha}})}$; in addition, $L_2 \leq \tilde{O}(\frac{1}{\epsilon_0^2}) = \tilde{O}( \frac{ 1 }{B^2} )$.
    
    Hence the total number of label queries by \initialize is 
    \[
    L_1 + L_2 \leq \tilde{O}\rbr{ d \rbr{ 1 + \rbr{ \frac{1}{B} }^{\frac{2}{\alpha}}
    + \frac{1}{B^2}}
    }
    \leq 
    \tilde{O}\rbr{ d \rbr{ 1 + \rbr{ \frac{1}{B} }^{\frac{2}{\alpha}} }}
    ,
    \]
    where in the last inequality, we use the fact that $0 \leq 2 \leq \frac 2 \alpha$, so $\frac{1}{B^2} \leq \max(1, (\frac{1}{B})^{\frac 2\alpha})$.
\end{enumerate}
The completes the proof of the label complexity upper bounds of \initialize.
\end{proof}





\begin{claim}
\initialize guarantees that, with probability $1-\delta/4$, there is some vector $u$ in $U$ such that $\err(h_u, D) - \err(h_{w^\star}, D) \leq \frac{\epsilon_0}{2}$. 
\label{claim:stage1}
\end{claim}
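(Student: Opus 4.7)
The plan is to analyze a single trial $i \in [N]$ in isolation, show that it produces a vector $v_{i,k_0+1}$ with excess error at most $\epsilon_0/2$ with at least some absolute constant probability $c \in (0,1)$, and then amplify this across the $N = \lceil 10 \ln(4/\delta)\rceil$ independent trials by a standard Chernoff-style argument.

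First, for a fixed trial $i$, I would analyze the chain of $k_0+1$ calls to \optimize inductively, in analogy with the inductive argument used in the proof of Theorem~\ref{thm:main}. The $j=0$ call uses the aggregation mode $\rnd$ and starts from the vacuous initializer $v_{i,0}=\vecz$; I would invoke the guarantee for the $\rnd$ mode (item~\ref{item:rnd} of Lemma~\ref{lem:optimize-main}, alluded to at the end of Section~\ref{subsec:optimize}) to conclude that with some constant probability $p_0$, the returned $v_{i,1}$ satisfies $\|v_{i,1} - w^\star\| \leq \frac{1}{4}$. Conditioned on this, for $j = 1, 2, \ldots, k_0$ I would repeatedly apply Lemma~\ref{lem:optimize-main-avg-only} with $r = 4^{-(j+1)}$ and failure probability at most $\delta_j = \tfrac{c'}{4^{j+1}}$ for a small absolute constant $c'$, so that by union bound the total failure probability across these refinement calls is bounded by $\sum_{j\geq 1} \delta_j$, which can be made smaller than any prescribed constant. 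Combining, each trial independently delivers $\| v_{i,k_0+1} - w^\star \| \leq 4^{-(k_0+1)} \leq r_0$ with probability at least some absolute constant $c \in (0,1)$; one can arrange that $c \geq \tfrac{1}{10}$ by choosing the hidden constants inside $r_0$, $b_j$, $T_j$ appropriately.

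Next, I would translate the geometric proximity $\|v_{i,k_0+1} - w^\star\| \leq r_0$ into the excess error bound $\epsilon_0/2$. This is a direct repeat of the final step in the proof of Theorem~\ref{thm:main}: apply item~\ref{item:angle-l2} of Lemma~\ref{lem:angle-l2} to pass from $\ell_2$ distance to angle, obtaining $\theta(v_{i,k_0+1}, w^\star) \leq \pi r_0$, and then apply item~\ref{item:prob-angle-ub} of Lemma~\ref{lem:prob-angle} with $\gamma = \epsilon_0/4$ to conclude $\PP_{D_X}(h_{v_{i,k_0+1}}(x) \neq h_{w^\star}(x)) \leq \epsilon_0/2$. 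Since the Bayes-optimal halfspace under all three noise conditions is $h_{w^\star}$, the disagreement probability upper-bounds the excess error, which gives $\err(h_{v_{i,k_0+1}}, D) - \err(h_{w^\star}, D) \leq \epsilon_0/2$. The precise choice of $r_0$ stated at the top of Algorithm~\ref{alg:initialize} is exactly what is needed for this translation to go through.

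Finally, since the $N$ trials use independent randomness (both the internal randomness of \optimize and the independent unlabeled samples), the events that individual trials succeed are mutually independent. The probability that no trial succeeds is at most $(1-c)^N \leq e^{-cN} \leq e^{-\ln(4/\delta)} = \delta/4$, using $N = \lceil 10 \ln(4/\delta)\rceil$ and $c \geq \tfrac{1}{10}$. Therefore, with probability at least $1-\delta/4$, at least one $v_{i,k_0+1} \in U$ satisfies $\err(h_{v_{i,k_0+1}}, D) - \err(h_{w^\star}, D) \leq \epsilon_0/2$, as claimed. The main obstacle in carrying this out rigorously is the first step: pinning down a clean constant lower bound on the success probability of the $\rnd$-mode call, which requires care because Lemma~\ref{lem:optimize-g} only controls an average of $\pot(w_t)$ and does not by itself rule out that most iterates are on the ``wrong hemisphere'' relative to $w^\star$; the random sign $\sigma$ in the $\rnd$ aggregation is precisely what resolves this, and Lemma~\ref{lem:pot-lb-main-text} converts the small-$\pot$ guarantee into a small-$\tilde{\theta}$ guarantee to close the loop.
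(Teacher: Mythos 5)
Your proposal is correct and follows essentially the same route as the paper's proof: per-trial success with constant probability (the $\rnd$-mode call for $j=0$ giving a $\frac14$-close vector with constant probability, followed by the $\avg$-mode refinements whose failure probabilities $\delta\,4^{-(j+1)}$ sum to a negligible amount, yielding per-trial success probability at least $\tfrac1{10}$), the identical translation from $\|v_{i,k_0+1}-w^\star\|\leq r_0$ to excess error $\tfrac{\epsilon_0}{2}$ via Lemma~\ref{lem:angle-l2} and Lemma~\ref{lem:prob-angle} with $\gamma=\tfrac{\epsilon_0}{4}$, and the same independence-based amplification over the $N=\lceil 10\ln\frac4\delta\rceil$ trials. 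No substantive differences to report.
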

\begin{proof}
We first show that, for every trial $i$, with probability at least $\frac1{10}$, its corresponding final $v_{i,k_0+1}$ is such that $\| v_{i,k_0+1} - w^\star \| \leq r_0 = \frac{\epsilon_0}{64 U \beta^2 (\ln\frac{24}{\epsilon_0})^2}$. 
To this end, we show by induction that for all $j \in \NN$, there exists some event $E_j$ such that $\PP(E_j) \geq \frac15 - \sum_{l=1}^{j} \frac{\delta}{4^{l+1}}$, in which $\| v_{i,j+1} - w^\star \| \leq 4^{-(j+1)}$.

\paragraph{Base case.} For $j=0$, From item~\ref{item:rnd} of Lemma~\ref{lem:optimize-main}, we know that there exists some event $E_0$ that happens with probability at least $\frac15$, in which $\| v_{i,1} - w^\star \| \leq \frac14$.

\paragraph{Inductive case.} Consider $j \geq 1$. Assume that there exists some event $E_{j-1}$ such that $\PP(E_{j-1}) \geq \frac15 - \sum_{l=1}^{j-1} \frac{\delta}{4^{l+1}}$, in which $\| v_{i,j} - w^\star \| \leq 4^{-j}$; conditioned on $E_{j-1}$ happening, from item~\ref{item:avg} of Lemma~\ref{lem:optimize-main}, we know that there exists some event $F_j$ such that $\PP(F_j \mid E_{j-1}) \geq 1 - \frac{\delta}{4^{j+1}}$, in which  $\| v_{i, j+1} - w^\star\| \leq 4^{-(j+1)}$.

Now define $E_j = F_j \cap E_{j-1}$; we have $\PP(E_j) \geq \PP(E_{j-1}) \cdot (1 - \frac{\delta}{4^{j+1}}) \geq \frac15-\sum_{l=1}^j \frac{\delta}{4^{l+1}}$; in addition, on event $E_j$, we have $\| v_{i, j+1} - w^\star\| \leq 4^{-(j+1)}$ holding. 

This completes the induction. In addition, since $\delta \in (0,\frac1{10})$, we thus have shown that there exists some event $G_i$ such that $\PP(G_i) \geq \frac15 - \sum_{l=1}^{k_0} \frac{\delta}{4^{l+1}} \geq \frac 1 {10}$, in which $\| v_{i, k_0+1} - w^\star \| \leq r_0$.

As all $G_i$'s are independent, with the choice of $N = 10 \lceil \ln \frac 4 \delta \rceil$, we have that 
\[
\PP( \cup_{i=1}^N G_i ) 
=
1 - \PP( \cap_{i=1}^N \bar{G}_i )
=
1 - \prod_{i=1}^N \PP( \bar{G}_i)
\geq 
1 - \rbr{ 1 - \frac1{10} }^N
\geq 
1 - \delta/4.
\]
We henceforth condition on event $\cup_{i=1}^N G_i$ happening. In this event, there exist some $v_{i,k_0+1}$ from $U$, such that 
$\| v_{i, k_0+1} - w^\star \| \leq r_0 = \frac{\epsilon_0}{64 U \beta^2 (\ln\frac{24}{\epsilon_0})^2}$.
Applying item~\ref{item:angle-l2} of Lemma~\ref{lem:angle-l2}, this gives that for this $i$, 
$\theta(v_{i, k_0+1}, w^\star) \leq \pi r_0 \leq \frac{\epsilon_0}{16 U \beta^2 (\ln\frac{24}{\epsilon_0})^2}$. Now, applying item~\ref{item:prob-angle-ub} of Lemma~\ref{lem:prob-angle} with $\gamma = \frac{\epsilon_0}{4}$, we have
\[
\PP_{x \sim D_X}(h_{v_{i,k_0+1}}(x) \neq h_{w^\star}(x)) 
\leq 4 U \beta^2 \rbr{\ln\frac{24}{\epsilon_0}}^2 \theta(v_{i, k_0+1}, w^\star) + \frac {\epsilon_0} 4
\leq 
\frac{\epsilon_0}{2}.
\]
Finally, by triangle inequality, we get
\[
\err(h_{v_{i,k_0+1}}, D) - \err(h_{w^\star}, D) \leq 
\PP_{x \sim D_X}(h_{v_{i,k_0+1}}(x) \neq h_{w^\star}(x)) 
\leq
\frac{\epsilon_0}{2}.
\qedhere
\]
\end{proof}

\begin{claim}
Suppose there is some vector $u$ in $U$ such that $\err(h_u, D) - \err(h_{w^\star}, D) \leq \frac{\epsilon_0}{2}$; then after steps~\ref{line:stage-2-start} to~\ref{line:stage-2-end}, with probability $1-\delta/4$, the output vector $\hat{u}_0$ is such that $\| \hat{u}_0 - w^\star \| \leq \frac14$.
\label{claim:stage2}
\end{claim}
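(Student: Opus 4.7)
}
The plan is to run the standard ``empirical risk minimization over a finite hypothesis set'' analysis, and then convert the resulting small excess error bound into the required geometric $\ell_2$-closeness bound using the well-behavedness of $D_X$ together with the relevant noise condition.

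First, I would apply Hoeffding's inequality to each fixed $u \in U$ and take a union bound over the $N$ hypotheses; with $|S| = \Theta(\epsilon_0^{-2} \ln(N/\delta))$, this yields an event of probability at least $1-\delta/4$ on which $|\err(h_u, D) - \err(h_u, S)| \leq \epsilon_0/4$ simultaneously for all $u \in U$. Condition on this event. Let $u^\star \in U$ be the vector with $\err(h_{u^\star}, D) - \err(h_{w^\star}, D) \leq \epsilon_0/2$ guaranteed by hypothesis. By the defining optimality of $u_0$, $\err(h_{u_0}, S) \leq \err(h_{u^\star}, S)$. Chaining the three inequalities gives
\[
\err(h_{u_0}, D) \;\leq\; \err(h_{u_0}, S) + \tfrac{\epsilon_0}{4} \;\leq\; \err(h_{u^\star}, S) + \tfrac{\epsilon_0}{4} \;\leq\; \err(h_{u^\star}, D) + \tfrac{\epsilon_0}{2} \;\leq\; \err(h_{w^\star}, D) + \epsilon_0.
\]
Next I would convert this excess error bound into a disagreement bound. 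Since $h_{w^\star}$ is Bayes optimal, a standard identity gives $\err(h_{u_0}, D) - \err(h_{w^\star}, D) = \EE_{x \sim D_X}[(1-2\eta(x)) \ind(h_{u_0}(x) \neq h_{w^\star}(x))]$. In the Massart case, the left-hand side is $\geq (1-2\eta) \PP_{D_X}(h_{u_0} \neq h_{w^\star})$, so the disagreement probability is bounded by $\epsilon_0/(1-2\eta)$. In the Tsybakov case, I would apply a standard Tsybakov-to-disagreement conversion (splitting on whether $\tfrac12 - \eta(x) \leq t$ for a judiciously chosen $t$) to get a bound of the form $\tilde{O}(\epsilon_0^{\alpha} A^{1-\alpha})$. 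In the geometric Tsybakov case, I would similarly split on $|\inner{w^\star}{x}|$ and use the lower bound $\tfrac12 - \eta(x) \geq B |\inner{w^\star}{x}|^{(1-\alpha)/\alpha}$.

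Finally, I would translate the disagreement bound into an $\ell_2$ bound on $\hat{u}_0 - w^\star$. Lemma~\ref{lem:prob-angle}, applied in the reverse direction (a disagreement lower bound in terms of the angle), together with well-behavedness of $D_X$, yields $\theta(u_0, w^\star) \leq O(\PP_{D_X}(h_{u_0} \neq h_{w^\star}) / (L R^2))$ (modulo logarithmic factors). Then Lemma~\ref{lem:angle-l2} converts this into $\| \hat{u}_0 - w^\star \| \leq O(\theta(u_0, w^\star)) \leq \tfrac14$ by the definition of $\epsilon_0$ under each of the three noise conditions in Appendix~\ref{sec:params}: the constants in $\epsilon_{\MNC}$, $\epsilon_{\TNC}$, and $\epsilon_{\GTNC}$ are calibrated precisely so that the chain of conversions lands at $\tfrac14$.

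The main obstacle is step 3 in the Tsybakov cases: one must choose the threshold $t$ to balance the contribution of points where $\tfrac12-\eta(x)$ is tiny against the contribution of the rest, and verify that the resulting quantitative bound, after all the conversions, matches the specific polynomial form of $\epsilon_0$ chosen in Appendix~\ref{sec:params}. The Massart case is essentially immediate and serves as a sanity check.
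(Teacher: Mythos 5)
Your proposal is correct and follows essentially the same route as the paper: Hoeffding plus a union bound over the finite set $U$, the ERM chaining to get $\err(h_{u_0},D) \leq \err(h_{w^\star},D) + \epsilon_0$, conversion of excess error to a disagreement bound under each noise condition (the paper packages this as Lemma~\ref{lem:excess-err}, whose proof uses exactly the threshold-splitting you describe), and then the disagreement-to-angle-to-$\ell_2$ conversion via Lemma~\ref{lem:prob-angle} and Lemma~\ref{lem:angle-l2}, with $\epsilon_0$ calibrated so that the disagreement probability is at most $\tfrac14 L R^2$ and hence $\|\hat{u}_0 - w^\star\| \leq \tfrac14$.
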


\begin{proof}
By Hoeffding's inequality and union bound, since $S$ is a set   $O(\frac{1}{\epsilon_0^2}\ln \frac {N}{\delta} )$ labeled examples drawn iid from $D$, 
we have with probability at least $1-\delta/4$, $\abr{ \err(h_{u'}, S) - \err(h_{u'}, D)} \leq \frac{\epsilon_0}{4}$ for any $u' \in U$. 
Recall that $h_{u_0}$ is the empirical 0-1 error minimizer over $\cbr{h_u: u \in U}$. We can upper bound the generalization error of $h_{u_0}$ as follows:
\begin{align*}
    \err(h_{u_0}, D) &= \err(h_u, D) + (\err(h_{u_0}, S) - \err(h_u, D)) + (\err(h_{u_0}, D)- \err(h_{u_0}, S) ) \\
    &\leq \err(h_u, D) + (\err(h_u, S) - \err(h_u, D)) + (\err(h_{u_0}, D)- \err(h_{u_0}, S) ) \\
    &\leq \rbr{\err(h_{w^\star}, D) + \frac{\epsilon_0}{2}} + \frac{\epsilon_0}{4} + \frac{\epsilon_0}{4} \\
    &= \err(h_{w^\star}, D) + \epsilon_0.
\end{align*}
We now show that, with the choice of $\epsilon_0$, under each of the three noise conditions, $\PP_{x \sim D_X}(h_{u_0}(x) \neq h_{w^\star}(x)) \leq \frac 14 L R^2 $.

\begin{enumerate}
    \item If $D$ satisfies $\eta$-Massart noise, by item~\ref{item:ex-massart} of Lemma~\ref{lem:excess-err}, 
    \[
    \frac14 (1-2\eta) L R^2
    = \epsilon_0
    \geq
    \err(h_{u_0}, D) - \err(h_{w^\star}, D) 
    \geq 
    (1-2\eta) \PP_{x \sim D_X}(h_{u_0}(x) \neq h_{w^\star}(x)).
    \]
    Thus we have $\PP_{x \sim D_X}(h_{u_0}(x) \neq h_{w^\star}(x)) \leq \frac 14 L R^2 $.
    
    \item If $D$ satisfies $(A,\alpha)$-Tsybakov noise, by item~\ref{item:ex-tsybakov} of Lemma~\ref{lem:excess-err}, 
    \[
    \tilde{O}\rbr{ \frac{(LR^2)^{\frac 1\alpha}}{A^{\frac{1-\alpha} \alpha}} }
    = \epsilon_0
    \geq
    \err(h_{u_0}, D) - \err(h_{w^\star}, D) 
    \geq 
    \frac{1}{(2A)^{\frac{1-\alpha}{\alpha}}} \PP(h_{u_0}(x) \neq h_{w^\star}(x))^{\frac{1}{\alpha}}.
    \]
    Again we have $\PP_{x \sim D_X}(h_{u_0}(x) \neq h_{w^\star}(x)) \leq \frac 14 L R^2 $.
    
    \item If $D$ is $(2, L, R, U, \beta)$-well behaved and satisfies $(B, \alpha)$-geometric Tsybakov noise, by item~\ref{item:ex-geo-tsybakov} of Lemma~\ref{lem:excess-err}, 
    
    \begin{align*}
        \tilde{O} \rbr{ B (L R^2)^{\frac 1\alpha} (\frac 1 {U \beta})^{\frac {1-\alpha} \alpha} }
        = \epsilon_0
        &\geq
        \err(h_{u_0}, D) - \err(h_{w^\star}, D) \\
        &\geq 
        B \rbr{ \frac{\PP(h_{u_0}(x) \neq h_{w^\star}(x))}{3} }^{\frac1\alpha} \cdot \rbr{ \frac{1}{12 U \beta \ln\frac{9}{\PP(h_{u_0}(x) \neq h_{w^\star}(x))}} }^{\frac{1-\alpha}{\alpha}}.
    \end{align*}
    
    Once more, we have $\PP_{x \sim D_X}(h_{u_0}(x) \neq h_{w^\star}(x)) \leq \frac 14 L R^2 $. 
    
\end{enumerate}
Finally, in conjunction with item~\ref{item:prob-angle-lb} of Lemma~\ref{lem:prob-angle} and item~\ref{item:unit-l2-angle} of Lemma~\ref{lem:angle-l2}, along with the fact that $\hat{u}_0$ is the $\ell_2$ normalization of $u_0$,
we have
\[
\frac 14 L R^2
\geq
\PP_{x \sim D_X}(h_{u_0}(x) \neq h_{w^\star}(x)) 
\geq 
L R^2 \theta(u_0, w^\star)
=
L R^2 \theta(\hat{u}_0, w^\star)
\geq L R^2 \| \hat{u}_0 - w^\star \|,
\]
which implies that $\| \hat{u}_0 - w^\star \| \leq \frac14$, concluding the proof.
\end{proof}

\section{Guarantees of \optimize}
\label{sec:optimize}

The following lemma shows that an upper bound of the average of $\pot(w_t)$'s can be used to give upper bound on the aggregated value of the $w_t$'s. It serves as an intermediate step towards proving Lemma~\ref{lem:optimize-main} in this section, which is the main performance guarantee of \optimize.

\begin{lemma}
\label{lem:agg-guarantee}
Given $r \in (0, 1]$, and a function $\pot$ such that there exists an nondecreasing function $f$, for all $w$, if $\tilde{\theta}(w,w^\star) \geq \frac{r}{2}$, then $\pot(w) \geq f(\tilde{\theta}(w,w^\star))$.
Suppose we are given a sequence of vectors $\cbr{w_t}_{t=1}^T$ such that $\frac1T\sum_{t=1}^T \pot(w_t) \leq \frac{1}{32} f(\frac{r}{2})$. Then:
\begin{enumerate}
    \item If we choose $\tau$ uniformly at random from $[T]$, choose a sign $\sigma$ uniformly at random from $\cbr{-1,+1}$,
    and define $w_{\rnd} = \sigma \hat{w}_\tau$,
    then with probability at least $\frac14$, 
    $\| w_{\rnd} - w^\star \| \leq r$.
    \label{item:rnd-agg}
    
    \item If in addition we have $r \leq \frac{1}{16}$ and for all $t$, $\| w_t - w^\star \| \leq 8 r$; 
    define $w_{\avg} = \frac1T \sum_{t=1}^T \hat{w}_t$, then we have  $\| w_{\avg} - w^\star \| \leq r$ holding deterministically. 
    \label{item:avg-agg}
\end{enumerate}
\end{lemma}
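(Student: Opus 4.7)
The plan is to use a single Markov-type argument to extract a large ``good'' subset of iterates from the hypothesis $\frac{1}{T}\sum_{t=1}^T \pot(w_t) \leq \frac{1}{32} f(r/2)$, and then handle the two aggregation modes by separate geometric arguments. Markov's inequality gives $|B| \leq T/32$ for the bad set $B = \{t \in [T] : \pot(w_t) \geq f(r/2)\}$. For every $t \in G := [T]\setminus B$, the assumption on $\pot$ together with monotonicity of $f$ forces $\tilde\theta(w_t, w^\star) < r/2$: if instead $\tilde\theta(w_t, w^\star) \geq r/2$, we would obtain $\pot(w_t) \geq f(\tilde\theta(w_t, w^\star)) \geq f(r/2)$, contradicting $t \in G$.

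For item~\ref{item:rnd-agg}, the definition $\tilde\theta(w, w^\star) = \min(\theta(w, w^\star), \pi - \theta(w, w^\star))$ says that for each $t \in G$ there exists a sign $\sigma(t) \in \{\pm 1\}$ with $\theta(\sigma(t)\hat w_t, w^\star) < r/2$; since both $\sigma(t)\hat w_t$ and $w^\star$ are unit, $\|\sigma(t)\hat w_t - w^\star\| = 2\sin(\theta/2) \leq \theta < r$. Because $\tau$ is uniform on $[T]$ and $\sigma$ is an independent uniform sign, the joint event $\{\tau \in G,\ \sigma = \sigma(\tau)\}$ has probability at least $\tfrac{31}{32}\cdot\tfrac{1}{2} > \tfrac{1}{4}$, on which $\|w_{\rnd} - w^\star\| \leq r$.

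For item~\ref{item:avg-agg}, the extra hypotheses $r \leq 1/16$ and $\|w_t - w^\star\| \leq 8r$ are used to (a) eliminate the sign ambiguity in $\tilde\theta$ and (b) translate the $\ell_2$ bound on $w_t$ to an $\ell_2$ bound on $\hat w_t$. For (a), the triangle inequality gives $\|w_t\| \geq 1 - 8r \geq 1/2$, and expanding $\|w_t - w^\star\|^2 \leq 1/4$ yields $\inner{w_t}{w^\star} = \tfrac12(\|w_t\|^2 + 1 - \|w_t - w^\star\|^2) > 0$, so $\theta(w_t, w^\star) < \pi/2$ and hence $\tilde\theta(w_t, w^\star) = \theta(w_t, w^\star)$; therefore each $t \in G$ satisfies $\|\hat w_t - w^\star\| \leq \theta(w_t, w^\star) < r/2$. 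For (b), for every $t$ (including $t \in B$), $\|\hat w_t - w_t\| = |\,\|w_t\| - 1\,| \leq 8r$, so $\|\hat w_t - w^\star\| \leq 16r$ crudely. Averaging by convexity of $\|\cdot\|$,
\[
\|w_{\avg} - w^\star\| \;\leq\; \frac{1}{T}\sum_{t=1}^T \|\hat w_t - w^\star\| \;\leq\; \frac{|G|}{T}\cdot\frac{r}{2} + \frac{|B|}{T}\cdot 16 r \;\leq\; \frac{r}{2} + \frac{1}{32}\cdot 16 r \;=\; r.
\]

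The main subtlety is not any particular calculation but the sign ambiguity in $\tilde\theta$: without the geometric hypotheses of item~\ref{item:avg-agg}, the unsigned iterates $\hat w_t$ could split between two antipodal clusters and the deterministic average $w_{\avg}$ could cancel. That is precisely why item~\ref{item:rnd-agg} must pay a factor $\tfrac12$ for a random sign, while item~\ref{item:avg-agg} needs both $r$ and $\|w_t - w^\star\|$ small enough to force every $w_t$ into the same acute hemisphere around $w^\star$ before averaging.
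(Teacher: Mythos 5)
Your proof is correct and follows essentially the same route as the paper's: Markov's inequality to bound the fraction of iterates with $\pot(w_t) \geq f(\frac r2)$, the monotonicity-of-$f$ contradiction to convert small $\pot$ into $\tilde{\theta} \leq \frac r2$, a random sign to resolve the antipodal ambiguity in item~1, and acuteness plus the case split $S$ versus $\bar S$ with convexity of $\|\cdot\|$ in item~2. The only differences are cosmetic: you verify acuteness and the bound $\|\hat w_t - w^\star\| \leq 16r$ directly (inner-product expansion and triangle inequality) where the paper cites Lemma~\ref{lem:angle-l2}, and you obtain a slightly sharper success probability ($\frac{31}{64}$ instead of $\frac14$) in item~1.
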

\begin{proof}
First we show a basic claim that will be used in both proofs.
\begin{claim}
Suppose $\pot(w) < f(\frac{r}{2})$, we must have $\tilde{\theta}(w,w^\star) \leq \frac{r}{2}$.
\label{claim:pot-angle}
\end{claim}
\begin{proof}
If $\tilde{\theta}(w,w^\star) > \frac{r}{2}$, by the assumption on $\pot$, it must be the case that $\pot(w) \geq f(\tilde{\theta}(w,w^\star))$; the right hand side is at least $ f(\frac{r}{2})$ as $f$ is nondecreasing. This contradicts with the premise that $\pot(w) \leq f(\frac{r}{2})$.
\end{proof}

We now prove the two items respectively.
\begin{enumerate}
\item By viewing $\frac1T\sum_{t=1}^T \pot(w_t)$ as $\EE_{\tau \sim \unif([T])} \sbr{\pot(w_\tau)}$ and using  Markov's inequality, we have that with probability at least $1-\frac{1}{32} \geq \frac12$ over the random choice of $\tau$, 
$\pot(w_\tau) < f(\frac{r}{2})$. By Claim~\ref{claim:pot-angle}, this shows that $\tilde{\theta}(w_\tau, w^\star) = \min(\theta(w_\tau, w^\star), \theta(-w_\tau, w^\star)) \leq \frac{r}{2}$ with probability at least $\frac12$.
Therefore, with $\sigma$ drawn uniformly at random from $\cbr{-1,+1}$, we can guarantee that with probability at least $\frac12 \times \frac12 = \frac14$, we have
\[
\theta(w_{\rnd}, w^\star) = \theta(\sigma w_\tau, w^\star) \leq \frac{r}{2}.
\]
In addition, as both $w_{\rnd}$ and $w^\star$ are unit vectors, item~\ref{item:unit-l2-angle} of Lemma~\ref{lem:angle-l2} gives that $\| w_{\rnd} - w^\star \| \leq \theta(w_{\rnd}, w^\star) \leq \frac r 2 \leq r$.


\item First, the additional assumptions on $r$ and $w_t$'s imply that for all $t$, $\|w_t - w^\star \| \leq 8 r \leq \frac{1}{2}$; now by item~\ref{item:angle-l2} of Lemma~\ref{lem:angle-l2}, $\theta(w_t, w^\star) \leq \frac{\pi}{2}$; consequently, for all $t$,  $\tilde{\theta}(w_t, w^\star) = \theta(w_t, w^\star)$. 

Define $S = \cbr{t \in [T]: \pot(w_t)  \geq f(\frac r 2)}$.
By viewing $\frac1T\sum_{t=1}^T \pot(w_t)$ as $\EE_{\tau \sim \unif([T])} \sbr{\pot(w_\tau)}$ and using  Markov's inequality, we have that $|S| \leq \frac{T}{32}$.

We now upper bound the values of $\| \hat{w}_t - w^\star \|$ for $t$ in $S$ and $\bar{S} := [T] \setminus S$, respectively:
\begin{enumerate}
    \item For $t$ in $S$, we have that $\| w_t - w^\star \| \leq 8 r$; using item~\ref{item:l2-normalize} of Lemma~\ref{lem:angle-l2}, we have $\| \hat{w}_t - w^\star \| \leq 16 r$.
    
    \item For $t$ in $\bar{S}$, we have $\pot(w_t) < f(\frac r 2)$; by Claim~\ref{claim:pot-angle}, this implies that $\tilde{\theta}(w_t, w^\star) \leq \frac r 2$. Therefore,
    $\theta(w_t, w^\star) = \tilde{\theta}(w_t, w^\star) \leq \frac r 2$. Using item~\ref{item:unit-l2-angle} of Lemma~\ref{lem:angle-l2}, along with the fact that $\hat{w}_t$ is a unit vector, we get $\| \hat{w}_t -  w^\star \| \leq \theta(w_t, w^\star) \leq \frac r 2$.
\end{enumerate}
Combining the two items above, and using the convexity of $\ell_2$ norm, we get
\begin{align*}
\| w_{\avg} - w^\star \|
\leq &  
\frac 1 T \sum_{t=1}^T \| \hat{w}_t  - w^\star \| \\
= & 
\frac 1 T \rbr{ \sum_{t \in S}  \| \hat{w}_t  - w^\star \| + \sum_{t \in \bar{S}} \| \hat{w}_t  - w^\star \| } \\
\leq & 
\frac {|S|} T \cdot 16 r + \rbr{1-\frac{|S|}{T}} \cdot \frac r 2 \\
\leq & r,
\end{align*}
where the second inequality uses the upper bounds on $\| w_t - w^\star \|$ for $t \in S$ and $t \in \bar{S}$, respectively; the last inequality uses the fact that $|S| \leq \frac{T}{32}$. 
\qedhere
\end{enumerate}
\end{proof}

Combining Lemmas~\ref{lem:optimize-g},~\ref{lem:pot-lb} and~\ref{lem:agg-guarantee}, we have the following important lemma that gives end-to-end guarantees of \optimize, under the three noise conditions considered in this paper, respectively.


\begin{lemma}[Generalization of Lemma~\ref{lem:optimize-main-avg-only}]
Fix $r \in (0,\frac 1 4]$, and $\delta \in (0,\frac1{10})$.
Suppose $b$ and $T$ are such that
\begin{enumerate}
    \item $b = b_{\MNC}(\eta, r), T = T_{\MNC}(\eta, r)$, if $D$ satisfies $\eta$-Massart noise condition;
    \item $b = b_{\TNC}(A, \alpha, r), T = T_{\TNC}(A, \alpha, r)$, if $D$ satisfies $(A, \alpha)$-Tsybakov noise condition with $\alpha \in (\frac12, 1]$;
    \item $b = b_{\GTNC}(B, \alpha, r), T = T_{\GTNC}(B, \alpha, r)$, if $D$ satisfies $(B, \alpha)$-geometric Tsybakov noise condition.
\end{enumerate}
Then \optimize, with input initial $v_1$ satisfying $\| v_1 - w^\star \| \leq 4 r$, feasible set $\Kcal = \cbr{w: \|w - v_1\| \leq 4 r}$, bandwidth $b$, number of iterations $T$, aggregation method $\agg$, has output $\tilde{w}$ that satisfies:
\begin{enumerate}
    \item If $\agg = \rnd$, then with probability at least $\frac15$,
    $\| \tilde{w} - w^\star \| \leq r$.
    \label{item:rnd}
    
    \item If furthermore $r \leq \frac{1}{16}$, and $\agg = \avg$, then with probability at least $1-r \delta$, 
    $\| \tilde{w} - w^\star \| \leq r$.
    \label{item:avg}
\end{enumerate}
\label{lem:optimize-main}
\end{lemma}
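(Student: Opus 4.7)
The plan is to chain three tools together: Lemma~\ref{lem:optimize-g} controls $\frac1T\sum_t \pot(w_t)$, Lemma~\ref{lem:pot-lb} identifies $\pot$ as a proxy for the angle $\tilde{\theta}(\cdot,w^\star)$ under each noise condition, and Lemma~\ref{lem:agg-guarantee} converts average control on $\pot(w_t)$ into $\ell_2$-proximity of the aggregated output $\tilde{w}$. So the skeleton is: run \optimize, condition on the high-probability event that the average-$\pot$ bound holds, observe that this upper bound is at most $\tfrac1{32} f(r/2)$ for the $f$ supplied by Lemma~\ref{lem:pot-lb}, and then read off proximity from Lemma~\ref{lem:agg-guarantee}.

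More concretely, I would proceed as follows. First, note that in all three noise regimes the bandwidth $b$ carries a factor $rR$, so $b \leq rR/8$ and hence $r/2 \geq 4b/R$; this is exactly the hypothesis under which Lemma~\ref{lem:pot-lb} supplies a nondecreasing function $f$ with $\pot(w) \geq f(\tilde\theta(w,w^\star))$ whenever $\tilde\theta(w,w^\star) \geq r/2$. Reading $f(r/2)$ off Lemma~\ref{lem:pot-lb} gives, up to polylogarithmic factors,
\[
f_{\MNC}(r/2) = \tilde\Omega\!\bigl((1-2\eta) r\bigr),\quad f_{\TNC}(r/2) = \Omega\!\bigl((b/A)^{(1-\alpha)/\alpha} r\bigr),\quad f_{\GTNC}(r/2) = \tilde\Omega\!\bigl(\min(r, B r^{1/\alpha})\bigr).
\]
Second, apply Lemma~\ref{lem:optimize-g}: with probability at least $1-\delta r$, the average $\tfrac1T \sum_t \pot(w_t)$ is bounded by
$c\bigl(b + (b+\beta r)\cdot (\ln\tfrac{Td}{\delta r bRL}) \cdot (\sqrt{(d+\ln\tfrac1{\delta r})/T} + \tfrac{\ln(1/\delta r)}{T})\bigr)$.
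The verification step is to check, for each noise condition, that the prescribed $b$ and $T$ from Appendix~\ref{sec:params} make this upper bound at most $\tfrac1{32} f(r/2)$. The bandwidth $b$ is chosen so that the leading $b$ term is comparable to $f(r/2)$ up to a constant factor (e.g.\ $b_{\MNC} \asymp (1-2\eta) r \cdot R^2L/(U\beta)$ matches $f_{\MNC}(r/2)$), and $T$ is chosen so that the square-root term is also absorbed. This parameter-matching computation is where the precise forms of $b_\cdot$ and $T_\cdot$ in Appendix~\ref{sec:params} are used, and I expect it to be the main obstacle: the Tsybakov case in particular has a max-of-two expression for $T_{\TNC}$, with each branch matched to a different dominant term in the Lemma~\ref{lem:optimize-g} upper bound.

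Finally, apply Lemma~\ref{lem:agg-guarantee} to the good event. For item~\ref{item:rnd} ($\agg = \rnd$), Lemma~\ref{lem:agg-guarantee}(\ref{item:rnd-agg}) yields $\|\tilde w - w^\star\| \leq r$ with conditional probability at least $\tfrac14$ over the independent draws of $\tau$ and $\sigma$; combined with the $1-\delta r$ success probability of the average-$\pot$ event and using $\delta \leq \tfrac1{10}$, $r \leq \tfrac14$, the overall success probability is at least $\tfrac14(1-\delta r) \geq \tfrac14 \cdot \tfrac{39}{40} \geq \tfrac15$. For item~\ref{item:avg} ($\agg = \avg$), I first verify the side condition $\|w_t - w^\star\| \leq 8r$ for every $t$, which follows by triangle inequality from $w_t \in \Kcal$ and $\|v_1 - w^\star\| \leq 4r$; then Lemma~\ref{lem:agg-guarantee}(\ref{item:avg-agg}), being a deterministic conclusion once the average-$\pot$ bound holds, gives $\|\tilde w - w^\star\| \leq r$ with probability at least $1-\delta r$ as required.
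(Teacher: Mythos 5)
Your proposal is correct and follows essentially the same route as the paper's proof: read off $f_b(r/2)$ from Lemma~\ref{lem:pot-lb} (valid because $b \leq rR/8$ puts $r/2 \geq 4b/R$), invoke Lemma~\ref{lem:optimize-g} and verify via the parameter schedules that $\frac1T\sum_t \pot(w_t) \leq \frac1{32} f_b(r/2)$ with probability $1-\delta r$, then apply the two items of Lemma~\ref{lem:agg-guarantee} exactly as you describe, with the same arithmetic for the $\geq \frac15$ bound in the $\rnd$ case and the same triangle-inequality check $\|w_t - w^\star\| \leq 8r$ in the $\avg$ case. The paper likewise leaves the parameter-matching as a ``can be checked'' step, organized around the same three sufficient inequalities you implicitly identify ($b \leq rR/8$, $b \lesssim f_b(r/2)$, and $T$ large enough to kill the $\sqrt{\cdot/T}$ term).
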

\begin{proof}
Define function $f_b$ to be such that
\[
f_b(\tilde{\theta})
=
\begin{cases}
\frac{(1-2\eta) R^2 L}{128 U \beta \ln\frac{8}{b U \beta}} \tilde{\theta}, & \text{$D$ satisfies $\eta$-Massart noise condition}
\\
\frac{  (\frac{R b L}{8 A})^{\frac{1-\alpha}{\alpha}} R^2 L }{256 U \beta \ln\frac{8}{b U \beta}} \tilde{\theta}, & \text{$D$ satisfies $(A, \alpha)$-Tsybakov noise condition}
\\
\frac{RL}{ 16 U \beta \ln\frac{8}{b U \beta}} \cdot \min\rbr{ \frac{R \tilde{\theta} }{8}, B ( \frac{R \tilde{\theta} }{8})^{\frac{1}{\alpha}} }, & \text{$D$ satisfies $(B, \alpha)$-geometric Tsybakov noise condition}
\end{cases}
\]
It can be checked that by the choices of  $b$ and $T$, the following three items hold simultaneously:
\begin{enumerate}
    \item $b \leq \frac{r R}{8}$, 
    \item $b \leq O\rbr{ f_b(\frac{r}{2}) }$,
    \item $T \geq \tilde{\Omega} \rbr{ d \cdot (\ln\frac 1 {\delta r})^3 \cdot (\frac{b + \beta r}{f_b(\frac{r}{2})})^2 }$.
\end{enumerate}
By Lemma~\ref{lem:optimize-g}, we have that there exists some event $E$ that happens with probability $\geq 1-\delta r$, under which there is a constant $c > 0$, such that 
\begin{align*}
\frac1T \sum_{t=1}^T \pot(w_t)
\leq & c b
+ 
c (b + \beta r) \cdot \rbr{\ln\frac{Td}{\delta b R L}} (\sqrt{\frac{d + \ln\frac1 {\delta r}}{T}} + \frac{\ln\frac1 {\delta r}}{T}) \\
\leq & \frac{1}{64} f_b\rbr{ \frac{r}{2} } + \frac{1}{64} f_b\rbr{ \frac{r}{2} } \\
= & \frac{1}{32} f_b\rbr{ \frac{r}{2} }
\end{align*}
here, the second inequality is from the fact that $b \leq O( f_b(\frac{r}{2}) )$ and $T \geq\tilde{\Omega} \rbr{ d \cdot (\ln\frac 1 {r\delta})^3 \cdot (\frac{b + \beta r}{f_b(\frac{r}{2})})^2 }$.
In addition, as $b \leq \frac{r R}{8}$, by Lemma~\ref{lem:pot-lb}, for all $w$ such that $\tilde{\theta}(w, w^\star) \geq \frac{r}{2} \geq \frac{4b}{R}$, $\pot(w) \geq f_b(\tilde{\theta}(w, w^\star))$. Hence, under event $E$, the premise of Lemma~\ref{lem:agg-guarantee} is satisfied; we now use it to conclude that:
\begin{enumerate}
    \item If $\agg = \rnd$, then by item~\ref{item:rnd-agg} of Lemma~\ref{lem:agg-guarantee}, we have that conditioned on $E$, with probability at least $\frac14$, $\tilde{w} = w_{\rnd}$ is such that $\| \tilde{w} - w^\star \| \leq r$. In summary, with probability at least $\PP(E) \cdot \frac{1}{4} \geq (1 - \delta r) \cdot \frac{1}{4}
    \geq (1 - \frac{1}{10}) \cdot \frac{1}{4} \geq \frac15$, $\| \tilde{w} - w^\star \| \leq r$.
    
    
    \item Suppose $\agg = \avg$, and $r \leq \frac{1}{16}$. Recall that in \optimize, we ensure that for all $t$, $w_t \in \Kcal$.
    In addition, from
    the definition of $\Kcal$, for all $u, v$ in $\Kcal$, $\| u - v \| \leq 8 r$. This implies that for all $t \in [T]$, $\|w_t - w^\star\| \leq 8 r$. In this case, applying item~\ref{item:avg-agg} of Lemma~\ref{lem:agg-guarantee}, we have that on event $E$, $\tilde{w} = w_{\avg}$ satisfies that
    $\| \tilde{w} - w^\star \| \leq r$. In summary, we have with probability $1-\delta r$, $\| \tilde{w} - w^\star \| \leq r$.
    \qedhere
\end{enumerate}
\end{proof}

\section{Proof of Lemma~\ref{lem:optimize-g}}
\label{sec:optimize-g}

\begin{proof}
Before we go into the proof, we set up some useful notations. Let filtration $\cbr{\Fcal_t}_{t=0}^T$ be such that for every $t \in \cbr{0,\ldots,T}$, $\Fcal_t = \sigma( w_1, x_1, y_1, \ldots, w_t, x_t, y_t, w_{t+1})$. Denote by $\EE_t$ the conditional expectation with respect to $\Fcal_t$.

First, by the definition of $\Kcal$, for all $u$, $v$ in $\Kcal$, $\| u - v \| \leq \| u - w_1 \| + \| v - w_1 \| \leq 8r$. As for every $t$, $w_t$ and $w^\star$ are both in $\Kcal$, we have $\| w_t - w^\star\| \leq 8 r$.

By standard analysis of online gradient descent~\cite[e.g.][Chapter 11]{cesa2006prediction}, with learning rate $\alpha$, constraint set $\Kcal$, regularizer $R(w) = \frac1{2} \| w - v_1\|^2$, we have that for every $u$ in $\Kcal$, 
\[
\alpha \sbr{\sum_{t=1}^T \inner{-u}{g_t} + \sum_{t=1}^T \inner{w_t}{g_t}} 
\leq 
\frac12 \| u - w_1 \|^2 - \frac12 \|u - w_{T+1} \|^2
+ \alpha^2 \sum_{t=1}^T \|g_t\|^2.
\]

Let $u = w^*$, dropping the negative term on the right hand side, we get
\begin{equation}
    \sum_{t=1}^T \inner{-w^*}{g_t} \leq \sum_{t=1}^T \inner{-w_t}{g_t} + \frac{1}{2\alpha} \|w^* - w_1\|^2+\alpha \sum_{t=1}^T \|g_t\|^2.
    \label{eqn:online-mirror-descent}
\end{equation}

Now we bound each term on the right hand side of Equation~\eqref{eqn:online-mirror-descent}. We make the following observations: 
\begin{enumerate}
    \item $\| w^\star - w_1 \|^2 \leq (4 r)^2$.
    
    \item $\abr{\inner{-w_t}{g_t}} 
    = \abr{\inner{w_t}{x_t}} 
    \leq 3\abr{\inner{\hat{w}_t}{x_t}} \leq 3 b$.
    This comes from that 
    $\| w_t \| \leq \| w^\star \| + \| w_t - w^\star\| \leq 1 + 8 r \leq 3$ and $x_t \sim D_{X \mid \hat{w}_t, b}$. Therefore, $\sum_{t=1}^T \inner{-w_t}{g_t} \leq 3 T \cdot b$.
    \item By the fact that $\|g_t\|_2 \leq \sqrt{d} \|g_t\|_\infty = \sqrt{d} \|x_t\|_\infty$ and Claim~\ref{claim:infty-norm} below, we have there exists a constant $c_1>0$, such that with probability at least $1-\delta r / 2$, 
    \[
    \sum_{t=1}^T \|g_t\|_\infty^2 \leq c_1 d T \beta^2 (\ln \frac{T d}{\delta r b R L} )^2.
    \]
\end{enumerate}

To summarize, the right hand side of Equation~\eqref{eqn:online-mirror-descent} is at most $3 T b + \frac{16 r^2}{\alpha} + c_1 \alpha d T \beta^2 (\ln \frac{T d}{\delta r b R L} )^2$; by the choice that $\alpha = \frac{r}{\beta} \sqrt{\frac 1 {d T}} / (\ln \frac{T d}{\delta r b R L} )$, we get that there exists some constant $c_2 > 0$, such that the above can be bounded by $3 T b + c_2 r \beta \sqrt{d T} (\ln \frac{T d}{\delta r b R L} )$.


Now, we lower bound the left hand side of Equation~\eqref{eqn:online-mirror-descent}. For every $t$,
$\|w^\star - w_t\|_2 \leq 8 r$; by item \ref{item:l2-normalize} of Lemma \ref{lem:angle-l2}, $\| w^\star - \hat{w}_t\| \leq 16 r$. By Lemma~\ref{lem:proj-tail}, we have \[
\PP_{x_t \sim {D_{\hat{w}_t,b}}}( \abr{ \inner{w^\star}{x_t} } \geq a )
\leq 
\exp\rbr{1 - \frac{a}{b + 16 \beta r (1 + \ln\frac{2}{R b L})} }.
\]
Note that $\abr{ \inner{w^\star}{x_t}} = \abr{ \inner{w^\star}{g_t}}$, hence \[
\PP_{x_t \sim {D_{\hat{w}_t,b}}}( \abr{ \inner{w^\star}{g_t} } \geq a )
\leq 
\exp\rbr{1 - \frac{a}{b + 16 \beta r (1 + \ln\frac{2}{R b L})} }.
\]
Applying Lemma~\ref{lem:mtg-subexp}, 
we have with probability at least $1-\delta r/2$, 
\[ \sum_{t=1}^T \EE_{t-1} \inner{-w^\star}{g_t} \leq \sum_{t=1}^T\inner{-w^\star}{g_t} + 32 \rbr{ b + 16 \beta r (1 + \ln\frac{1}{R b L})} (\sqrt{T\ln \frac{1}{\delta r}} + \ln \frac{1}{\delta r}).
\]

Putting these inequalities together, using the union bound, we have that with probability $1-\delta r$,
\begin{align}
    \sum_{t=1}^T \EE_{t-1} \inner{-w^\star}{g_t}
    \leq &
    3 T b + c_2 r \beta \sqrt{d T} (\ln \frac{T d}{\delta r b R L} )
    + 32 \rbr{ b + 8 \beta r (1 + \ln\frac{1}{R b L}) } (\sqrt{T\ln \frac{1}{\delta r}} + \ln \frac{1}{\delta r}) \nonumber \\
    \leq & 
    3 T b + c_3 (b + \beta r) (\ln \frac{T d}{\delta r b R L} ) \rbr{ \sqrt{ T (d + \ln\frac1{\delta r})} + \ln\frac1{\delta r}},
    \label{eqn:optimize-ineq}
\end{align}
for some constant $c_3 > 0$; here the second inequality is by algebra. 

Further, observe that \[
\EE_{t-1} \inner{-w^\star}{g_t}
= \EE_{(x_t, y_t) \sim D_{\hat{w}_t,b}} \sbr{ y_t\inner{w^\star}{x_t} }
= \EE_{(x_t, y_t) \sim D_{\hat{w}_t,b}} \sbr{ (1-2\eta(x_t)) \abr{\inner{w^\star}{x_t}} }
= \pot(w_t). 
\]
Dividing both sides of Equation~\eqref{eqn:optimize-ineq} by $T$ and combining with the above observation gives that, with probability $1-\delta r$,
\[
\frac1T \sum_{t=1}^T \pot(w_t)
\leq 
3 b + c_3 (b + \beta r) (\ln \frac{T d}{\delta r b R L} ) \rbr{ \sqrt{ \frac{d + \ln\frac1{\delta r}}{T}} + \frac{\ln\frac1{\delta r}}{T}}.
\qedhere
\]
\end{proof}



\begin{claim}
For any $\delta' > 0$, with probability at least $1-\delta'$, $\sum_{t=1}^T \|x_t\|_\infty^2 \leq T \beta^2 \cdot (\ln \frac{e T d}{\delta' b R L} )^2$.
\label{claim:infty-norm}
\end{claim}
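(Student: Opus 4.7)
}
The plan is to bound $\|x_t\|_\infty$ uniformly over all $t \in [T]$ by a single scalar $M$, and then observe that $\sum_{t=1}^T \|x_t\|_\infty^2 \leq T M^2$. The bound on $\|x_t\|_\infty$ will come from a union bound over the $d$ coordinates combined with a sub-exponential tail estimate for each coordinate under the conditional distribution $D_{X\mid \hat w_t, b}$.

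First, I would fix any $t \in [T]$ and any coordinate $i \in [d]$, and unfold the conditional tail: for every $a > 0$,
\[
\PP_{x_t \sim D_{X\mid \hat w_t,b}}\!\bigl(|\langle e_i, x_t\rangle| \geq a\bigr)
\leq \frac{\PP_{x \sim D_X}(|\langle e_i, x\rangle| \geq a)}{\PP_{x \sim D_X}(x \in B_{\hat w_t,b})}.
\]
The numerator is at most $\exp(1 - a/\beta)$ by the sub-exponential tail clause of well-behavedness applied to the unit vector $e_i$. The denominator is lower bounded by a quantity of order $b R L$: taking $V$ to be any $2$-dimensional subspace containing $\hat w_t$, the density lower bound $p_V \geq L$ on the ball of radius $R$ in $V$ implies, after integrating the transverse direction, that $\PP_{x\sim D_X}(|\langle \hat w_t, x\rangle| \leq b) \gtrsim b R L$ whenever $b \leq R$ (this is exactly the content of Lemma~\ref{lem:1d-prob-ub} referenced earlier, specialized to the well-behaved setting). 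Combining the two estimates yields a conditional tail of the form $\frac{\exp(1-a/\beta)}{c\, b R L}$ for an absolute constant $c > 0$.

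Next I would invert this tail at the level $\delta'/(Td)$: choosing $a = \beta \ln\bigl(\tfrac{e T d}{c\,\delta'\, b R L}\bigr)$ makes the conditional probability at most $\delta'/(Td)$. A union bound over the $d$ coordinates and the $T$ time steps (the conditioning on $\hat w_t$ does not matter for each individual step, since the bound holds uniformly over the choice of $\hat w_t$ that may depend on the past) then gives
\[
\PP\!\left(\max_{t \in [T]}\, \|x_t\|_\infty > \beta \ln\!\tfrac{e T d}{c\,\delta'\, b R L}\right) \leq \delta'.
\]
Squaring the uniform bound and summing over $t$ yields $\sum_{t=1}^T \|x_t\|_\infty^2 \leq T\beta^2 \bigl(\ln \tfrac{e T d}{c\,\delta'\, b R L}\bigr)^2$ on the good event, which, after absorbing the constant $c$ into the logarithm (which only changes the result by a constant factor swallowed by the stated form), matches the claim as stated.

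The only nontrivial ingredient is the lower bound $\PP(x \in B_{\hat w_t, b}) \gtrsim bRL$; everything else is routine tail manipulation and union bounding. Since that lower bound has already been established in the paper (it is a direct consequence of well-behavedness and appears as Lemma~\ref{lem:1d-prob-ub}), I do not foresee any genuine obstacle — the argument is essentially a two-line computation once those ingredients are lined up.
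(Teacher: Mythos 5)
Your proposal is correct and follows essentially the same route as the paper's proof: bound the unconditional sub-exponential tail via the basis vectors $e_i$, divide by the band probability lower bound $\PP(x \in B_{\hat w_t,b}) \geq bRL$ from Lemma~\ref{lem:1d-prob-ub}, invert the tail at level $\delta'/(Td)$ (the paper unions over coordinates before conditioning, you after — an immaterial reordering), and union-bound over $t$. Note only that Lemma~\ref{lem:1d-prob-ub} already gives the clean constant ($\geq bRL$, for $b \leq R/2$), so no extra constant $c$ needs to be absorbed.
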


\begin{proof}
By the definition of well behaved distributions, for any unit vector $w$,
$\PP(\abr{\inner{w}{x}} \geq t) \leq \exp(1-\frac{t}{\beta})$.
Applying the above inequality with $w = e_1, \ldots, e_d$ (i.e. the canonical basis vectors) and using the union bound, $\PP(\|x_t\|_\infty \geq t) \leq d\exp(1-\frac{t}{\beta})$. Therefore, 

\[
\PP_{x_t \sim D_{\hat{w}_t, b}}(\|x_t\|_\infty \geq t) 
\leq 
\frac{\PP_{x \sim D_X}(\|x_t\|_\infty \geq t)}{\PP ( \abr{\inner{\hat{w}_t}{x_t}} \leq b )} 
\leq 
\frac{d \exp(1-\frac{t}{\beta})}{b R L}
\]

By taking $t = \beta \cdot \ln \frac{e T d}{\delta' b R L}$, we have $\PP_{(x_t,y_t) \sim D_{\hat{v}_t, b}}(\|x_t\|_\infty \geq t) \leq \frac{\delta'}{T}$, i.e. with probability at least $1-\frac{\delta'}{T}$, 
\[\|x_t\|_\infty \leq \beta \cdot \ln \frac{e T d}{\delta' b R L}. \]
Applying union bound for all $t$ in $[T]$, we have with probability at least $1-\delta'$, the equation above holds for every $t$. When this event happens, $\sum_{t=1}^T \|x_t\|_\infty^2 \leq T \beta^2 (\ln \frac{e T d}{\delta' b R L} )^2$.
\end{proof}

\section{An attribute-efficient version of Algorithm \ref{alg:main}}
\label{sec:sparsity}


We now describe the changes needed for Algorithm \ref{alg:main} to achieve attribute efficiency:
\begin{enumerate}
    \item We modify \optimize (Algorithm \ref{alg:optimize}) so that:
    \begin{enumerate}
        \item at the beginning of the procedure, we define $w_1' = \HT_s(w_1)$, where $\HT_s$ is the hard-thresholding operation that zeros out all but $s$ largest entries of a vector in absolute value.
        
        \item we use a new constraint set $\Kcal = \cbr{w: \| w - w_1 \| \leq 4 r, \| w - w_1' \|_1 \leq 8 r \cdot \sqrt{2s}}$, and
        the new stepsize $\alpha = \frac{r}{\beta} \sqrt{\frac {s \ln d} { T}} / \rbr{\ln\frac{Td}{\delta r b R L}}$.
        
        \item instead of using $w_t$, we use $u_t$ to denote the iterates. 
        Set $u_1$, the initial iterate of the subsequent iterative process as $w_1'$ (as opposed to $w_1$). 
        Similarly, in subsequent aggregation processes, we aggregate over $\cbr{\hat{u}_t}$'s (as opposed to $\cbr{\hat{w}_t}$'s).
        
        \item instead of performing online gradient descent as in line \ref{line:omd}, we perform online mirror descent with regularizer $R(w) = \frac{1}{2(p-1)}\| w - u_1 \|^2$, where $p = \frac{\ln d}{\ln d - 1}$. Specifically, the update rule is: $u_{t+1} \gets \argmin_{w \in \Kcal} \rbr{ \alpha \inner{w}{g_t} + D_R(w, u_t) }$, where $D_R(u,v) = R(u) - R(v) - \inner{\nabla R(v)}{u - v}$ is the Bregman divergence induced by $R$.
    \end{enumerate}
    
    \item We modify the definitions of $T_{\MNC}$, $T_{\TNC}$ and $T_{\GTNC}$ so that $d$ is replaced with $s \ln d$. The definitions of iteration schedule $\cbr{T_j}$ are changed accordingly under the three noise conditions considered in this paper.
    
\end{enumerate}


With the above modifications to \optimize, we can show the following analogue of Lemma \ref{lem:optimize-g}. We only sketch its proof here, as it is very similar to the proof of Lemma \ref{lem:optimize-g} (see Appendix \ref{sec:optimize-g} above).
\begin{lemma}
Suppose $D_X$ is $(2, L, R, U, \beta)$-well behaved; in addition, $w^\star$, the Bayes-optimal halfspace, is $s$-sparse.
There exists a numerical constant $c > 0$ such that the following holds. \optimize, with the above modifications, with input initial vector $w_1$, target proximity $r \in (0,\frac14]$ such that $\| w_1 - w^\star \| \leq 4r$, bandwidth $b \leq \frac R 2$, number of iterations $T$, produces iterates $\cbr{u_t}_{t=1}^T$, such that
with probability $1- \delta r$, 
\[
\frac1T \sum_{t=1}^T \pot(u_t)
\leq 
c \rbr{ b
+ 
(b + \beta r) \cdot \rbr{\ln\frac{Td}{\delta r b R L}} (\sqrt{\frac{s \ln d + \ln\frac1{ \delta r}}{T}} + \frac{\ln\frac1{ \delta r}}{T}) }.
\]
\label{lem:optimize-g-sparse}
\end{lemma}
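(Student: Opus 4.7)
The plan is to adapt the proof of Lemma \ref{lem:optimize-g} in a near-mechanical way: the only step that requires substantive change is the online linear optimization regret bound, where we swap out $\ell_2$ online gradient descent for $\ell_p$-regularized online mirror descent with $p=\frac{\ln d}{\ln d-1}$ (dual exponent $q=\ln d$). All other pieces of the original proof — the $O(b)$ bound on $|\inner{-u_t}{g_t}|$, Claim \ref{claim:infty-norm} on $\sum_t\|g_t\|_\infty^2$, the sub-exponential martingale concentration for $\sum_t\inner{-w^\star}{g_t}$ via Lemma \ref{lem:proj-tail} and Lemma \ref{lem:mtg-subexp}, and the identity $\EE_{t-1}\inner{-w^\star}{g_t}=\pot(u_t)$ — carry over verbatim, since they depend only on the bandwidth, the well-behavedness of $D_X$, and the sampling rule (not on the update rule or on dimension).

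First I would verify the geometry of the modified constraint set $\Kcal=\{w:\|w-w_1\|\le 4r,\ \|w-w_1'\|_1\le 8r\sqrt{2s}\}$. Because $w^\star$ is $s$-sparse and $w_1'=\HT_s(w_1)$ is the best $s$-sparse approximation to $w_1$, $\|w_1'-w_1\|\le\|w^\star-w_1\|\le 4r$, so $\|w_1'-w^\star\|\le 8r$. Since $w^\star-w_1'$ is $2s$-sparse, Cauchy--Schwarz gives $\|w^\star-w_1'\|_1\le\sqrt{2s}\,\|w^\star-w_1'\|_2\le 8r\sqrt{2s}$; hence both $u_1=w_1'$ and $w^\star$ lie in $\Kcal$. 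In particular $\|u_t-w^\star\|\le 8r$ for all $t$, so $\|u_t\|\le 3$, which yields $|\inner{-u_t}{g_t}|\le 3b$ exactly as in the proof of Lemma \ref{lem:optimize-g}, and by Lemma \ref{lem:angle-l2} also $\|\hat u_t-w^\star\|\le 16r$, so Lemma \ref{lem:proj-tail} can be applied with the same parameters as before.

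Next I would invoke the standard OMD regret bound with regularizer $R(w)=\frac{1}{2(p-1)}\|w-u_1\|_p^2$, which is $1$-strongly convex w.r.t.\ $\|\cdot\|_p$. Using the sparsity fact that for any $2s$-sparse $v$, $\|v\|_p\le(2s)^{1/p-1/2}\|v\|_2\le\sqrt{2s}\,\|v\|_2$ (since $1/p-1/2=1/2-1/\ln d\le 1/2$), we obtain $\|w^\star-u_1\|_p^2\le 128\,s\,r^2$; combined with $1/(p-1)=\ln d-1\le \ln d$, this gives $R(w^\star)\le 64\,s\,r^2\ln d$. On the dual side, $\|g_t\|_q\le d^{1/q}\|g_t\|_\infty=e\,\|g_t\|_\infty$. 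The OMD regret bound therefore reads
\[
\sum_{t=1}^T\inner{g_t}{u_t-w^\star}\le \frac{64\,s\,r^2\ln d}{\alpha}+\frac{e^2\alpha}{2}\sum_{t=1}^T\|g_t\|_\infty^2.
\]
Using Claim \ref{claim:infty-norm} to bound $\sum_t\|g_t\|_\infty^2$ by $O(T\beta^2(\ln\tfrac{Td}{\delta rbRL})^2)$ with probability $\ge 1-\delta r/2$, and with the stated choice $\alpha=\tfrac{r}{\beta}\sqrt{s\ln d/T}/(\ln\tfrac{Td}{\delta rbRL})$, the two terms balance at $O\bigl(r\beta\sqrt{sT\ln d}\cdot\ln\tfrac{Td}{\delta rbRL}\bigr)$, which is the analogue of the $r\beta\sqrt{dT}\ln(\cdots)$ term in the original proof with $d$ replaced by $s\ln d$.

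Finally I would mirror the concluding calculation in Lemma \ref{lem:optimize-g}: rearrange to get $\sum_t\inner{-w^\star}{g_t}\le 3Tb+O\bigl(r\beta\sqrt{sT\ln d}\,\ln\tfrac{Td}{\delta rbRL}\bigr)$; apply Lemma \ref{lem:mtg-subexp} to pass from $\sum_t\inner{-w^\star}{g_t}$ to $\sum_t\EE_{t-1}\inner{-w^\star}{g_t}$ at cost $O((b+\beta r)\ln(\tfrac{1}{RbL})(\sqrt{T\ln\tfrac{1}{\delta r}}+\ln\tfrac{1}{\delta r}))$ (identical to the original, since the scale parameter $16\beta r(1+\ln\tfrac{2}{RbL})$ depends only on $\|w^\star-\hat u_t\|\le 16r$, not on dimension); identify $\EE_{t-1}\inner{-w^\star}{g_t}$ with $\pot(u_t)$; divide by $T$; and merge the two square roots via $\sqrt{s\ln d/T}+\sqrt{\ln(1/\delta r)/T}\le\sqrt{2(s\ln d+\ln(1/\delta r))/T}$. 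This yields the claimed bound. The only mild obstacle is careful bookkeeping of the $(p-1)^{-1}=\ln d-1$ strong-convexity factor and the exponent $(2s)^{1/p-1/2}$ as $p\to 1$; no new probabilistic or noise-model ingredient is required.
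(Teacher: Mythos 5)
Your proposal is correct and follows essentially the same route as the paper's proof: verify that $w_1'$ and $w^\star$ lie in the modified $\Kcal$ via the $2s$-sparsity of $w^\star-w_1'$, apply the standard OMD regret bound with the $\ell_p$ regularizer ($p=\frac{\ln d}{\ln d-1}$, $q=\ln d$, $\|g_t\|_q\le e\|g_t\|_\infty$), reuse Claim~\ref{claim:infty-norm} and Lemma~\ref{lem:mtg-subexp} unchanged, and identify $\EE_{t-1}\inner{-w^\star}{g_t}=\pot(u_t)$. The only cosmetic difference is that you bound $\|w^\star-w_1'\|_p$ via the sparse $\ell_p$--$\ell_2$ comparison $(2s)^{1/p-1/2}$ while the paper passes through $\|w^\star-w_1'\|_1\le\sqrt{2s}\,\|w^\star-w_1'\|_2$; both give the same $O(s r^2\ln d)$ regularizer term.
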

\begin{proof}[Proof sketch]
Given the premise that $\| w_1 - w^\star \| \leq 4r$, as $w_1'$ is the best $s$-sparse $\ell_2$-approximation to $w^\star$, we have $\| w_1 - w_1'\| \leq 4r$, and therefore $w_1' \in \Kcal$. 
        
In addition, by triangle inequality, $\| w^\star - w_1' \| \leq \| w_1 - w_1'\| + \| w^\star - w_1\| \leq 8r$; moreover, as both $w^\star$ and  $w_1'$ are $s$-sparse, $w^\star - w_1'$ is $2s$-sparse. The above two facts together imply that $\| w^\star - w_1' \|_1 \leq \sqrt{2s} \| w^\star - w_1' \| \leq 8r \cdot \sqrt{2s}$. This shows that $w^\star \in \Kcal$.

Denote by $q = \ln d$ the conjugate exponent of $p = \frac{\ln d}{\ln d - 1}$. By standard regret guarantees of online mirror descent \cite[e.g.][Chapter 11]{cesa2006prediction}, we have, 
\begin{equation}
    \sum_{t=1}^T \inner{-w^*}{g_t} \leq \sum_{t=1}^T \inner{-u_t}{g_t} + \frac{1}{2\alpha(p-1)} \|w^* - w_1' \|_p^2+\alpha \sum_{t=1}^T \|g_t\|_q^2.
\label{eqn:omd}
\end{equation}

We now bound the three terms on the right hand side respectively:
\begin{enumerate}
\item $\frac{1}{p-1}\| w^\star - w_1' \|_p^2 
= (\ln d -1) \| w^\star - w_1' \|_p^2
\leq \ln d \cdot \| w^\star - w_1' \|_1^2
\leq \ln d \cdot (\sqrt{2s} \| w^\star - w_1' \|)^2
\leq 2s \cdot \ln d \cdot (8 r)^2$.

\item $\abr{\inner{-u_t}{g_t}} 
    = \abr{\inner{u_t}{x_t}} 
    \leq 3 \abr{\inner{\hat{u}_t}{x_t}} \leq b$.
    This comes from that 
    $\| u_t \| \leq \| w^\star \| + \| u_t - w^\star \| \leq 1 + 8 r \leq 3$, and $x_t \sim D_{X \mid \hat{u}_t, b}$. Therefore, $\sum_{t=1}^T \inner{-u_t}{g_t} \leq 3 T \cdot b$.
    
\item For every $t$, $\|g_t\|_q 
\leq (d \|g_t\|_\infty^q)^\frac1q 
= e \|x_t\|_\infty 
\leq 3 \|x_t\|_\infty$; in addition, Claim~\ref{claim:infty-norm} implies that with probability at least $1-\delta r/2$, for all $t$,
$\sum_{t=1}^T \| x_t \|_\infty^2 \leq T \beta^2 (\ln \frac{e T d}{\delta r b R L} )^2$.
Therefore,
with probability at least $1-\delta r/2$, 
    \begin{equation*}
    \sum_{t=1}^T \|g_t\|_\infty^2 \leq c_1 T \beta^2 (\ln \frac{T d}{\delta r b R L} )^2,
    \end{equation*}
for some constant $c_1 > 0$.
\end{enumerate}

The above three items together imply that the right hand side is at most $3 T b + \frac{16 r^2 s \ln d}{\alpha} + c_1 \alpha T \beta^2 (\ln \frac{T d}{\delta r b R L} )^2$. 
With the choice of $\alpha = \frac{r}{\beta} \sqrt{\frac {s \ln d} { T}} / (\ln \frac{T d}{\delta r b R L} )$, we get that there exists some constant $c_2 > 0$, such that the above can be bounded by $3 T b + c_2 r \beta (\ln \frac{T d}{\delta b R L} ) \sqrt{T \cdot s \ln d } $.

In addition, Lemma~\ref{lem:mtg-subexp} gives that, with probability at least $1-\delta r/2$, 
\begin{equation} 
\sum_{t=1}^T \EE_{t-1} \inner{-w^\star}{g_t} \leq \sum_{t=1}^T\inner{-w^\star}{g_t} + 32 (b + 8 \beta r (1 + \ln\frac{2}{R b L})) (\sqrt{T\ln \frac{1}{\delta r}} + \ln \frac{1}{\delta r}).
\label{eqn:conc-benchmark-sparse}
\end{equation}

The lemma follows from plugging the above two bounds into Equation \eqref{eqn:omd}, applying the union bound and observing that $\EE_{t-1} \inner{-w^\star}{g_t} = \pot(w_t)$.
\end{proof}

With Lemma \ref{lem:optimize-g-sparse}, it is straightforward to show an analogue of Lemma \ref{lem:optimize-main}, and therefore an analogue of Theorem \ref{thm:main}, presented here for concreteness:
\begin{theorem}
Suppose $D$ is $(2, L, R, U, \beta)$-well behaved and satisfies one of the three noise conditions; in addition, $w^\star$, the Bayes-optimal halfspace, is $s$-sparse. 
With the settings of $\cbr{b_j}, \cbr{T_j}$, and $\epsilon_0$ under the respective noise conditions, with probability $1-\delta$, Algorithm~\ref{alg:main}, with the above modifications outputs a halfspace $\tilde{v}$, such that $\err(h_{\tilde{v}}, D) - \err(h_{w^\star}, D) \leq \epsilon$. In addition, its total number of label queries is at most:
\begin{enumerate}
    \item $\tilde{O}\rbr{ \frac{s}{(1-2\eta)^2} \polylog(d, \frac1\epsilon)}$, if $D$ satisfies $\eta$-Massart noise;
    \item $\tilde{O}\rbr{ s \polylog(d) \cdot \rbr{1 + A^{\frac{2-2\alpha}{\alpha(2\alpha-1)}} + (\frac{A}{\epsilon})^{\frac{2-2\alpha}{2\alpha-1}}} }$, if $D$ satisfies $(A, \alpha)$-Tsybakov noise with $\alpha \in (\frac12, 1]$;
    \item $\tilde{O}\rbr{ s \polylog(d) \cdot \rbr{ 1 + (\frac{1}{B})^{\frac 2 \alpha} + \frac{1}{B^2} (\frac 1 \epsilon)^{\frac{2-2\alpha}{\alpha}}} }$, if $D$ satisfies $(B,\alpha)$-geometric Tsybakov noise.
\end{enumerate}
\label{thm:main-sparse}
\end{theorem}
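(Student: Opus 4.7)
The plan is to mirror the proof of Theorem~\ref{thm:main} (Appendix~\ref{sec:proof-main}) almost verbatim, substituting the dense-regret ingredients with their sparse counterparts. Nothing in the distribution-level machinery needs to change: the potential function $\pot$, its lower bound in terms of $\tilde\theta(\cdot,w^\star)$ given in Lemma~\ref{lem:pot-lb}, the aggregation lemma (Lemma~\ref{lem:agg-guarantee}), and the passage from $\ell_2$-closeness to excess error through Lemmas~\ref{lem:prob-angle} and~\ref{lem:angle-l2} are all purely geometric/distributional, so they apply unchanged with $u_t$ in place of $w_t$ and $\hat u_t$ in place of $\hat w_t$. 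The only places where the ambient dimension enters the bounds are (i) the per-round regret analysis of \optimize\ and (ii) the $T_j$ schedule that is set to cancel that regret.

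First, I would promote Lemma~\ref{lem:optimize-g-sparse} to the analogue of Lemma~\ref{lem:optimize-main-avg-only}. Concretely, with the modified definitions of $T_{\MNC},T_{\TNC},T_{\GTNC}$ (where $d$ is replaced by $s\ln d$), the three conditions $b\le rR/8$, $b\le O(f_b(r/2))$, and $T\ge\tilde\Omega(s\ln d\cdot (\ln\frac1{\delta r})^3\cdot((b+\beta r)/f_b(r/2))^2)$ still hold; feeding these into Lemma~\ref{lem:optimize-g-sparse} gives $\frac1T\sum_t\pot(u_t)\le\frac1{32}f_b(r/2)$ with probability $1-\delta r$, so Lemma~\ref{lem:agg-guarantee} immediately yields the sparse per-round refinement: running \optimize\ in $\avg$ mode on an input $w_1$ with $\|w_1-w^\star\|\le 4r$ (with $r\le 1/16$) produces $\tilde w$ with $\|\tilde w-w^\star\|\le r$ with probability $1-\delta r$, and in $\rnd$ mode produces $\tilde w$ with $\|\tilde w-w^\star\|\le r$ with probability at least $1/5$.

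Second, I would repeat the argument of Lemma~\ref{lem:initialize} verbatim with these sparse per-round guarantees: the same boosting over $N=\lceil 10\ln(4/\delta)\rceil$ trials plus empirical-risk selection on $O(\epsilon_0^{-2}\ln(N/\delta))$ samples produces a unit vector $\hat u_0$ with $\|\hat u_0-w^\star\|\le 1/4$ with probability $1-\delta/2$, and the total label count is exactly the quantity in Lemma~\ref{lem:initialize} with $d$ replaced by $s\ln d$. The induction in the proof of Theorem~\ref{thm:main} then goes through unchanged: conditional on $v_j$ being within $4^{-j}$ of $w^\star$, one invocation of the sparse \optimize\ tightens this to $4^{-(j+1)}$ with probability $1-\delta\cdot 4^{-(j+1)}$, and summing the failure probabilities gives overall success probability $1-\delta$. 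Summing the label costs $\sum_j T_j$ under each noise condition and absorbing the initialization cost produces the three bounds claimed, with $d$ replaced by $s\,\polylog(d)$.

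The main obstacle — and the only genuinely new step — is checking that the modified feasible set $\Kcal=\{w:\|w-w_1\|\le 4r,\ \|w-w_1'\|_1\le 8r\sqrt{2s}\}$ still contains $w^\star$, so that the regret guarantee of online mirror descent actually bounds the relevant comparator term. This is exactly the argument at the start of the proof sketch of Lemma~\ref{lem:optimize-g-sparse}: since $w_1'=\HT_s(w_1)$ is the best $s$-sparse $\ell_2$-approximation of $w_1$, the $s$-sparsity of $w^\star$ gives $\|w_1-w_1'\|\le\|w_1-w^\star\|\le 4r$, hence $\|w^\star-w_1'\|\le 8r$; combined with the $2s$-sparsity of $w^\star-w_1'$, this yields $\|w^\star-w_1'\|_1\le\sqrt{2s}\cdot 8r$, placing $w^\star\in\Kcal$. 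Once this feasibility is in hand, the $p=\ln d/(\ln d-1)$ regularizer converts the $\ell_\infty$ tail bound of Claim~\ref{claim:infty-norm} (which is dimension-free up to a $\log d$ factor) into a regret proportional to $r\beta\sqrt{T\cdot s\ln d}$ rather than $r\beta\sqrt{Td}$, which is precisely the replacement $d\mapsto s\ln d$ that propagates into every label complexity bound in the statement.
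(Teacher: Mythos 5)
Your proposal is correct and follows essentially the same route as the paper, which itself omits the proof on the grounds that it is a verbatim copy of the proof of Theorem~\ref{thm:main} with $d$ replaced by $s\ln d$, once Lemma~\ref{lem:optimize-g-sparse} is in place. You correctly isolate the one genuinely new ingredient — feasibility of $w^\star$ in the modified constraint set $\Kcal$ via the hard-thresholding and $2s$-sparsity argument — which is exactly the step the paper highlights in its proof sketch of Lemma~\ref{lem:optimize-g-sparse}.
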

The proof of the above theorem is omitted, as it is almost a verbatim copy of the proof of Theorem \ref{thm:main}, with $d$ replaced by $s \ln d$.

\section{Auxiliary lemmas}
\label{sec:auxiliary}

We first provide a simple lemma showing that if a distribution is well-behaved and satisfies geometric Tsybakov noise condition, then it also approximately satisfies Tsybakov noise condition (i.e. it satisfies an analogue of the Tsybakov noise condition with extra log factors).

\begin{lemma}
    Suppose $D_X$ is $(2, L, R, U, \beta)$-well behaved and $D$ satisfies $(B, \alpha)$-geometric Tsybakov noise condition. Then for all $t \in [0,\frac12)$ and $\gamma > 0$,
    \[
    \PP\rbr{ \frac12 - \eta(x) \leq t } \leq 4 U \beta \rbr{ \frac t B }^{\frac \alpha {1-\alpha}} \ln\frac e \gamma +  \gamma.
    \]
    Furthermore,
    \[
    \PP\rbr{ \frac12 - \eta(x) \leq t } \leq 
    4 U \beta \rbr{ \frac t B }^{\frac \alpha {1-\alpha}} \ln \rbr{ \frac{2}{U \beta (\frac t B)^{\frac \alpha {1-\alpha}}} }.
    \]
    \label{lem:geomtnc-tnc}
\end{lemma}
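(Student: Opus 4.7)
\textbf{Proof plan for Lemma~\ref{lem:geomtnc-tnc}.}

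The plan is to first translate the event $\{\frac12-\eta(x)\le t\}$ into a geometric event on $\inner{w^\star}{x}$ using the $(B,\alpha)$-geometric Tsybakov condition, and then bound the probability of that geometric event using well-behavedness. Concretely, since $t<\frac12$, the inequality $\min(\frac12, B\abr{\inner{w^\star}{x}}^{\frac{1-\alpha}{\alpha}})\le t$ forces $B\abr{\inner{w^\star}{x}}^{\frac{1-\alpha}{\alpha}}\le t$, i.e.\ $\abr{\inner{w^\star}{x}}\le s$ where $s:=(t/B)^{\frac{\alpha}{1-\alpha}}$. Thus it suffices to show $\PP(\abr{\inner{w^\star}{x}}\le s)\le 4U\beta s\ln(e/\gamma)+\gamma$ for every $\gamma>0$.

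To bound $\PP(\abr{\inner{w^\star}{x}}\le s)$, I would pick any unit vector $v$ orthogonal to $w^\star$ and set $V=\spn(w^\star,v)$, so that $(\inner{w^\star}{x},\inner{v}{x})$ is exactly the 2D projection $x_V$ whose density $p_V$ satisfies $p_V\le U$ pointwise. For any truncation level $M>0$, split
\[
\PP(\abr{\inner{w^\star}{x}}\le s)\le \PP(\abr{\inner{w^\star}{x}}\le s,\ \abr{\inner{v}{x}}\le M)+\PP(\abr{\inner{v}{x}}> M).
\]
The first term is at most $U\cdot(2s)(2M)=4UsM$ by integrating the density bound over the axis-aligned rectangle of area $4sM$; the second term is at most $\exp(1-M/\beta)$ by the sub-exponential tail part of $(2,L,R,U,\beta)$-well-behavedness. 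Choosing $M=\beta\ln(e/\gamma)$ so that $\exp(1-M/\beta)=\gamma$, I get $\PP(\abr{\inner{w^\star}{x}}\le s)\le 4U\beta s\ln(e/\gamma)+\gamma$, which is exactly the first claim after substituting $s=(t/B)^{\alpha/(1-\alpha)}$.

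For the second claim, I would optimize $\gamma$. Let $a:=U\beta(t/B)^{\alpha/(1-\alpha)}$. If $a\ge \frac12$ the RHS is trivially $\ge 1$ (or the bound is vacuous), so assume $a<\frac12$. Plugging $\gamma=4a$ into the first inequality gives
\[
\PP(\tfrac12-\eta(x)\le t)\le 4a\ln(e/(4a))+4a = 4a\ln(e^2/(4a)).
\]
Since $e^2/4<2$, we have $\ln(e^2/(4a))<\ln(2/a)$ for every $a>0$, which yields the claimed bound $4a\ln(2/a)$. The only mildly delicate step is this numerical check $e^2/4<2$ (equivalently $e<2\sqrt2$), which makes the $4a$ additive slack fit inside the log; everything else is routine.
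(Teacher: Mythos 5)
Your proof is correct and follows essentially the same route as the paper: reduce the event $\{\frac12-\eta(x)\le t\}$ to the band event $\{\abr{\inner{w^\star}{x}}\le (t/B)^{\frac{\alpha}{1-\alpha}}\}$ and bound the latter via the density upper bound on a $2$-dimensional projection plus the sub-exponential tail; the paper simply delegates this second step to its Lemma~\ref{lem:1d-prob-ub}, which your truncation-at-$M=\beta\ln\frac{e}{\gamma}$ argument and choice $\gamma=4U\beta(t/B)^{\frac{\alpha}{1-\alpha}}$ reproduce verbatim (including the $e^2/4<2$ check). Your explicit handling of the $\min$ in the geometric Tsybakov condition and the case split on $a$ are slightly more careful than the paper's own treatment, which implicitly assumes the relevant quantities are small.
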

    \begin{proof}
    We have
    $\PP(\frac12 - \eta(x) \leq t) \leq \PP( B |\inner{w^\star}{x}|^{\frac{1-\alpha}{\alpha}} \leq t ) = \PP( |\inner{w^\star}{x}| \leq (\frac{t}{B})^{\frac{\alpha}{1-\alpha}} )$; applying Lemma~\ref{lem:1d-prob-ub} below completes the proof. 
    \end{proof}

The following lemma provides conversion from excess error guarantees to guarantees on the disagreement probability with the optimal classifier $w^\star$, under the three noise conditions.

\begin{lemma}
\label{lem:excess-err}
Suppose $D_X$ is $(2, L, R, U, \beta)$-well behaved. We have the following:
\begin{enumerate}
    \item If $D$ satisfies $\eta$-Massart noise, then for any $w \in \RR^d$, $\err(h_w, D) - \err(h_{w^\star}, D) \geq (1-2\eta) \PP_{x \sim D_X}(h_w(x) \neq h_{w^\star}(x))$. 
    \label{item:ex-massart}
    \item If $D$ satisfies $(A,\alpha)$-Tsybakov noise, then for any $w \in \RR^d$, 
    $\err(h_w, D) - \err(h_{w^\star}, D) \geq \frac{1}{(2A)^{\frac{1-\alpha}{\alpha}}} \PP(h(x) \neq h_{w^\star}(x))^{\frac{1}{\alpha}}$.
    \label{item:ex-tsybakov}
    \item If $D$ is $(2, L, R, U, \beta)$-well behaved and satisfies $(B, \alpha)$-geometric Tsybakov noise, then for any $w \in \RR^d$, $\err(h_w, D) - \err(h_{w^\star}, D) \geq B (\frac{\PP(h_w(x) \neq h_{w^\star}(x))}{3})^{\frac1\alpha} \cdot (\frac{1}{12 U \beta \ln\frac{9}{\PP(h_w(x) \neq h_{w^\star}(x))}})^{\frac{1-\alpha}{\alpha}}$.
    \label{item:ex-geo-tsybakov}
    
\end{enumerate}
\end{lemma}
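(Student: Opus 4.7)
The starting point is the standard pointwise excess-risk identity, which holds because $h_{w^\star}$ is Bayes-optimal under each of the three noise models (so $\eta(x) \leq \tfrac12$ for all $x$):
\[
\err(h_w, D) - \err(h_{w^\star}, D) \;=\; \EE_{x \sim D_X}\bigl[(1 - 2\eta(x))\,\ind(h_w(x) \neq h_{w^\star}(x))\bigr].
\]
Write $\Delta := \cbr{x : h_w(x) \neq h_{w^\star}(x)}$ and $p := \PP_{D_X}(\Delta)$. All three items follow by lower-bounding the integrand $1 - 2\eta(x)$ on a suitably large subset of $\Delta$.

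Item~\ref{item:ex-massart} is immediate: under $\eta$-Massart noise, $1 - 2\eta(x) \geq 1 - 2\eta$ holds pointwise, and this factor pulls out of the expectation.

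For item~\ref{item:ex-tsybakov}, I use a single-threshold argument. For any $t \in [0, \tfrac12)$,
\[
\err(h_w, D) - \err(h_{w^\star}, D) \;\geq\; 2t \cdot \PP_{D_X}\bigl(\tfrac12 - \eta(x) \geq t,\; x \in \Delta\bigr) \;\geq\; 2t\bigl(p - A t^{\alpha/(1-\alpha)}\bigr),
\]
where the last step uses the Tsybakov tail bound to discard the low-margin region. Choosing $t$ so that $A t^{\alpha/(1-\alpha)} = p/2$, equivalently $t = \bigl(p/(2A)\bigr)^{(1-\alpha)/\alpha}$, and plugging back in yields $p \cdot t = p^{1/\alpha}/(2A)^{(1-\alpha)/\alpha}$, as claimed.

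Item~\ref{item:ex-geo-tsybakov} applies the same idea, but thresholds on the geometric quantity $|\inner{w^\star}{x}|$ rather than on $\tfrac12 - \eta(x)$ directly. The geometric Tsybakov condition gives $1 - 2\eta(x) \geq \min\bigl(1, 2B |\inner{w^\star}{x}|^{(1-\alpha)/\alpha}\bigr)$, so for any $\tau > 0$,
\[
\err(h_w, D) - \err(h_{w^\star}, D) \;\geq\; 2B\tau^{(1-\alpha)/\alpha}\cdot\bigl(p - \PP_{D_X}(|\inner{w^\star}{x}|\leq \tau)\bigr).
\]
Because $D_X$ is well-behaved, Lemma~\ref{lem:1d-prob-ub} bounds the band probability $\PP_{D_X}(|\inner{w^\star}{x}|\leq \tau)$ by something of the order $U \beta \tau \log(1/(U\beta\tau))$. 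Choosing $\tau$ so that this bound is at most $2p/3$ leaves at least $p/3$ of the mass of $\Delta$ outside the band, and forces $\tau \sim p / \bigl(U\beta \log(1/p)\bigr)$; substituting produces the stated bound. The only real obstacle is tracking logarithmic factors and numerical constants carefully so that the exact $\tfrac{1}{12 U\beta \ln(9/p)}$ expression and the $p/3$ inside the parenthesis come out matching the statement -- bookkeeping rather than a conceptual difficulty.
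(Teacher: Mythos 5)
Your proposal is correct and takes essentially the same route as the paper: item~\ref{item:ex-massart} is the pointwise bound, item~\ref{item:ex-tsybakov} is the same threshold-then-optimize argument on the noise margin with the same choice of $t$, and item~\ref{item:ex-geo-tsybakov} differs only cosmetically in that you threshold on $\abr{\inner{w^\star}{x}}$ and invoke Lemma~\ref{lem:1d-prob-ub} directly, which is precisely what the paper's intermediate Lemma~\ref{lem:geomtnc-tnc} does internally before being applied with $\gamma = \PP(x \in \disag)/3$. The remaining constant and logarithmic bookkeeping you defer matches the paper's computation (with slack), so nothing substantive is missing.
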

\begin{proof}
We prove each item respectively. In subsequent derivations, denote by $\disag = \cbr{x \in \RR^d: h_w(x) \neq h_{w^\star}(x)}$ the region of disagreement between $w$ and $w^\star$.

\begin{enumerate}
    \item It is well-known that $\err(h_w, D) - \err(h_{w^\star}, D) = \EE\sbr{ \ind\rbr{x \in \disag} (1-2\eta(x)) }$. As $\eta(x) \leq \eta$ for all $x$, we have that the right hand side is at least $(1-2\eta) \PP(x \in \disag)$, which proves the first item. 
    
    \item Similar to the last item, we have $\err(h_w, D) - \err(h_{w^\star}, D) = \EE\sbr{ \ind\rbr{x \in \disag} (1-2\eta(x)) }$. The right hand side can be lower bounded by:
    \begin{align*}
    \EE\sbr{ \ind\rbr{x \in \disag} (1-2\eta(x)) }
    \geq & 
    t \cdot \EE\sbr{ \ind\rbr{x \in \disag \wedge (1-2\eta(x)) \geq t}  } \\
    \geq & 
    t\cdot \rbr{ \PP\rbr{x \in \disag} - \PP\rbr{(1-2\eta(x) \leq t} } \\
    \geq &
    t \cdot \rbr{ \PP\rbr{x \in \disag} - A t^{\frac{\alpha}{1-\alpha}} } 
    \end{align*}
    As the above holds for any $t$, we choose $t = (\frac{\PP\rbr{x \in \disag}}{2A})^{\frac{1-\alpha}{\alpha}}$, which gives the second item. 
    

    \item Similar to the last item, we have $\EE\sbr{ \ind\rbr{x \in \disag} (1-2\eta(x)) }
    \geq 
    t \cdot \EE\sbr{ \ind\rbr{x \in \disag \wedge (1-2\eta(x)) \geq t}  }$ for all $t$.
    
    Therefore, we have that for all $t$ and $\gamma$,
    \begin{align*}
        \err(h_w, D) - \err(h_{w^\star}, D) 
        \geq & t \cdot \EE\sbr{ \ind\rbr{x \in \disag \wedge (1-2\eta(x)) \geq t}  } \\
        \geq & t \cdot ( \PP\rbr{x \in \disag} - \PP\rbr{1-2\eta(x) \leq t} ) \\
        \geq & t \cdot (\PP\rbr{x \in \disag} - 4 U \beta (\frac t B)^{\frac \alpha {1-\alpha}} \ln\frac e \gamma - \gamma)
    \end{align*}
    
   where the first two inequalities follow from the reasoning same as the previous item; the third inequality is from Lemma~\ref{lem:geomtnc-tnc}. Now, choosing $\gamma = \frac{\PP(x \in \disag)}{3}$, $t = B (\frac{\PP(x \in \disag)}{ 12 U \beta \ln\frac{9}{\PP(x \in \disag)}} )^{\frac{1-\alpha}{\alpha}}$), we get that the right hand side is at least 
   $B (\frac{\PP(x \in \disag)}{3})^{\frac1\alpha} \cdot (\frac{1}{12 U \beta \ln\frac{9}{\PP(x \in \disag)}})^{\frac{1-\alpha}{\alpha}}$, which gives the third item.
\end{enumerate}
The lemma follows.
\end{proof}

The next three lemmas provide upper and lower bounds of the probability mass of some special regions, under the assumption that $D_X$ is well-behaved. 

\begin{lemma}
If $D_X$ is $(2, L, R, U, \beta)$-well behaved, then
for any unit vector $w$ and $\gamma > 0$, we have
\[
\PP_{x \sim D_X} (  \abr{\inner{w}{x} }\leq b )
\leq 
4 b U \beta \ln \frac{e}{\gamma} + \gamma.
\]
Furthermore,
\[
\PP_{x \sim D_X} (  \abr{\inner{w}{x} }\leq b ) \leq 4 b U \beta \cdot \ln\rbr{\frac{2}{b U \beta}}. \]
Additionally, if $b \leq \frac R 2$, we have
\[
\PP_{x \sim D_X} (  \abr{\inner{w}{x} }\leq b ) 
\geq b R L.
\]
\label{lem:1d-prob-ub}
\end{lemma}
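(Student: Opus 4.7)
The plan is to fix the unit vector $w$, pick any unit vector $v \perp w$, and work inside the two-dimensional subspace $V = \spn(w,v)$. By well-behavedness, the joint density $p_V$ on $\mathbb{R}^2$ of $(\langle w,x\rangle, \langle v,x\rangle)$ is pointwise at most $U$ and is at least $L$ on the disk of radius $R$; and the sub-exponential tail bound applies to $\langle v,x\rangle$.

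For the upper bound, I would split on the event $\{|\langle v,x\rangle| \le T\}$ for a threshold $T > 0$ to be chosen:
\[
\PP(|\langle w,x\rangle| \le b) \le \int_{-b}^{b}\!\int_{-T}^{T} p_V(s,t)\,dt\,ds + \PP(|\langle v,x\rangle| > T) \le 4bUT + e^{1-T/\beta}.
\]
The first factor uses $p_V \le U$ on the rectangle $[-b,b]\times[-T,T]$, and the second is the well-behaved tail bound. Setting $T = \beta \ln(e/\gamma)$ makes $e^{1-T/\beta} = \gamma$, giving the first display $4bU\beta \ln(e/\gamma) + \gamma$.

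For the ``Furthermore'' version I would choose $T$ to (approximately) balance the two terms. Minimizing $4bUT + e^{1-T/\beta}$ over $T$ gives $T^\star = \beta \ln\bigl(e/(4bU\beta)\bigr)$, at which point both pieces are equal to $4bU\beta$, so the bound becomes $4bU\beta \bigl(\ln(e/(4bU\beta)) + 1\bigr) = 4bU\beta \ln\bigl(e^2/(4bU\beta)\bigr)$. Since $e^2 < 8$, i.e.\ $e^2/4 < 2$, this is at most $4bU\beta \ln(2/(bU\beta))$, as claimed. (In the regime $bU\beta > e/4$ where $T^\star \le 0$ the statement is anyway trivial or needs a mild sanity check, but the declared range is where the bound is informative.) The only subtle point is making sure the chosen $T$ is nonnegative, which holds exactly when $bU\beta \le e/4$; this is the main (very mild) obstacle.

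For the lower bound, assume $b \le R/2$ and consider the axis-aligned rectangle $\Omega = \{(s,t) \in \mathbb{R}^2 : |s|\le b,\ |t|\le R/2\}$ inside $V$. Every point $(s,t) \in \Omega$ has $s^2 + t^2 \le b^2 + R^2/4 \le R^2/2 \le R^2$, so $p_V \ge L$ throughout $\Omega$. Hence
\[
\PP(|\langle w,x\rangle| \le b) \ge \int_\Omega p_V(s,t)\, ds\, dt \ge L \cdot (2b)(R) = 2bRL \ge bRL.
\]
This part is entirely routine once the rectangle is chosen to fit inside the disk of radius $R$ where the density lower bound applies.
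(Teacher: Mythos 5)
Your proposal follows the paper's own route almost verbatim: project onto the two-dimensional subspace containing $w$, split on a threshold $T$ for the orthogonal coordinate, bound the rectangle $[-b,b]\times[-T,T]$ using the density cap $U$, invoke the sub-exponential tail, and for the ``Furthermore'' part choose $\gamma = 4bU\beta$ (which is exactly the optimizer you derive); the lower bound rectangle inside the radius-$R$ disk is also identical. One small slip to flag: at $T^\star = \beta\ln\bigl(e/(4bU\beta)\bigr)$ only the exponential piece equals $4bU\beta$ (the rectangle piece is $4bU\beta\ln\bigl(e/(4bU\beta)\bigr)$, not $4bU\beta$), but the total you display, $4bU\beta\bigl(\ln(e/(4bU\beta))+1\bigr)$, and the simplification via $e^2/4<2$ are correct, so this is just a phrasing error rather than a gap.
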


\begin{proof}
Without loss of generality, assume $w = (1,0,\ldots,0)$, then $\abr{\inner{w}{x} }\leq b$ is equivalent to $\abr{x_1} \leq b$.
For any $\gamma >0$, by the definition of well behaved distribution,  \[
\PP (\abr{x_1} \leq b) 
\leq \PP (\abr{x_1} \leq b, \abr{x_2} \leq \beta \ln \frac{e}{\gamma}) + \PP (\abr{x_2} \geq \beta \ln \frac{e}{\gamma})
\leq 4 b U \beta \ln \frac{e}{\gamma} + \gamma
\] 
Taking $\gamma = 4 b U \beta$, we have $\PP (\abr{x_1} \leq b) \leq 4 b U \beta (\ln\frac{e}{4 b U \beta} + 1) \leq  4 b U \beta \cdot  \rbr{\ln\frac{2}{b U \beta}}$.

For the last inequality, 
\[
\PP_{x \sim D_X} (  \abr{\inner{w}{x} }\leq b )
\geq 
\PP_{x \sim D_X}\rbr{ \abr{x_1} \leq b, \abr{x_2} \leq \frac{R}{2} }
\geq 
b \cdot R \cdot L. 
\qedhere
\]
\end{proof}

\begin{lemma}[\cite{diakonikolas2020learning}]
If $D$ is $(2, L, R, U, \beta)$-well behaved, then, we have for any $u, v$ in $\RR^d$, 
\begin{enumerate}
    \item $\PP_{x \sim D_X}(h_u(x) \neq h_v(x)) \geq L R^2 \theta(u,v)$.
    \label{item:prob-angle-lb}
    \item For all $\gamma > 0$, $\PP_{x \sim D_X}(h_u(x) \neq h_v(x)) \leq 4 U \beta^2 \rbr{\ln\frac{6}{\gamma}}^2 \theta(u,v) + \gamma$.
    \label{item:prob-angle-ub}
\end{enumerate} 
\label{lem:prob-angle}
\end{lemma}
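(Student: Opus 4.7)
The plan is to project to the $2$-dimensional subspace $V = \spn(u,v)$ and exploit the well-behavedness of the projected density $p_V$ directly. Since $h_u(x)$ and $h_v(x)$ depend only on $\inner{u}{x}$ and $\inner{v}{x}$, the disagreement event is determined by $x_V$. I would pick orthonormal coordinates on $V$ so that $u$ has coordinates $(1,0)$ and $v$ has coordinates $(\cos\theta, \sin\theta)$, where $\theta := \theta(u,v) \in [0,\pi]$; a short calculation in polar coordinates then shows that the disagreement set in $\RR^2$ is a double wedge $\cbr{r(\cos\phi, \sin\phi): r > 0,\ \phi \in W}$, where $W$ is a union of two opposite open arcs each of angular measure $\theta$ (total angular measure $2\theta$).

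For the lower bound, I would intersect this double wedge with the disc $\cbr{z \in \RR^2: \|z\| \leq R}$. The intersection consists of two circular sectors of radius $R$ and angle $\theta$, and hence has Euclidean area exactly $\theta R^2$. Well-behavedness gives $p_V(z) \geq L$ throughout this disc, so integrating $p_V$ over this intersection immediately yields $\PP_{D_X}(h_u(x) \neq h_v(x)) \geq L R^2 \theta$.

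For the upper bound, I would split the disagreement probability by the event $\cbr{\|x_V\| \leq M}$:
\[
\PP_{D_X}(h_u(x) \neq h_v(x)) \leq \PP_{D_X}\rbr{h_u(x) \neq h_v(x),\ \|x_V\| \leq M} + \PP_{D_X}(\|x_V\| > M).
\]
The first summand is supported on the double sector of radius $M$ and total angle $2\theta$, which has area $\theta M^2$; using $p_V \leq U$ pointwise, this summand is at most $U \theta M^2$. For the tail, applying the sub-exponential $1$D tail bound of Definition~\ref{def:well-behaved} to the two orthonormal basis vectors of $V$ and union-bounding gives $\PP_{D_X}(\|x_V\| > M) \leq 2\exp(1 - M/(\beta\sqrt{2}))$. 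Setting $M = \beta\sqrt{2}\,\ln(2e/\gamma)$ forces the tail term below $\gamma$, while the area term becomes $2 U \beta^2 (\ln(2e/\gamma))^2 \theta \leq 4 U \beta^2 (\ln(6/\gamma))^2 \theta$ after absorbing constants (using $2e < 6$).

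This is essentially a clean $2$D integration enabled by the well-behavedness assumption. The only things to watch are correctly identifying the disagreement region as a double wedge of total angle $2\theta$ (rather than $\theta$ or $\pi-\theta$) and choosing $M$ to balance the area and tail contributions. I do not anticipate any genuine obstacle beyond constant tracking.
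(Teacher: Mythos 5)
Your proof is correct and is a genuinely different (and more self-contained) route than the one in the paper. The paper handles both items by reduction to Claim~2.1 of \citet{diakonikolas2020learning}: item~\ref{item:prob-angle-lb} is cited verbatim, and item~\ref{item:prob-angle-ub} requires only a short reduction showing that $(2,L,R,U,\beta)$-well-behavedness implies the $(U,R,g(\cdot))$-boundedness notion used in that paper, with $g(\gamma)=2\beta(1+\ln\frac{2}{\gamma})\le 2\beta\ln\frac{6}{\gamma}$ (their tail estimate uses $\PP(\|z\|\ge t)\le\PP(|z_1|\ge t/2)+\PP(|z_2|\ge t/2)$, slightly looser than your $t/\sqrt{2}$). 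You instead reprove the geometric facts directly: the disagreement set projects to a double wedge of total angular measure $2\theta$, so intersecting with the disc of radius $R$ (area $\theta R^2$) and using $p_V\ge L$ gives the lower bound, while truncating at radius $M=\beta\sqrt{2}\ln(2e/\gamma)$, using $p_V\le U$ on the disc of radius $M$ (area $\theta M^2$), and pushing the tail below $\gamma$ via the sub-exponential bound gives $2U\beta^2(\ln(2e/\gamma))^2\theta+\gamma\le 4U\beta^2(\ln(6/\gamma))^2\theta+\gamma$. Your double-wedge calculation ($|A\triangle B|=2\theta$ where $A,B$ are the angular half-planes of $u,v$) is exactly right, and the constants all close. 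The one detail worth a sentence in a fully rigorous writeup is the degenerate case where $u,v$ are parallel, so $\spn(u,v)$ is one-dimensional: there you should either pick an arbitrary $2$-dimensional $V\supseteq\spn(u,v)$ or note that $\theta\in\{0,\pi\}$ and check both bounds directly ($\theta=0$ is trivial; for $\theta=\pi$ the lower bound $1\ge L\pi R^2$ follows from $\int_{\|z\|\le R}p_V\le 1$). Net: your argument buys a self-contained, readily verifiable proof at the cost of a bit more bookkeeping; the paper's buys brevity by leaning on prior work.
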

\begin{proof}
Item~\ref{item:prob-angle-lb} follows directly from Claim 2.1 of~\cite{diakonikolas2020learning}.

For item~\ref{item:prob-angle-ub}, the assumption on $D_X$ implies that for any $z$ which is a 2-dimensional projection of $x$, $\PP(\| z \|_2 \geq t) \leq \PP(|z_1| \geq \frac t 2) + \PP(|z_2 | \geq \frac t 2) \leq 2\exp(1 - \frac{t}{2\beta})$. This implies that $D_X$ is $(U, R, g(\cdot))$-bounded with $g(\gamma) = 2 \beta (1 + \ln\frac{2}{\gamma}) \leq 2 \beta \ln\frac{6}{\gamma}$ in the sense of~\cite[Definition 1.2]{diakonikolas2020learning}. Applying Claim 2.1 therein, we have that for all $\gamma > 0$,
\[ \PP_{x \sim D_X}(h_u(x) \neq h_v(x)) \leq 4 U \beta^2 \rbr{\ln\frac{6}{\gamma}}^2 \theta(u,v) + \gamma. 
\qedhere
\]
\end{proof}



\begin{lemma}
If $D$ is $(2, L, R, U, \beta)$-well behaved, then, we have for any unit vectors $u$ and $v$ such that $\| u - v \|_2 \leq \rho$, any $b \leq \frac{R}{2}$, and any $a > 0$, 
\begin{equation}
\PP_{x \sim D_{u,b}}( \abr{ \inner{v}{x} } \geq a )
\leq 
\exp\rbr{1 - \frac{a}{b + \beta \rho (1 + \ln\frac{1}{R b L})} }.
\label{eqn:proj-tail}
\end{equation}
\label{lem:proj-tail}
\end{lemma}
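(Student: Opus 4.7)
The plan is to reduce the tail bound under the conditional distribution $D_{u,b}$ to a tail bound under the unconditional $D_X$, paying a logarithmic penalty for the conditioning. Writing $c := \beta\rho(1 + \ln\tfrac{1}{RbL})$, the target right-hand side is $\exp(1 - a/(b+c))$. The key observation is that $\inner{v}{x} = \inner{u}{x} + \inner{v-u}{x}$, so on the event $|\inner{u}{x}| \leq b$ any violation $|\inner{v}{x}| \geq a$ forces $|\inner{v-u}{x}| \geq a - b$.

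First I would split into cases. If $a \leq b + c$, the claimed upper bound already exceeds $1$, so the inequality is trivial. So assume $a > b + c$; in particular $a > b$. Then I bound
\[
\PP_{D_{u,b}}(|\inner{v}{x}| \geq a) \;\leq\; \PP_{D_{u,b}}(|\inner{v-u}{x}| \geq a-b) \;\leq\; \frac{\PP_{D_X}(|\inner{v-u}{x}| \geq a-b)}{\PP_{D_X}(|\inner{u}{x}| \leq b)}.
\]
Since $\|v-u\| \leq \rho$, applying the well-behavedness tail bound to the unit vector $(v-u)/\|v-u\|$ yields a numerator bound of $\exp(1 - (a-b)/(\beta\rho))$; the assumption $b \leq R/2$ lets me invoke Lemma~\ref{lem:1d-prob-ub} for a denominator lower bound of $bRL$. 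Combining,
\[
\PP_{D_{u,b}}(|\inner{v}{x}| \geq a) \;\leq\; \exp\bigl(1 - (a-b)/(\beta\rho) + \ln(1/(bRL))\bigr).
\]

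It then remains to check the purely algebraic inequality that this exponent is at most $1 - a/(b+c)$. Using $\ln(1/(bRL)) = c/(\beta\rho) - 1$, the inequality simplifies to $(a - (b+c))/(b+c) \leq (a - (b+c))/(\beta\rho)$, which in the substantive case $a > b + c$ is equivalent to $b + c \geq \beta\rho$. This final inequality is the crux: it follows because well-behavedness forces $LR^2 \leq 1/\pi$ (integrate $p_V \geq L$ over the disk of radius $R$), so $bRL \leq R^2L/2 \leq 1/(2\pi) < 1$, hence $\ln(1/(bRL)) > 0$ and $c \geq \beta\rho$. The main obstacle is spotting the right calibration of the effective width $b + c$ in the target exponent — it is precisely tuned so the conditioning penalty $\ln(1/(bRL))$ gets absorbed into the tail scale — after which the rest reduces to bookkeeping with the well-behavedness axioms.
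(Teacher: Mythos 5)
Your proposal is correct and follows essentially the same route as the paper's proof: reduce the conditional probability to a ratio, lower-bound the denominator by $bRL$ via Lemma~\ref{lem:1d-prob-ub}, upper-bound the numerator by the sub-exponential tail applied to the direction of $v-u$ (the paper does the same after projecting to the span of $u,v$ and using $|v_2|\le\rho$), and then verify the exponent comparison in the nontrivial case $a > b + \beta\rho(1+\ln\frac{1}{RbL})$. A nice touch is that you make explicit the step the paper leaves as ``easily checked,'' including the observation that well-behavedness forces $bRL < 1$ so that $\beta\rho(1+\ln\frac{1}{RbL}) \ge \beta\rho$.
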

\begin{proof}
In subsequent proof, we only focus on the case when $a \geq b + \beta \rho (1 + \ln\frac{1}{R b L})$; otherwise the lemma is trivial, as the right hand side of Equation~\eqref{eqn:proj-tail} is at least 1.

Without loss of generality, assume $u = (1,0,\ldots,0)$ and $v = (v_1, v_2, 0, \ldots, 0)$, where $v_1^2 + v_2^2 = 1$, and
$\| u - v \| = \sqrt{(v_1 - 1)^2 + v_2^2} \leq \rho$. The latter implies that $\abr{v_2} \leq \rho$.

By the definition of $D_{u,b}$,
\begin{equation}
\PP_{x \sim D_{u,b}}( \abr{ \inner{v}{x} } \geq a )
=
\frac{\PP_{x \sim D}( \abr{ v_1 x_1 + v_2 x_2  } > a, | x_1 | \leq b ) }{\PP_{x \sim D}( | x_1 | \leq b )}.
\label{eqn:cond-prob-expand}
\end{equation}

We bound the numerator and denominator respectively. For the denominator, by the assumption that $b \leq \frac R 2$, applying Lemma \ref{lem:1d-prob-ub}, we have
\[
\PP_{x \sim D_X}( | x_1 | \leq b )
\geq 
b R L.
\]

For the numerator, we upper bound it as follows:
\begin{align*}
\PP_{x \sim D}( \abr{ v_1 x_1 + v_2 x_2  } > a, | x_1 | \leq b )
& \leq 
\PP_{x \sim D}( \abr{ v_2 x_2  } > a - b ) \\
& \leq 
\PP_{x \sim D}( \abr{ x_2  } > \frac{a - b}{\rho} ) \\
& \leq 
\exp\rbr{ 1 - \frac{a - b}{ \rho \beta } }.
\end{align*}

Continuing Equation~\eqref{eqn:cond-prob-expand}, we get
\begin{equation}
\PP_{x \sim D_{u,b}}( \abr{ \inner{v}{x} } \geq a )
\leq 
\frac{1}{b R L} \exp\rbr{ 1 - \frac{a - b}{ \rho \beta } }
=
\exp\rbr{ 1 - \frac{a - b}{ \rho \beta } + \ln\frac{1}{bRL} }
.
\label{eqn:continue-expand}
\end{equation}
Now it can be easily checked that when 
$a \geq b + \beta \rho (1 + \ln\frac{1}{R b L})$, 
$1 - \frac{a - b}{ \rho \beta } + \ln\frac{1}{bRL} \leq 1 - \frac{a}{b + \beta \rho (1 + \ln\frac{1}{R b L})}$. Exponentiating both sides and combining this with Equation~\eqref{eqn:continue-expand}, we get the lemma.
\end{proof}

The following elementary lemma is useful for conversion between angle-based proximity and $\ell_2$-distance-based proximity.

\begin{lemma}[e.g.~\cite{zhang2020efficient}, Lemmas 27 and 28]
Suppose we are given two vectors $w$ and $u$, where $u$ is a unit vector. Then:
\begin{enumerate}
    \item $\| \hat{w} - u \| \leq 2 \| w - u \|$;
    \label{item:l2-normalize}
    \item $\theta(w, u) \leq \pi \| w - u \|$;
    \label{item:angle-l2}
    \item if in addition, $w$ is a unit vector, then $\|w - u \| \leq \theta(w, u)$.
    \label{item:unit-l2-angle}
\end{enumerate}
\label{lem:angle-l2}
\end{lemma}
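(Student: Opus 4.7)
The plan is to establish the three items separately via elementary triangle inequalities combined with the standard half-angle formula for the Euclidean distance between unit vectors. Before starting, I would dispose of the only non-trivial edge case, namely $w = \vecz$, using the paper's convention $\hat{w} = (1,0,\ldots,0)$: in that case $\|w - u\| = \|u\| = 1$, so item 1 follows from $\|\hat{w} - u\| \leq \|\hat{w}\| + \|u\| = 2 = 2\|w - u\|$, and item 2 follows from $\theta(w,u) \leq \pi = \pi\|w - u\|$. Henceforth I may assume $w \neq \vecz$.

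For item 1, I would apply the triangle inequality $\|\hat{w} - u\| \leq \|\hat{w} - w\| + \|w - u\|$ and note that $\|\hat{w} - w\| = |\|w\| - 1|$, since $\hat{w}$ and $w$ are positively proportional and $\hat{w}$ is a unit vector. The reverse triangle inequality gives $|\|w\| - 1| = |\|w\| - \|u\|| \leq \|w - u\|$, and summing the two bounds yields $\|\hat{w} - u\| \leq 2\|w - u\|$.

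For item 3, when both $w$ and $u$ are unit vectors, I would use the identity $\|w - u\|^2 = 2 - 2\cos\theta(w,u) = 4\sin^2(\theta(w,u)/2)$ together with the elementary bound $\sin(x) \leq x$ for $x \geq 0$ to conclude $\|w - u\| = 2\sin(\theta(w,u)/2) \leq \theta(w,u)$. For item 2, I would combine this identity with the complementary bound $\sin(x) \geq 2x/\pi$ on $[0,\pi/2]$ (noting $\theta(\hat{w},u)/2 \in [0,\pi/2]$) to obtain $\theta(\hat{w}, u) \leq (\pi/2)\|\hat{w} - u\|$; applying item 1 and the scale-invariance $\theta(w,u) = \theta(\hat{w}, u)$ then yields $\theta(w,u) \leq (\pi/2) \cdot 2\|w - u\| = \pi \|w - u\|$.

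The only step that requires any care is tracking the constant $\pi$ in item 2: it arises from composing the factor of $\pi/2$ in the sine bound with the factor of $2$ lost during the normalization step of item 1, and no single inequality gives this constant directly. Apart from this bookkeeping and the $\vecz$ convention, the proof is entirely routine, and I anticipate no real obstacle.
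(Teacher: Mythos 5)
Your proof is correct: the triangle-inequality argument for item 1 (via $\|\hat{w}-w\| = |\|w\|-1| \le \|w-u\|$), the chord--angle identity $\|w-u\| = 2\sin(\theta/2)$ with $\sin x \le x$ for item 3, and Jordan's inequality $\sin x \ge \frac{2x}{\pi}$ on $[0,\frac{\pi}{2}]$ composed with item 1 for item 2 all check out, and the constants ($2$ and $\pi$) come out exactly as stated; the handling of the $w=\vecz$ convention is also fine. Note that the paper itself gives no proof of this lemma---it is quoted from Lemmas 27 and 28 of \citet{zhang2020efficient}---so your self-contained elementary derivation is precisely the standard argument one would expect there, and there is nothing to flag.
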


Finally, the following basic martingale concentration inequality is used in Appendix~\ref{sec:optimize} to establish high-probability optimization guarantees of \optimize.

\begin{lemma}[e.g.~\cite{zhang2020efficient}, Lemma 36]
\label{lem:mtg-subexp}
Suppose $\cbr{Z_t}_{t=1}^T$ is a sequence of random variable adapted to the filtration $\cbr{\Fcal_t}_{t=1}^T$. Additionally, there exists some $C \geq 1$ and $\sigma > 0$, such that for every $Z_t$, and every $a > 0$, suppose $\PP(\abr{Z_t} > a \mid \Fcal_{t-1}) \leq C \exp\rbr{-\frac{a}{\sigma}}$. Then, with probability $1-\delta$,
\[
\abr{ \sum_{t=1}^T Z_t - \EE\sbr{ Z_t \mid \Fcal_{t-1} } }
\leq 
16 \sigma (\ln C + 1) \rbr{ \sqrt{T \ln\frac2\delta} + \ln\frac2\delta }.
\]
\end{lemma}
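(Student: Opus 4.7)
The plan is to prove this via the standard exponential-supermartingale (Bernstein-type) argument for sub-exponential martingale differences. Define $Y_t := Z_t - \EE[Z_t \mid \Fcal_{t-1}]$, so that $\{Y_t\}$ is a martingale difference sequence adapted to $\{\Fcal_t\}$ and the claim reduces to a two-sided high-probability bound on $\sum_{t=1}^T Y_t$.

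First I would convert the tail hypothesis into a single clean sub-exponential scale. Set $\sigma' := \sigma(1 + \ln C)$. A short computation using $C\,e^{-a/\sigma} = e^{\ln C - a/\sigma}$ together with the identity $1/\sigma - 1/\sigma' = \ln C / \sigma'$ shows that $\PP(|Z_t| > a \mid \Fcal_{t-1}) \leq e^{-a/\sigma'}$ whenever $a \geq \sigma'$; for $a < \sigma'$ the tail is trivially at most $1$. Integrating $\EE[|Z_t|^k \mid \Fcal_{t-1}] = \int_0^\infty k\,a^{k-1}\, \PP(|Z_t| > a \mid \Fcal_{t-1})\,da$ across the two regimes then yields $\EE[|Z_t|^k \mid \Fcal_{t-1}] \leq 2\,k!\,(\sigma')^k$, i.e., each $Z_t$ is conditionally sub-exponential with scale $\sigma'$, with the $C$-factor absorbed entirely into $\sigma'$.

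Next I would control the conditional MGF of $Y_t$ and run the standard exponential-supermartingale argument. Expanding $\EE[e^{\lambda Y_t} \mid \Fcal_{t-1}]$ as a Taylor series, using $\EE[Y_t \mid \Fcal_{t-1}] = 0$, and inserting the moment bound above gives, for all $|\lambda| \leq 1/(2\sigma')$,
\[
\EE[e^{\lambda Y_t} \mid \Fcal_{t-1}] \leq \exp\bigl(c_1 \lambda^2 (\sigma')^2\bigr),
\]
for an absolute constant $c_1$. Hence $M_t := \exp\bigl(\lambda \sum_{s \leq t} Y_s - c_1 \lambda^2 (\sigma')^2 t\bigr)$ is a supermartingale with $\EE[M_0] = 1$, and Markov's inequality applied to $M_T$ gives $\PP\bigl(\sum_t Y_t > x\bigr) \leq \exp(-\lambda x + c_1 \lambda^2 (\sigma')^2 T)$ for every admissible $\lambda$. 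Taking $\lambda = \min\bigl(x/(2 c_1 T (\sigma')^2),\, 1/(2\sigma')\bigr)$ produces the two-regime Bernstein-type bound $\PP(\sum_t Y_t > x) \leq \exp\bigl(-\min\bigl(x^2/(c_2 T (\sigma')^2),\, x/(c_2 \sigma')\bigr)\bigr)$; applying the same estimate to $-Y_t$ and union-bounding covers $|\sum_t Y_t|$. Setting the right-hand side equal to $\delta$ and solving in each regime gives $x \leq c_3 \sigma' \bigl(\sqrt{T \ln(2/\delta)} + \ln(2/\delta)\bigr)$, and substituting $\sigma' = \sigma(1 + \ln C)$ recovers the stated inequality.

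The main obstacle is purely numerical bookkeeping: one must track constants carefully enough through the sub-exponential reduction, the moment-to-MGF estimate, and the $\lambda$-optimization to guarantee that the final absolute prefactor is at most $16$. Everything conceptual is textbook; the one mildly nontrivial move is to absorb the $C \geq 1$ factor into the single effective scale $\sigma' = \sigma(1+\ln C)$ at the very start, so that the rest of the proof fits the standard single-parameter sub-exponential martingale Bernstein template without additional complications.
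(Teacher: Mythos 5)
Your proposal is correct, and since the paper does not prove this lemma itself---it is imported from \citet{zhang2020efficient} (Lemma 36)---there is no in-paper argument to compare against; your route is the standard sub-exponential martingale Bernstein argument, which is presumably what the cited proof does as well. The reduction to a single scale is right: with $\sigma' = \sigma(1+\ln C)$ one indeed has $C e^{-a/\sigma} \le e^{-a/\sigma'}$ precisely when $a \ge \sigma'$, and integrating the tail gives $\EE\sbr{|Z_t|^k \mid \Fcal_{t-1}} \le (\sigma')^k + k!(\sigma')^k \le 2k!(\sigma')^k$. The one small care point is that this bounds moments of $Z_t$, not of the centered $Y_t$; you must pass through $|\EE\sbr{Z_t \mid \Fcal_{t-1}}| \le 2\sigma'$ (e.g.\ via Minkowski), which costs only a constant factor on the scale and shrinks the admissible range to something like $|\lambda| \le 1/(4\sigma')$ rather than $1/(2\sigma')$---harmless, since you only use the range up to constants in the final optimization of $\lambda$. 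As for your one flagged concern, the prefactor: carrying your own steps through with these crude bounds already yields roughly $8\sigma(1+\ln C)\rbr{\sqrt{T\ln\frac2\delta} + \ln\frac2\delta}$ after the two-regime Chernoff optimization and the two-sided union bound at level $\delta/2$ each, so the stated constant $16$ is met with room to spare; the bookkeeping you deferred does go through, and there is no gap.
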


\bibliographystyle{plainnat}

\end{document}